\newcommand{\unravel}[1]{\textsc{Unravel}(\mathbf{#1})}
\newcommand{\minsum}{\textsc{MinSum}}
\newcommand{\minmax}{\textsc{MinMax}}
\newcommand{\rank}{\texttt{rank}}
\newcommand{\minsumbound}{\textsc{BoundedMinSum}}
\newcommand{\minmaxbound}{\textsc{BoundedMinMax}}
\newcommand{\I}{\mathcal{I}}
\newcommand{\N}{\mathcal{N}}
\newcommand{\Var}[1]{Var({#1})}
\newcommand{\Dom}{\mathcal{D}}
\newcommand{\cert}{\textbf{c}}
\newcommand{\C}[1]{\mathcal{C}({#1})}
\newcommand{\Crest}[2]{\mathcal{C}_{#1}({#2})}
\DeclareMathOperator*{\argmin}{arg\,min}
\newcommand{\blang}[2]{\textsc{Bool}[{#1}]_{#2}}
\newcommand{\liqlang}[2]{\textsc{Liquid}[{#1}]_{#2}}
\newcommand{\Blang}{\textsc{Bool}}
\newcommand{\Liqlang}{\textsc{Liquid}}
\newcommand{\Prof}{\boldsymbol{B}}
\newcommand{\citet}[1]{\citeauthor{#1}~\cite{#1}}
\newcommand{\rest}{\upharpoonright}
\newcommand{\lev}{\texttt{lev}}
\newcommand{\U}{\mathbf{U}}
\newcommand{\DU}{\mathbf{DU}}
\newcommand{\RU}{\mathbf{RU}}
\newcommand{\DRU}{\mathbf{DRU}}
\newtheorem{theorem}{Theorem}
\newtheorem{lemma}{Lemma}
\newtheorem{proposition}{Proposition}
\newtheorem{definition}{Definition}
\newtheorem{example}{Example}
\newtheorem{proof}{Proof}
\newtheorem{remark}{Remark}
\newcommand{\noterc}[1]{\todo[inline,color=green!40!white,caption=]{Rachael: #1}}
\newsavebox{\1}
  \newsavebox{\2}
    \newsavebox{\3}
      \newsavebox{\4}
        \newsavebox{\5}
          \newsavebox{\6}
\title{Unravelling multi-agent ranked delegations
\thanks{This paper revises and extends our previous work presented at IJCAI-2020 \cite{colley2020smart}, which was developed from ideas discussed at the Dagstuhl Seminar 19381 on Application-Oriented Computational Social Choice in September 2019. We are grateful for the feedback received by the anonymous reviewers of IJCAI-2020 and JAAMAS, as well as the audience of MPREF-2020 and the COMSOC video seminar. Some of the work in this paper was performed while the third author was affiliated with the Institute for Logic, Language and Computation (ILLC) at the University of Amsterdam.}
}
\author
{Rachael Colley,$^{1}$ Umberto Grandi,$^{1}$ Arianna Novaro$^{2}$\\
\normalsize{
$\{$rachael.colley, umberto.grandi$\}@$irit.fr, arianna.novaro@univ-paris1.fr}\\
\\
\normalsize{$^{1}$ Institut de Recherche en Informatique de Toulouse (IRIT), University of Toulouse}\\
\normalsize{$^{2}$ Centre d'Economie de la Sorbonne (CES), University of Paris 1 Panthéon-Sorbonne}\\
\\
}
\date{ }
\begin{document}

\maketitle

\begin{abstract}
We introduce a voting model with multi-agent ranked delegations. This model generalises liquid democracy in two aspects: first,  an agent's delegation can use the votes of multiple other agents to determine their own---for instance, an agent's vote may correspond to the majority outcome of the votes of a trusted group of agents; second, agents can submit a ranking over multiple delegations, so that a backup delegation can be used when their preferred delegations are involved in cycles. The main focus of this paper is the study of unravelling procedures that transform the delegation ballots received from the agents into a profile of direct votes, from which a winning alternative can then be determined by using a standard voting rule. 
We propose and study six such unravelling procedures, two based on optimisation and four using a greedy approach. 
We study both algorithmic and axiomatic properties, as well as related computational complexity problems of our unravelling procedures for different restrictions on the types of ballots that the agents can submit. 
\end{abstract}

\section{Introduction}

In delegative voting, an agent's vote (or their corresponding voting power) can be passed to another voter or candidate. \citet{dodgson1885principles} was the first to mention a delegative voting process, in the context of multi-winner elections, where a candidate could strategically delegate their excess votes to another candidate of their choosing.

In general, models of delegative democracy bridge the gap between direct and representative democracy, where in the former every member of a community has to vote on every issue that arises, whereas the latter allows elected representatives to decide on behalf of a community. 
On the one hand, direct democracy is arguably time-consuming and infeasible for large-scale voting, since voters must be informed on every issue to be able to vote on it. On the other hand, representative democracy tends to leave voters under-represented (although it exists in many forms). 
 Delegative democracy can thus balance these problems, since agents can engage either \emph{actively}, by voting directly, or \emph{passively}, by choosing a representative for any issue  \cite{ford2002delegative}. 

Since voters can choose to actively participate into the decision-making, models of delegative democracy can also be seen as examples of \emph{interactive democracy}, i.e., those voting systems that turn collective decisions into more engaging and responsive processes. In particular,  \citet{brill2018interactive} argues that the progression of interactive democracy can be done in conjunction with the advancements of technology: for instance, in \emph{e-democracy} the Internet is used to strengthen real-world and online democracies \cite{shapiro2018point}. 
There are thus arguments in support of delegative democracy (and of upholding it via computerised methods), yet it is unclear how this should be implemented. 

\emph{Proxy voting} and \emph{liquid democracy} are two instances of delegative democracy. 
Proxy voting allows voters to choose their own representative who votes on their behalf. Depending on the model, the representatives can be predetermined \cite{alger2006voting,mueller1972representative};  any voter can be a representative and any agent can vote directly on any issue \cite{Green-Armytage2015,miller1969program, tullock1992computerizing,tullock1967toward}; or proxy voting is limited only to certain elections \cite{lanphier1995model}. 
%
%
Liquid democracy allows agents to either vote directly on an issue or to delegate their vote to another trusted agent. 
Unlike in proxy voting, delegations are transitive: i.e., if you have delegated your vote to another agent, they are free to either vote directly or to delegate their own vote (as well as all other delegations they have received) to another agent \cite{blum2016liquid}. 
However, 
the transitivity of delegations can lead to an agent's vote being used in a way that the agent would not support; for example, if a vote has been passed through many delegations, the original agent may not agree with the final agent who votes on their behalf. 
Another issue arises in determining the outcome when some agents are stuck in a delegation cycle, i.e., when a delegating agent transitively receives their own delegation, as well as all of those in the cycle.    

In this paper we tackle these problems by introducing a model of delegative democracy where voters can give multi-agent ranked delegations, thus generalising liquid democracy in two aspects. Firstly, we allow agents to submit a ranking over multiple possible delegations they would support, in order to ensure that their vote can be determined (should a delegation cycle arise). Secondly, voters can use the votes of multiple delegates in order to determine their own vote: for instance, they may state that their vote should coincide with the majority decision of a group of trusted delegates. 
Agents are thus able (but not required) to express complex delegations. 

A natural question which arises when allowing for ranked  delegations is when and how should the ranked delegations be used to break cycles. In this paper we define and analyse six of what we call \emph{unravelling procedures}, which are used to find outcomes from delegations that could contain cycles. 

\subsection{Our contribution}
In Section~\ref{sec:smartvotingmodel} we introduce our model of multi-agent ranked delegations, which we call \emph{smart voting}, building upon the model introduced in previous work
\cite{colley2020smart}. Our \emph{smart ballots} give more expressivity to the agents than those from previous models of delegative democracy. Starting from these complex ballots, an \emph{unravelling procedure} returns a standard voting profile from which a collective decision is taken. We introduce six unravelling procedures: $\minsum$ minimises the preference levels globally used for the agents; $\minmax$ minimises the maximum preference level used in the unravelling; the remaining four procedures use a greedy approach to guarantee tractability. We then introduce two restricted languages for smart ballots on binary issues: $\Blang$ is our general language, where delegations are arbitrary Boolean functions expressed in complete DNF, while $\Liqlang$ allows for ranked single-agent delegation.  
 
 In Section~\ref{sec:compucomplex} we prove our main results: the decision problems needed to compute $\minsum$ and $\minmax$ on the $\Blang$ language are \textsc{NP}-complete, though they are polynomial for $\Liqlang$ ballots. Moreover, we prove that our four greedy unravelling procedures always terminate on valid smart ballots in polynomial time. 

In Section~\ref{sec:comparingunravelling} we compare our unravelling procedures, showing that all six can give outcomes which differ from each other and from the procedures by \citet{kotsialou2018breadth}. However, the four greedy procedures and $\minsum$ coincide on standard liquid democracy ballots. We also study the axioms of cast-participation and guru-participation, as well as a notion of Pareto dominance. We conclude in Section~\ref{sec:conc}.

\subsection{Related work}

In this section we present the related literature on delegative democracy, starting from the previous work that inspired our twofold generalisation of liquid democracy.
 
\paragraph{Multi-agent delegations.} Our first generalisation is to allow voters to express delegations involving many other agents.
In the same spirit, the model by \citet{degrave2014resolving} allows for an agent's delegation to be split among possibly many delegates: for example, a voter delegating to three agents could give half of their vote to one agent and a quarter to the other two delegates. \citet{AbramowitzM19} give a similar model, but for proxy voting, 
 where agents assign weights to a fixed set of representatives for each issue: agents can thus choose to spread their vote over many representatives.
 
  In pairwise liquid democracy on ordinal elections \cite{BrillTalmonIJCAI2018}, ballots are rankings of candidates and they can be completed by delegating a decision on distinct pairs of candidates to different delegates. 
\citet{golz2018fluidliquid} address the problem of a small number of agents holding a lot of power by studying the fluid mechanics of liquid democracy, trying to balance influence like liquid in a vessel.

\paragraph{Ranked delegations.}
The second generalisation is to allow for ranked delegations, to avoid cycles (as in our case) or to avoid the delegation to an abstaining agent.  
\citet{ford2002delegative} introduces the notion of \emph{delegation chains}, which we call \emph{ranked delegations}, where an agent submits an ordering of many delegates, from the most trusted to one who still represents the agent, but less than the delegates coming before. 


The model proposed by \citet{kotsialou2018breadth} includes ranked delegations that are used to avoid delegating to an abstaining agent and delegation cycles. They propose two procedures, breadth-first and depth-first, which find an outcome in a similar manner as we suggest; yet they do not allow delegations to abstaining agents. 
A similar approach has been proposed by \citet{BrillPresentation} in ongoing work.

 \citet{kahng2018liquid} suggest an alternative method of removing delegation cycles, by assigning a competency level to agents and forbidding them to delegate to someone with a lower competency level than themselves. \citet{boldi2011viscous} propose \emph{viscous democracy}, a model of liquid democracy that can be extended to include ranked delegations, where they make use of a dampening factor to ensure short chains of delegations. 
\citet{behrens2015finite} propose a model of ranked liquid democracy as well as seven desirable properties for a liquid democracy system, showing that no system can satisfy all of them.

As an alternative to extending liquid democracy with ranked delegations, \citet{EscoffierEtAlSAGT2019} introduce iterative delegations, to circumvent situations where outcomes cannot be determined. For example, an agent may want to change their vote if they are in a delegation cycle or if they do not approve of the \emph{guru} (i.e., the final receiver of the delegation) that votes on their behalf. However, stable states may not exist: \citet{escoffier2020iterative} study how the profile of preferences impacts the existence of a stable state, showing that some structures on profiles (such as single-peaked preferences) can ensure an equilibrium.



\paragraph{Other models of delegative democracy.}
Inspired by \citet{kahng2018liquid},
\citet{caragiannis2019contribution} study the interplay of liquid democracy and truth-tracking. Assuming that voting is about finding some ground-truth, a delegation to another agent implies that you believe they will be better than you at finding the truth. Finding optimal delegations (obtaining the highest probability of finding the ground-truth) is however not a tractable problem and it is also hard to approximate.

Analogously, \citet{cohensiusproxy} use proxy voting to approximate an underlying ground-truth, when voter participation would otherwise be low, and show via empirical data that the outcomes are more accurate than with full voter participation. 



\citet{christoff2017binary} give a model of liquid democracy on multiple binary issues connected by constraints that reflect consistent sets of opinions. In this extension of liquid democracy an agent's vote acquired via delegation must be consistent with the constraint and their votes on other issues. They also include an embedding of transitive delegative voting on binary issues into binary aggregation with abstentions. 
\citet{BGL19} give a game-theoretic analysis of delegation games via pure-Nash equilibria, where the agent's utility is the accuracy of their vote.  They complement this study with agent-based simulations that verify their theoretical best-response dynamic. 
\citet{CohensiusEtAlAAMAS2017} consider a game-theoretic model, where only a small number of agents are able or motivated to vote. They compare voting with and without proxies, studying which method approximates the optimal outcome. 
Their results support proxy voting when agents are placed on a line; however, in the general setting, 
some agents may gain too much power. 

\citet{zhang2020power} also study delegation games and provide a generalization of the Banzhaf index in liquid democracy to investigate the power of the agents who vote directly on the issues. 
\citet{meir2020sybil} study power in proxy voting, trying to limit the power of sybil-voters in their system. 
They find bounds on the highest number of sybil-voters a system could have while still upholding two forms of sybil-resistance. 


\paragraph{Social choice on social networks.} Delegative democracy can be seen as a form of social choice on social networks  \cite{grandi2017social}, since delegations create a social network where an edge represents the trust one agent has for another.  This is also seen in models of opinion diffusion: in both cases, trusted agents can affect either the opinion or the vote of another agent. Threshold models of opinion diffusion \cite{granovetter1978threshold} are closely related to multi-agent delegations using quota rules; one such model is that of \citet{DBLP:conf/ijcai/BredereckE17}, where opinions change to be in agreement with the majority of the agent's neighbours. 
 The pairwise diffusion model of \citet{DBLP:conf/ijcai/BrillEEG16} is closely related to the one of pairwise liquid democracy \cite{BrillTalmonIJCAI2018}: agents give an ordering over all alternatives for one issue, and the structure of both the network and the agent's preferences are central to determine the termination of the diffusion process.

\paragraph{Advancements in voting technology.}


A challenge of delegative democracy is the technology required to implement it. \citet{miller1969program} in \citeyear{miller1969program} proposed that voters would need  ``\emph{a special metal key, a coded combination, or even a thumbprint}'' to ensure a safe voting process. Since then, the advancement of technology has surpassed these notions, yet the question remains of how to safeguard voting when relying on technology. 

A suggestion is to use blockchain technology, such as smart contracts. 
 \citet{dhillon2019introduction} give a detailed plan of the infrastructure required for a decentralised online voting platform, using distributed ledgers---showing the strengths and weaknesses of such a system. 
\citet{zhang2019statement} give a model of \emph{statement voting} that uses the Universal Composability framework to circumvent the issues of implementing $e$-democracies. Their model allows for approval voting, STV, and liquid democracy. 

Along with theoretical advancements, voting platforms have also become more commonplace---including those designed for liquid democracy. \citet{mancini2015time} advocates for political systems to be brought into line with the technology available, and that liquid democracy could ``\emph{rewire from the inside how politics works}''. Liquid democracy has also been promoted by political parties, such as the \emph{Partido de Internet} in Mexico, the \emph{Net Party} in Argentina, and most notably the \emph{Pirate party of Germany} who implemented the Liquid Feedback software \cite{BehrensEtAl2014} for internal deliberation. Voting behaviour has been studied on the platform \cite{kling2015voting}, and the party's structure has been altered by its use \cite{litvinenko2012social}.
Liquid Feedback introduced \emph{pseudonymity} to have some level of anonymity in the system via pseudonyms \cite{swierczek2014five}. 

Finally, \emph{Adhocracy} has a voting platform that uses proxy voting for communities, whereas \emph{Ethereum} has a voting platform for liquid democracy. Moreover, Google tested liquid democracy in their internal social network via \emph{Google Votes}~\cite{HardtLopes}.
\section{Smart voting}\label{sec:smartvotingmodel}

In this section we recall the definitions of our model for multi-agent ranked delegations in voting, which has been previously introduced as \emph{smart voting} \cite{colley2020smart}. The model allows agents to decide if they want to vote directly on the issues at stake, or to give (possibly complex and/or multiple) delegations to determine their vote from the votes of others in the electorate.

In a smart voting election, the following four main stages will take place:

\begin{enumerate}
\item Each agent participating in the election creates and sends their own smart ballot (as described in Section~\ref{subsec:smart ballots}), which could be restricted to a specific format; 
\item Once the central authority has received all the ballots, they check whether the ballots abide by the aforementioned restrictions (i.e., whether they are \emph{valid} ballots);
\item Since smart ballots may include delegations, they need to be unravelled via some procedure to obtain a standard voting profile (as presented in Section~\ref{subsec:unravel});
\item Finally, a classical voting rule, such as the majority or plurality rule, is used on the resulting standard voting profile to obtain the collective decision.
\end{enumerate}

In the next sections, we will study problems which pertain to the steps 1--3 of a smart voting election. For the final step 4, any standard voting rule can be used.

\subsection{Smart ballots}\label{subsec:smart ballots}

In a smart voting election, a finite set $\N$ of $n$ \emph{agents} (or \emph{voters}) has to take a collective decision over a finite set $\I$ of $m$ independent issues. The possible values, or \emph{alternatives}, for each issue $i \in \I$ range over a non-empty finite \emph{domain} $\Dom(i)$, which can also include abstentions (denoted by the symbol $*$). 

Each agent $a \in \N$ expresses her vote over an issue $i\in\I$ by submitting what we call a \emph{smart ballot} $B_{ai}$, defined as follows:

 \begin{definition}[Smart ballot]\label{def:smartballot}
   	 A \emph{smart ballot} of agent $a$ on an issue $i \in \I$ is an ordering $((S^1,F^1)>\dots>(S^k,F^k)>x)$ where $k\geq 0$ and for $ h \le k$ we have that $S^h \subseteq \N$ is a set of agents, $F^h : \Dom(i)^{S^h}\rightarrow \Dom(i)$ is a resolute aggregation function and $x \in \Dom(i)$ is an alternative.
\end{definition}

For simplicity, we will focus on a single issue $i\in\I$. We thus drop the index $i$ throughout to simplify notation---writing, e.g., $\Dom$ instead of $\Dom(i)$ and $B_a$ instead of $B_{ai}$. All our results can be easily generalised to multiple independent issues.

In a smart ballot, an agent is expressing a preference ordering over their desired delegations---i.e., the $k$ domain-function pairs $(S,F)$---which ends with a direct backup vote $x$ for an alternative in the domain of the issue. Note that while it is required for an agent to provide a backup vote as their last choice, that vote could be an abstention (if it is in the domain of the issue). Moreover, an agent can submit a smart ballot with $k=0$, i.e., they vote directly on the issue without any delegation.

In most of our examples, a delegating function $F$ either takes the form of 
 a single-agent delegation,  an aggregation rule (such as the majority rule), or a Boolean function.
 In the latter case we assume that the function is represented as a Boolean formula built from a set of propositional variables $\{a\mid a\in \N\}$, representing the use of agent $a$'s vote to determine the delegation, and the standard logical connectives for negation $\neg$, conjunction $\wedge$, and disjunction $\vee$, on a binary domain.

 When computing a function $F$ on an incomplete input we use the notion of a \emph{necessary winner} \cite{konczak2005voting}. For example, consider an agent $a\in\N$ having a delegation $\mathit{Maj}(b,c,d)$---i.e., the majority over the votes of $b$, $c$ and $d$---on a binary issue for which $b$ and $c$ have a direct vote for $1$, while there is no vote of $d$ yet. We can still compute $\mathit{Maj}(b,c,d)$ without the vote of $d$, since there is already a majority for $1$.

From our general definition of smart ballots, we further distinguish a class of \emph{valid} smart ballots satisfying some desirable properties:

   \begin{definition}[Valid smart ballot]\label{def:validsmartballot}
   	 A \emph{valid smart ballot} of agent $a$ is a smart ballot $B_{a}$ such that, for all $1 \le s \neq t \le k$, we have that $(i)$ if $S^s\cap S^t \neq \emptyset$ then $F^s$ is not equivalent to $F^t$, and $(ii)$ $a \notin S^s$.	
   \end{definition}

Condition $(i)$ imposes that an agent is not submitting the same delegation at multiple preference levels of their smart ballot. 
This removes the possibility of manipulating the process by delegating to the same agents with the same delegation function multiple times, thus ensuring that the more preferred  delegation or an equivalent formulation will be chosen. 
%
Condition $(ii)$ ensures that agents cannot include themselves in the set of delegates, as this would immediately lead to a delegation cycle.

The following example illustrates various instances of possible smart ballots:

\begin{example}\label{ex:smartballot}

Six agents $\N=\{a,b,c,d,e,f\}$ face the decision of whether to order dinner from one of two new restaurants. Let us denote by $1$ the first restaurant, by $0$ the second one, and assume that agents can also abstain, i.e., $\Dom = \{1,0,*\}$. 

We now present some options of valid smart ballots that agent $a$ could submit:

  \begin{itemize}
    \item[$(i)$] $B_a=(1)$

    This smart ballot represents the direct vote of $a$ for the first restaurant.
    \item[$(ii)$] $B_a=((\{b,c,d,e,f\}, \mathit{RMaj})>0)$

Here, agent $a$ wants their vote to be the \emph{relative majority} $\mathit{RMaj}$ (i.e., plurality between $0$ and $1$, with $*$ in case of a tie) of the choices of the agents $b,c,d,e,f$. If this first delegation of $a$ leads to a cycle, they will vote for the second restaurant.
 
    \item[$(iii)$] $B_a=((\{d\},id)> (\{e\}, id)> *)$

Here, $a$'s first preference is to delegate to agent $d$ ($id$ indicates the identity function); if this causes a delegation cycle, then $a$ chooses to delegate to $e$; and if this also causes a cycle, then $a$ abstains from the vote. 
    \end{itemize}
    
    Suppose now that we have a binary issue\footnote{Note that in order to use Boolean functions to express a delegation, the domain of the alternatives for the issue must be a Boolean algebra. We restrict this to a two-element Boolean algebra, namely $\{0,1\}$.} with domain $\Dom=\{0,1\}$ where~$1$ represents ordering take-away while $0$ represents cooking at home. 
    \begin{itemize}

    \item[$(iv)$]   $B_a=((\{b, f\}, b\vee f )>(\{b,c,e\}, (c\wedge b) \vee (\neg e\wedge b))>1)$

In this case, $a$'s first choice is to delegate using the Boolean function $b\vee f$, i.e., $a$ will vote for take-away if either $b$ or $f$ want to; if this creates a delegation cycle, then $a$ will vote to try the take-away if both $b$ and $c$ also want to, or $b$ wants to while $e$ prefers to cook at home. If this still causes a cycle, then $a$ votes for $1$.
    \item [$(v)$] $B_a=((\{b,c, f\}, \mathit{Maj} )>(\{b,c,e\}, (c\wedge b) \vee (\neg e\wedge b))>1)$

This smart ballot differs from $(iv)$ only in that $a$'s first preference is to have their vote coincide with the majority of the agents $b,c$ and $f$'s votes.
  \end{itemize}
  One can easily check that all the ballots in Example~\ref{ex:smartballot} are valid as per Definition~\ref{def:validsmartballot}.
  
\end{example}

Each linear order of delegations (plus the backup vote) in a smart ballot indicates a preference over possible delegations. We write $B_a^h$ to indicate the $h^\text{th}$ \emph{preference level} given by agent $a$ in their smart ballot $B_a$. Hence, we have $B_a^h =(S_a^h,F_a^h)$ when the $h^\text{th}$ preference level is a delegation, or $B_a^h=x$ with $x \in \Dom$ when the $h^\text{th}$ preference level is a direct vote. In Example~\ref{ex:smartballot}, e.g., 
$B_a^2=(\{e\},id)$ for ballot $(iii)$.

We collect the $n$ smart ballots from each agent in $\N$ into a \emph{(smart) profile}, i.e., a vector $\boldsymbol{B} = (B_1, \dots, B_n)$. A \emph{valid (smart) profile} is a smart profile where each smart ballot is valid, according to Definition~\ref{def:validsmartballot}. 

\subsection{Unravelling procedures}\label{subsec:unravel}

An \emph{unravelling procedure} is a function which allows us to turn a smart profile into a standard voting profile, i.e., a vector in $\Dom^n$ of direct votes for the issue.

\begin{definition}[Unravelling procedure]
  An \emph{unravelling procedure} $\mathcal{U}$ for the agents in~$\N$ is any computable function 
  $$\mathcal{U}: (B_1 \times \dots \times B_n) \rightarrow \Dom^n.$$
\end{definition}

Thus, an unravelling procedure $\mathcal{U}$ takes a smart profile $\Prof$ and it returns a voting profile in $\Dom^n$. When $\mathcal{U}$ and $\Prof$ are clear from context, we often write just $X$ to denote an outcome of an unravelling procedure: i.e., $X \in \mathcal{U}(\Prof)$ for $X \in \Dom^n$.

After an unravelling procedure returns a profile of direct votes, the agents may want to know how their smart ballot was unravelled: i.e., which preference level of their ballot was actually used to compute their direct vote. For this purpose, we introduce the notion of a \emph{certificate}.

\begin{definition}[Certificate]\label{def:certificate} A certificate $\cert\in\mathbb{N}^n$ for profile $\Prof$ is a vector where, for all $a \in \N$ such that $B_a = (B_a^1 > \dots > B_a^{k_a})$, the entry $\cert_a\in [1, k_a]$ corresponds to a preference level for agent $a$.

\end{definition}

Within the class of all possible certificates of Definition~\ref{def:certificate}, we are interested in those that satisfy the following property: a certificate is \emph{consistent} if there is an ordering of the agents such that an agent's vote can be determined using the preference level in the certificate, given the votes of the agents that come prior in the order.\footnote{This notion, when restricted to ballots with single-agent delegations, corresponds to the definition of confluent sequence rules by \citet{BrillPresentation}.} 

\begin{definition}[Consistent certificate]\label{def:ConsistentCert} Given a profile $\Prof$ of valid ballots, a certificate $\cert$ is \emph{consistent} if there exists an ordering $\boldsymbol{\sigma} : \N \to \N$ of the agents which, starting from vector $X^0=\{\Delta\}^n$ with placeholder values $\Delta$ for all agents, iteratively constructs an outcome vector of direct votes $X\in\Dom^n$ as follows, for $\sigma(a) = z \in [1,n]$:

$$
X^z_a =
\begin{cases}
B_a^{\cert_a} & \mbox{ if } B_a^{\cert_a} \in \Dom \\
F^{\cert_a}_a(X^{z-1}\rest_{S_a^{\cert_a}}) & \mbox{ otherwise }  
\end{cases}
$$
 where $X_a$ represents $a$'s entry in $X$ and $X \rest_S = \Pi_{s\in S}X_s$.



\end{definition}

We let $\C{\Prof}$ be the set of all consistent certificates of a profile $\Prof$, and the $a^\text{th}$ entry of $\cert$ corresponds to $B_a^{\cert_a}$ being used by the unravelling procedure.
Moreover, when outcomes of unravelling procedures can be determined by certificates we will use  $\Crest{\mathcal{U}}{\Prof}$ to denote all consistent certificates given by the unravelling procedure $\mathcal{U}$. 
We show next that each consistent certificate $\cert$ has a unique corresponding outcome vector $X_\cert$ of direct votes.

\begin{proposition}
If a consistent certificate $\cert$ can be given by two orderings $\sigma$ and $\sigma'$ of the agents (as per Definition~\ref{def:ConsistentCert}), then the orderings yield the same outcome $X_\cert\in\Dom^n$.  
\end{proposition}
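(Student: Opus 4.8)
The plan is to show that the two orderings produce the same outcome vector by an inductive argument on how many agents' votes have been "settled" and agree between the two runs. Fix a consistent certificate $\cert$ and two orderings $\sigma, \sigma'$ witnessing its consistency, producing outcome vectors $X$ and $X'$ respectively. I would argue by contradiction: suppose $X \neq X'$, and let $A \subseteq \N$ be the nonempty set of agents $a$ for which $X_a \neq X'_a$. Among the agents in $A$, consider the one, call it $a^*$, that is processed \emph{earliest} according to $\sigma$ (i.e.\ minimising $\sigma(a)$ over $a \in A$); equivalently one could take the earliest under $\sigma + \sigma'$, but picking one ordering is cleaner.

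The key step is to analyse how $a^*$'s vote is computed in the $\sigma$-run. If $B_{a^*}^{\cert_{a^*}} \in \Dom$, then $X_{a^*} = B_{a^*}^{\cert_{a^*}} = X'_{a^*}$ directly (a direct backup vote does not depend on any other agent's vote), contradicting $a^* \in A$. Otherwise $B_{a^*}^{\cert_{a^*}} = (S_{a^*}^{\cert_{a^*}}, F_{a^*}^{\cert_{a^*}})$, and in the $\sigma$-run we have $X_{a^*} = F_{a^*}^{\cert_{a^*}}(X\rest_{S_{a^*}^{\cert_{a^*}}})$, computed from the votes of the agents in $S_{a^*}^{\cert_{a^*}}$ that have already been processed before $a^*$ under $\sigma$. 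By minimality of $\sigma(a^*)$ among $A$, every agent $b$ processed strictly before $a^*$ in $\sigma$ satisfies $X_b = X'_b$. So the input tuple used to compute $a^*$'s vote in the $\sigma$-run coincides, on every coordinate that was actually available (non-placeholder), with the corresponding entry of $X'$. One then needs to argue that the same holds in the $\sigma'$-run: that the necessary-winner computation of $F_{a^*}^{\cert_{a^*}}$ in the $\sigma'$-run, whatever subset of $S_{a^*}^{\cert_{a^*}}$ has been processed by then, yields the same value $X'_{a^*}$, and that this value equals $X_{a^*}$.

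The main obstacle is precisely handling the \emph{necessary winner / partial input} aspect: the two runs may process the members of $S_{a^*}^{\cert_{a^*}}$ in different relative orders with respect to $a^*$, so $F_{a^*}^{\cert_{a^*}}$ might be evaluated on inputs of different "completeness" in the two runs. The resolution is to observe that (a) on agents processed before $a^*$ in either run, the already-agreed values $X_b = X'_b$ feed in, so the partial inputs are consistent extensions of one another; (b) the necessary-winner value is by definition stable under extending a partial assignment (if $F$ has a necessary winner on a partial input, it returns the same value on every completion), hence as more coordinates of $S_{a^*}^{\cert_{a^*}}$ become available the output cannot change; and (c) since $X_{a^*}$ is well-defined as a necessary winner on the partial input seen in the $\sigma$-run, it agrees with the value $F_{a^*}^{\cert_{a^*}}$ would return on the full assignment $(X_b)_{b \in S_{a^*}^{\cert_{a^*}}}$, and likewise $X'_{a^*}$ agrees with the value on $(X'_b)_{b \in S_{a^*}^{\cert_{a^*}}}$. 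I would then need one more small induction (or a second application of the same minimal-counterexample idea) to conclude that the full assignments $(X_b)_{b \in S_{a^*}^{\cert_{a^*}}}$ and $(X'_b)_{b \in S_{a^*}^{\cert_{a^*}}}$ agree — but this follows because if they disagreed on some $b$, then $b \in A$ and $b$ would have to be processed before $a^*$ (since $F_{a^*}^{\cert_{a^*}}$'s necessary winner in the $\sigma$-run used only pre-$a^*$ agents... ) — here some care is needed, since a coordinate of $S_{a^*}^{\cert_{a^*}}$ could be a \emph{later} agent whose vote was simply not needed for the necessary winner.

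To sidestep that last subtlety cleanly, I would instead phrase the whole argument as a single induction over the position in $\sigma$: prove by strong induction on $z = 1, \dots, n$ that the agent $a$ with $\sigma(a) = z$ satisfies $X_a = X'_a$. In the inductive step, all agents processed before $a$ in $\sigma$ already agree by the induction hypothesis; the value $X_a$ is the necessary winner of $F_a^{\cert_a}$ on those agreed votes (or a direct vote), hence equals $F_a^{\cert_a}$ evaluated on the full tuple of final votes of $S_a^{\cert_a}$ as computed in the $\sigma$-run; and symmetrically $X'_a$ equals $F_a^{\cert_a}$ on the full tuple of final $\sigma'$-votes of $S_a^{\cert_a}$. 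These two full tuples are equal iff $X_b = X'_b$ for all $b \in S_a^{\cert_a}$ — which is not yet covered by the induction hypothesis for agents $b$ coming after $a$ in $\sigma$. So in fact the truly clean route is the minimal-counterexample argument above, taking $a^*$ minimal in $A$ with respect to $\sigma$, and carefully using that the necessary-winner output in the $\sigma$-run for $a^*$ depends only on agents \emph{before} $a^*$ in $\sigma$ (all of which agree), while for the $\sigma'$-run the necessary-winner output for $a^*$ likewise depends only on agents before $a^*$ in $\sigma'$; if any such agent disagreed it would be in $A$ and processed before $a^*$ in $\sigma'$, but one must combine this with the observation that necessary-winner values are monotone under extension so that the $\sigma'$-value of $a^*$ equals the common value $F_{a^*}^{\cert_{a^*}}$ takes on any completion of the agreed partial input, namely $X_{a^*}$. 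I expect writing this up rigorously — pinning down exactly which agents' votes the necessary-winner computation "uses" in each run and invoking extension-monotonicity of necessary winners — to be the delicate part; the rest is bookkeeping.
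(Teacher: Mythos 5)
Your plan is sound and, at its core, it follows the same skeleton as the paper's own proof: assume the two orderings yield different outcomes, locate a first point of disagreement along $\sigma$, dispose of the direct-vote case immediately, and handle the delegation case through the necessary-winner semantics of $F_{a^*}^{\cert_{a^*}}$. The paper phrases this as a plain induction on the positions of $\sigma$ and, in the delegation case, argues that $X_d \neq X'_d$ would force $X\rest_{S_d^{\cert_d}} \neq X'\rest_{S_d^{\cert_d}}$ and hence a disagreeing entry ``contradicting the inductive hypothesis''---without pausing over the possibility you rightly worry about, namely that the disagreeing member of $S_d^{\cert_d}$ may come \emph{after} $d$ in $\sigma$; so you have isolated the one delicate point more explicitly than the paper does. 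To close your remaining step you do not need the two full tuples $(X_b)_{b\in S_{a^*}^{\cert_{a^*}}}$ and $(X'_b)_{b\in S_{a^*}^{\cert_{a^*}}}$ to agree: evaluate both runs on one \emph{common} completion. The $\sigma$-run computes $X_{a^*}$ as a necessary winner on the partial input formed by the agents of $S_{a^*}^{\cert_{a^*}}$ preceding $a^*$ in $\sigma$; by minimality of $a^*$ these agents satisfy $X_b = X'_b$, so $X'\rest_{S_{a^*}^{\cert_{a^*}}}$ is a completion of that partial input, and stability of necessary winners under extension gives $F_{a^*}^{\cert_{a^*}}(X'\rest_{S_{a^*}^{\cert_{a^*}}}) = X_{a^*}$. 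The same vector $X'\rest_{S_{a^*}^{\cert_{a^*}}}$ trivially completes the $\sigma'$-run's partial input for $a^*$, so it also evaluates to $X'_{a^*}$; hence $X_{a^*} = X'_{a^*}$, contradicting $a^*\in A$. Note that neither run's partial input need extend the other's (an agent before $a^*$ in $\sigma$ may be after $a^*$ in $\sigma'$), which is exactly why your phrasing ``the $\sigma'$-value equals the value on any completion of the agreed partial input'' needs this common-completion detour; also $a^*\notin S_{a^*}^{\cert_{a^*}}$ by ballot validity, so the completion is well defined. With that one observation added, your minimal-counterexample write-up is complete and, if anything, more rigorous than the paper's rendition of the same induction.
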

\begin{proof}
Consider an arbitrary profile $\Prof$ and a consistent certificate $\cert \in \Crest{}{\Prof}$. Assume for a contradiction that $\cert$ can yield two distinct vectors of direct votes $X\neq X'$, which are given by two orderings $\sigma$ and $\sigma'$ of $\N$, respectively. 
To reach a contradiction, we show by induction on the ordering $\sigma$ that for each agent $a\in\N$ we have $X_a = X'_a$.

 For the base case, consider agent $a\in\N$ such that $\sigma(a) = 1$.  
 As $a$'s vote was added to $X_\cert$ without any other vote, $\cert_a$ must refer to a direct vote.  Therefore, the direct vote of $a$ will be added to $X$ and $X'$ (although it may be that $\sigma'(a) \neq 1$). 
We assume for our inductive hypothesis that for all agents $b\in \N$ where $\sigma(b)\leq k$ it is the case that $X_b=X'_b$. We will show that for agent $d$ such that $\sigma(d)=k+1$ we have $X_d=X'_d$. 
In case $B_d^{\cert_d}$ is a direct vote, the same reasoning as for the base case applies. Else, by definition we have a necessary winner for $F_d^{\cert_d}(X\rest_{S_d^{\cert_d}})=X_d$. If $X_d\neq X'_d$, then $F_d^{\cert_d}(X\rest_{S_d^{\cert_d}})\neq F_d^{\cert_d}(X'\rest_{S_d^{\cert_d}})$ and $X\rest_{S_d^{\cert_d}}\neq X'\rest_{S_d^{\cert_d}}$. Hence, there exists an entry that differs in the two vectors, which contradicts our inductive hypothesis. Then, $X_d=X'_d$.
As $X_a=X'_a$ for all $a\in \N$, we have that $X=X'$. Hence, a consistent certificate $\cert$ gives a unique outcome $X_\cert$. 
\end{proof}

Finally, we define the \emph{rank} of a certificate $\cert$ as the sum of its preference levels used.  Given profile $\Prof$, the $\rank$ of a certificate $\cert\in \mathcal{C}(\Prof)$ is $\rank(\cert):= \sum\limits_{a \in \N}\cert_a$.

The minimum possible value of $\rank$ for an unravelling is $n$, i.e., when all the agents get their first preference level. Thus, if a profile contains a delegation cycle at the first preference level, it cannot have a consistent certificate with rank equal to $n$.

\subsection{Optimal unravellings}\label{subsec:optunravel}
Our first procedure, $\minsum$, is \emph{optimal} with respect to the $\rank$: i.e., it returns all outcome vectors which can be obtained by a consistent certificate minimising the sum of preference levels used for the agents.

\begin{definition}[MinSum]\label{def:opt}
 For a given profile $\Prof$, the $\minsum$ unravelling procedure is defined as:
$$\minsum(\Prof):= \{ X_\cert \mid \cert \in \argmin\limits_{\cert\in \mathcal{C}(\Prof)}\rank(\cert)\}.$$
\end{definition}

Hence, $\minsum$ returns all vectors of direct votes $X_\cert$ whose consistent certificate~$\cert$ minimises the value of $\rank(\cert)$. Intuitively, by minimising the agents' preference levels used, more trusted agents are being delegated to. 
Next, we give examples of consistent certificates and of the outcomes of the $\minsum$ procedure.

\begin{example}\label{ex:rank}
Consider a binary issue with domain $\Dom = \{0,1\}$ and five agents $\N = \{a, b,c,d, e\}$, whose ballots form the profile $\Prof$, shown schematically in Table~\ref{table:OptExPreference}.

\begin{table}[h]
\centering
 \begin{tabular}{llll} 
 \hline\noalign{\smallskip}
& $B_x^1$ &$B_x^2$&$B_x^3$\\
 \noalign{\smallskip}\hline\noalign{\smallskip} 
$a$&$(\{b,c\}, b \land c)$&$(\{d\},d)$&$1$\\
$b$&$1$&-&-\\
$c$&$(\{d\}, d)$&0&-\\
$d$&$(\{e\},e)$&$1$&-\\
$e$&$(\{a\},a)$&$(\{b\}, b)$&0\\
\noalign{\smallskip}\hline
 \end{tabular}
  \caption{The smart profile $\Prof$ of Example~\ref{ex:rank}, where each row represents the ballot for each of the agents $\N=\{a, b,c,d, e\}$, while the columns separate the different preference levels of the agents' ballots.}
\label{table:OptExPreference}

\end{table}

Note that there is a delegation cycle at the first preference level $B_x^1$: agent $a$ needs the vote of $c$ to compute their own vote, agent $c$ delegates to $d$, agent $d$ delegates to $e$, and agent $e$ delegates to $a$.
 Hence, the certificate vector $\cert=(1,1,1,1,1)$, which would be the minimal one for this profile, is not consistent: there is no ordering of the agents where their direct votes are computed using only their first preference levels. Thus, $\cert\notin \C{\Prof}$ and the value of $\rank$ for a consistent certificate will be at least $6$.

Consider the certificate $\cert'=(1,1,2,1,1)$, where only $c$ has their second preference used: $\cert'$ is consistent, as it is shown by the ordering $\sigma = (b, c, a, e, d)$. As $\rank(1, 1, 2, 1, 1) = 6$, the corresponding outcome $X_{\cert'} = (0,1,0,0,0)$ is an outcome of $\minsum(\Prof)$. The consistent certificate $\cert'' = (1,1,1,2,1)$ gives $X_{\cert''} = (1,1,1,1,1)$ and as $\rank(\cert'') = 6$, it also is in $\minsum(\Prof)$. Since there can be multiple certificates minimising the total rank (yielding distinct vectors of direct votes~$X$) the $\minsum$ unravelling procedure is not resolute.
\end{example}

While $\minsum$ maximises the \emph{global} satisfaction of the agents, from an \emph{individual} perspective there can be a large disparity in the selected preference levels.
Our second optimal procedure is motivated by an \emph{egalitarian} approach, finding outcomes whose certificate minimises the maximum preference level used among the agents.

\begin{definition}[MinMax]\label{def:minmax}
Given profile $\Prof$, the $\minmax$ unravelling procedure returns the following vectors of direct votes:

$$\minmax(\Prof):= \{ X_\cert \mid \cert \in \argmin\limits_{\cert\in \C{\Prof}}\max(\cert)\}.$$
\end{definition}


\begin{example}\label{ex:MinMaxbenefit}
Consider a binary issue and $26$ agents $\N=\{a,\dots, z\}$. Let the profile $\Prof$ be such that the smart ballot of agent $a$ is $B_a=((\N\backslash\{a\}, \bigvee_{x\in\N\backslash\{a\}} x)>(\N\backslash\{a,b\}, \bigvee_{x\in\N\backslash\{a,b\}} x)>(\N\backslash\{a,b,c\},\bigvee_{x\in\N\backslash\{a,b,c\}} x)>1)$, and for each agent $x\in \N\setminus\{a\}$ let $B_{x}=((\{a\},a)>0)$ be their smart ballot.

There are three outcomes of $\minmax(\Prof)$, with certificates $\cert=(1,2,\dots, 2)$, $\cert'=(2,\dots,2)$, $\cert'' = (2,1,2,\dots,2)$, where $\max(\cert)=\max(\cert')=\max(\cert'')=2$, even though $\rank(\cert)=\rank(\cert'')=51$ and $\rank(\cert')=52$.
The outcome of $\minsum(\Prof)$ has certificate $\cert'''=(4,1,\dots,1)$ and $\rank(\cert''')=29$; however, this is not an outcome of $\minmax$, since $\max(\cert''')=4$.
\end{example}

A disadvantage of $\minmax$ is that for some profiles $\Prof$ it may return a large number of tied outcomes, as we shall see in Example~\ref{prop:unravelldifferent}. 


\subsection{Greedy unravellings}\label{subsec:greedyunrav}
In Section~\ref{sec:compucomplex} we will prove that computing an outcome for $\minsum$ and $\minmax$ is not computationally tractable in general. This motivates us to introduce four unravelling procedures with a greedy approach, that break delegation cycles by using the lowest possible preference level of the ballots, while keeping the process tractable.

\begin{algorithm}[h]
\caption{General unravelling procedure  \textsc{Unravel}}\label{Alg:Gen}
\begin{algorithmic}[1]
 \State Input: $\Prof$  
  \State $ X:=(\Delta, \dots, \Delta)$ \Comment{\emph{vector for direct votes initialised with placeholders $\Delta$}} \label{line:initvector}
  \While{$X\notin \Dom^n$ } \label{1st:line:repeat1}
    \State $\lev:=1$  \Comment{\emph{reset preference level counter $\lev$ to 1}}\label{line:initlev}
    \State $Y:=X$ \Comment{\emph{store a copy of $X$ to compute changes}} \label{line:countery} 
      \While{$X= Y$}  \label{2nd:line:repeat2}
      \Procedure{Update}{$\#$}  with $\#\in \{\mathbf{U},\mathbf{RU},\mathbf{DU},\mathbf{DRU}\}$       \label{line:update} 
          \EndProcedure
          \State $\lev:=\lev+1$ \label{line:nextpref}
      \EndWhile \label{3rd:line:until2}
 \EndWhile\label{4th:line:until1} 
 \State \Return{$X$}\Comment{\emph{output a vector of direct votes}}
\end{algorithmic}
\end{algorithm}

Algorithm~\ref{Alg:Gen} outlines our \emph{general unravelling procedure} \textsc{Unravel}. The input is a smart profile $\Prof$, and the procedure initialises a vector $X$ with placeholders $\Delta$ for each agent $a \in \N$. The outcome $X$ is returned when each agent has a vote in $\Dom$, i.e., $X\in \Dom^n$. A counter \lev\, is always reset to $1$ to come back to the first preference level of the agents. An additional vector $Y$ is used to help with intermediate computations.

In line~\ref{line:update} a subroutine using an \emph{update procedure} is executed.\footnote{In the following, we will simply write \textsc{Unravel}($\#$), for $\# \in \{{\bf U}, {\bf DU}, {\bf RU}, {\bf DRU} \}$, to indicate the \textsc{Unravel} algorithm using \textsc{Update} procedure $\#$.} Given a partial profile of direct votes and placeholders $\Delta$, as well as a preference level \lev, the \textsc{Update} procedure searches for a direct vote or a vote that can be computed via necessary winners (depending on which \textsc{Update} is used) at the \lev$^\text{th}$ preference level in the profile; if this is not possible, \textsc{Unravel} moves to level $\lev + 1$ (line~\ref{line:nextpref}). 

The four update procedures that could be called in Algorithm~\ref{Alg:Gen} are defined by the presence or absence of two properties. The first is \emph{direct vote priority} (D): an update procedure prioritises \emph{direct votes} over those that can be computed from the current vector $Y$ of votes. The second is \emph{random voter selection} (R): an update procedure \emph{randomly} selects, when possible, a single agent whose direct or computable vote can be added to $X$. We thus get: basic update ($\mathbf{U}$), update with direct vote priority ($\mathbf{DU}$), update with random voter selection ($\mathbf{RU}$), update with both direct vote priority and random voter selection ($\mathbf{DRU}$).

\begin{algorithm}[h]
\caption{\textsc{Update}($\mathbf{U}$)}\label{Alg:U}
\begin{algorithmic}[1]
      \For{$a \in \N$ such that $x_a = \Delta$} \label{line:loopwodirect}
        \If{$B_a^{\lev}\in \Dom$} \Comment{\emph{add $a$'s vote if $a$ has a direct vote at $\lev$}}
        \State $x_a:=B_a^{\lev}$   \label{lineU:votedirect}
        \ElsIf{$F_a^{\lev}(Y_{\restriction S_a^\lev})\in \Dom$}
        \State$x_a:=F_a^{\lev}(Y_{\restriction S_a^\lev} )$  \Comment{\emph{add $a$'s vote if $a$ has a computable vote at $\lev$}} \label{lineU:votecomputable}
        \EndIf
      \EndFor
\end{algorithmic}
\end{algorithm}
The \textsc{Update}($\mathbf{U}$) procedure\footnote{Unless otherwise specified, in case the condition in an {\bf if} statement fails, our programs will skip to the next step. Recall also that $Y_{\restriction S}$ denotes the restriction of vector $Y$ to the elements in set $S$.} in Algorithm~\ref{Alg:U} updates the vector $X$ with the direct votes for those agents who currently do not have one (line~\ref{line:loopwodirect}), if their preference at $\lev$ is a direct vote (line~\ref{lineU:votedirect}) or it can be computed from the current votes in $Y$ (line~\ref{lineU:votecomputable}).

\begin{algorithm}[h]
  \caption{\textsc{Update}($\mathbf{DU}$)}\label{Alg:DU}
\begin{algorithmic}[1]
      \For{$a \in \N$ such that $x_a = \Delta$}
        \If{$B_a^{\lev}\in \Dom$} \Comment{\emph{add all direct votes}}\label{line:DUdirect}
        \State $x_a:=B_a^{\lev}$   
        \EndIf
      \EndFor
      \If{$Y= X$}\Comment{\emph{if no direct votes are added to $X$}}  \label{lineDU:comp}
      \For{$a \in \N$ such that $x_a = \Delta$}
        \If{$F_a^{\lev}(Y_{\restriction S_a^\lev})\in \Dom$} \Comment{\emph{find and add computable votes to $X$}}  \label{lineDU:compadd}
        \State $x_a:=F_a^{\lev}(Y_{\restriction S_a^\lev} )$  
        \EndIf
      \EndFor
      \EndIf
\end{algorithmic}
\end{algorithm}

In Algorithm~\ref{Alg:DU}, \textsc{Update}($\mathbf{DU}$) first adds the direct votes from preference level $\lev$ to $X$ for those agents without a vote in $X$ (line~\ref{line:DUdirect}). If there are no direct voters at $\lev$  (line~\ref{lineDU:comp}), then the procedure tries to add computable votes (line~\ref{lineDU:compadd}).


\begin{algorithm}[h]
  \caption{\textsc{Update}($\mathbf{RU}$)}\label{Alg:RU}
\begin{algorithmic}[1]
  \State $P:= \emptyset$\Comment{\emph{initialise an empty set}} \label{line:initprobset}
    \For{$a \in \N$ such that $x_a = \Delta$} \label{line:forstartP}
      \If{$B_a^{\lev}\in \Dom$ or $F_a^{\lev}(Y_{\restriction S_a^\lev})\in \Dom$ } \label{line:addtoP}
      \State  $P:=P\cup\{a\}$\Comment{\emph{add voters to $P$ if their vote can be determined}}
      \EndIf
    \EndFor\label{line:forendP}{}
    \If{$P\neq \emptyset$}\Comment{\emph{there are direct or computable votes in $P$}} \label{line:Pnotempty}
    \State {\bf select} $b$ from $P$ uniformly at random \label{line:lottery}
      \If{$B_b^{\lev}\in \Dom$} 
      \State $x_{b}:=B_b^{\lev}$ \label{lineRU:directaddY}
      \ElsIf{$F_b^{\lev}(Y_{\restriction S_b^\lev})\in \Dom$} 
      \State $x_{b}:=F_b^{\lev}(Y_{\restriction S_b^\lev})$ \label{lineRU:compaddY}
    \EndIf 
    \EndIf
\end{algorithmic}
\end{algorithm}
The \textsc{Update}($\mathbf{RU}$) procedure has the random voter selection property (Algorithm~\ref{Alg:RU}): at line~\ref{line:initprobset} an empty set $P$ is initialised to store agents with either a direct vote or a computable vote at $\lev$ (line~\ref{line:addtoP}). If $P$ is non-empty, one agent will be randomly selected and their direct or computable vote will be added to $X$. 

\begin{algorithm}
  \caption{\textsc{Update}($\mathbf{DRU}$)}\label{Alg:DRU}
\begin{algorithmic}[1]
    \State $P,P':= \emptyset$ \Comment{\emph{initialise two empty sets}}
      \For{$a \in \N$ such that $x_a = \Delta$}  
        \If{$B_a^{\lev}\in \Dom$} \Comment{\emph{add agents with direct votes at $\lev$ to $P$}} \label{lineDRU:directvote}
        \State $P:=P\cup \{a\}$
        \ElsIf{$F_a^{\lev}(Y\restriction_{S_a^\lev})\in \Dom$}\Comment{\emph{add agents with computable votes at $\lev$ to $P'$}}
        \State $P':=P'\cup \{a\}$
        \EndIf
      \EndFor
      \If{$P\neq \emptyset$} \Comment{\emph{if there are agents with direct votes}}
      \State {\bf select} $b$ from $P$ uniformly at random 
      \State $x_{b}:=B_b^{\lev}$   \Comment{\emph{add only the randomly selected voter's direct vote to $X$}} \label{lineDRU:randomdirect}
      \ElsIf{$P'\neq \emptyset$} \Comment{\emph{if there are some computable votes}}
        \State {\bf select} $b$ from $P'$ uniformly at random 
        \State  $x_{b}:=F_b^{\lev}(Y\restriction_{S_b^\lev} )$ \Comment{\emph{add only the randomly selected voter's computable vote to $X$}}\label{lineDRU:randomcomputable}
      \EndIf
\end{algorithmic}
\end{algorithm}

Lastly, Algorithm~\ref{Alg:DRU} presents \textsc{Update}($\mathbf{DRU}$), which has both properties. At $\lev$, the procedure adds agents with direct votes to $P$ (line~\ref{lineDRU:directvote}) and agents with computable votes to $P'$. If $P$ is not empty, an agent is selected from $P$ and their direct vote is added to $X$ (line~\ref{lineDRU:randomdirect}). Otherwise, if $P$ is empty and $P'$ is not, an agent is selected from $P'$ and their computable vote is added to $X$ (line~\ref{lineDRU:randomcomputable}). If both $P$ and $P'$ are empty, no votes are added to $X$ and the procedure terminates. 

We now give an example of the execution of these four unravelling procedures.

\begin{example}\label{ex:unravellingprocedures}

For a binary issue with $\Dom = \{0,1\}$ consider agents $\N = \{a, \dots, f\}$, whose ballots and delegation structure are represented schematically in Figure~\ref{fig:UnravellingExample}.\footnote{Observe that a formula of propositional logic is a Boolean function.} First of all, $\Prof$ is thus a valid profile. We now illustrate our four unravelling procedures for $ \textsc{Unravel}(\#)$ with $\#\in\{\mathbf{U},\mathbf{DU},\mathbf{RU},\mathbf{DRU}\}$.

\begin{figure}[h]
\begin{subfigure}{0.58\textwidth}
\begin{tabular}{llll} 
\hline\noalign{\smallskip} 
&$B_x^1$&$B_x^2$&$B_x^3$\\
 \noalign{\smallskip}\hline\noalign{\smallskip}
$a$&$(\{b,c,d\}, (b \land c) \lor (b \land d)$&$(\{e\},e)$&$1$\\
$b$&$1$&-&-\\
$c$ &$0$&-&-\\
$d$ &$(\{e\},e)$&$0$&-\\
$e$ &$(\{f\},f)$&$1$&-\\
$f$&$(\{a\},a)$&$(\{b\},b)$&$1$\\
\noalign{\smallskip}\hline
 \end{tabular}
 \end{subfigure}
 \begin{subfigure}{0.45\textwidth}
\scalebox{0.74}{
 \begin{tikzpicture}
  \node (F) at (0,0)    {\usebox{\6}};
  \node (A) at (2.5,0)    {\usebox{\1}};
  \node (E) at (5,0)   {\usebox{\5}};
  \node (G) at (2.5,-1.1)  {{\small $(b\land c) \lor (b \land d)$}};
  \node (B) at (0,-1.1)    {\usebox{\2}};
  \node (C) at (2.5,-2.2)    {\usebox{\3}};
  \node (D) at (5,-1.1)    {\usebox{\4}};

  \draw[-]
      (A) edge[]            node{}(G);

  \draw[->,>=stealth']
      (A) edge[dashed]        node {} (E)
        (D) edge              node {} (E)
        (E) edge[bend right]              node {} (F)
        (F) edge            node {} (A)
            edge[dashed]   node {} (B)
        (G) edge            node {} (B)
          edge            node {} (C)
          edge            node {} (D);
\end{tikzpicture}
}
\end{subfigure}
  \caption{Representation of the ballots (left) and the delegation structure (right) of the agents in $\Prof$ from Example~\ref{ex:unravellingprocedures}. In the graph on the right, a solid line indicates the first preference for delegation, a dashed line represents the second, and the final preference (a direct vote in $\{0,1\}$) is written next to the agents' names.}
\label{fig:UnravellingExample}
\end{figure}

\label{table:UnravellingExPreference}


\begin{itemize}
    \item [] $\unravel{\mathbf{U}}$\\
    At $\lev=1$ the procedure adds the direct votes of $b$ and $c$ to $X$. Thus, we have $X = (\Delta, 1, 0, \Delta, \Delta, \Delta)$. Then, the algorithm cannot find a direct or computable vote at $\lev = 1$, so it moves to $\lev = 2$ where it uses $Y$ to add the direct votes of $d$ and $e$, as well as $f$'s vote that is computable from $X$ by copying $b$, giving $X = (\Delta, 1, 0, 0, 1, 1)$. As no other update is possible, the algorithm sets $\lev = 1$ and it computes $a$'s vote, yielding $X = (0, 1, 0, 0, 1, 1)$, with $\cert = (1,1,1,2,2,2)$.
    \item [] $\unravel{\mathbf{DU}}$ \\
    As with $\unravel{\mathbf{U}}$, the direct votes of $b$ and $c$ are added initially, which yields to $X = (\Delta, 1, 0, \Delta, \Delta, \Delta)$, and then the algorithm moves to $\lev=2$. Unlike $\unravel{\mathbf{U}}$, the procedure $\unravel{DU}$ adds only the direct votes of $d$ and $e$, giving $X=(\Delta, 1, 0, 0, 1, \Delta)$. Returning to $\lev=1$, $a$'s vote can be computed from the votes of $b,c$ and $d$, giving  $X=(0, 1, 0, 0, 1, \Delta)$. Finally, at $\lev=1$ , $f$'s computable vote (a delegation to $a$) can be added, thus giving $X=(0, 1, 0, 0, 1, 0)$,  with certificate $\cert=(1,1,1,2,2,1)$.
    \item [] $\unravel{RU}$ \\
    First, the direct votes of $b$ and $c$ are added, each in a separate iteration, giving $X = (\Delta, 1, 0, \Delta, \Delta, \Delta)$. Then, the algorithm moves to $\lev=2$, where it chooses a single vote at random to add to $X$ from the agents $d, e$ and $f$. 
If, for example, the vote of $f$ was added, then $X=(\Delta, 1, 0, \Delta, \Delta, 1)$. At $\lev=1$, $e$'s vote can be computed from $f$'s, and then $d$'s from $e$'s, giving $X=(\Delta, 1, 0, 1, 1, 1)$. Then, at $\lev=1$, $a$'s vote can be computed from $b$, $c$ and $d$'s, yielding $X=(1, 1, 0, 1, 1, 1)$, whose certificate is $\cert=(1,1,1,1,1,2)$.
    \item [] $\unravel{DRU}$ \\
    This procedure moves as $\unravel{RU}$, except that it chooses randomly only between the direct votes of $d$ and $e$ at the iteration where $\unravel{RU}$ can also choose to select the vote of agent $f$.
\end{itemize}

Note that in this example, $\minmax$ would return outcomes corresponding to all certificates $\cert$ where $\max(\cert)=2$. This would include, e.g., $\cert = (1,1,1,2,1,1)$, which is also returned by $\minsum$, but also $\cert' = (2,1,1,2,2,2)$ and many more.
\end{example}

\subsection{Language restrictions for smart ballots}\label{sec:restrictions}

Starting from our general concept of a valid smart ballot in Definition~\ref{def:validsmartballot}, we now focus on some restrictions on the language of delegations in order to study our procedures.

We start by focusing on Boolean functions expressed as propositional formulas on a binary domain, with a few additional requirements. Firstly, we impose that the formulas are \emph{contingent}---i.e., neither a tautology, nor a contradiction---in order to avoid a direct vote in disguise for (in the case of a tautology) or against (in the case of a contradiction) the issue, as they would always evaluate to true (respectively, to false). Secondly, the formulas must be expressed in \emph{disjunctive normal form} (DNF): i.e., they are written as a disjunction of \emph{cubes}, where a cube is a conjunction of \emph{literals} (and a literal is a variable or its negation). Finally, call a cube $C$ an \emph{implicant} of formula $\varphi$ if $C \vDash \varphi$, and call $C$ a \emph{prime implicant} of $\varphi$ if $C$ is an implicant of $\varphi$ and for all other $C' \vDash \varphi$ we have that $C' \nvDash C$. Intuitively, prime implicants are the minimal partial assignments to make a formula true. A \emph{complete} DNF is the unique representation of a DNF listing all of its prime implicants.

This representation may seem restrictive, but for an implementation of our framework we could envisage a pre-processing step where the agents are aided by a computer platform when creating their ballots---which would use techniques such as the \emph{consensus method} or \emph{variable depletion} (see the textbook by \citet{crama2011boolean} for further details) to find the corresponding complete DNF of a formula.

We call this restricted language $\Blang$: \footnote{Note that in previous work \cite{colley2020smart}, the language $\Blang$ was initially defined  simply as the language of contingent propositional formulas in DNF, for which however the necessary winners cannot be computed in polynomial time. We are grateful to an anonymous reviewer for pointing this out.}

\begin{definition}[$\Blang$]\label{def:BOOLBallot}
A smart ballot $B_a$ for agent $a$ and a binary issue is in language $\Blang$ if every $F_a^h$ in $B_a$ is a contingent propositional formula in complete DNF.
\end{definition}
Observe that a propositional atom is a Boolean function corresponding to the identity function: i.e., it is equivalent to copying another agent's vote. In Example~\ref{ex:smartballot}, ballot $(v)$ does not belong to language $\Blang$ as $B_a^1 = \mathit{Maj}$ is not a Boolean formula; however, ballot $(iv)$ belongs to $\Blang$, but note that the formula $(b \land c) \lor (b \land \lnot c) \lor f$, which is equivalent to the formula used at the first preference level, would not be in $\Blang$ as it is not complete. For the language $\Blang$, we often write $\varphi^\lev_a$ instead of $F^\lev_a$. 

The following proposition shows that the necessary winner for $\Blang$ ballots can be computed in polynomial time. 
\begin{proposition}\label{prop:completenec}
 Deciding if a formula in a $\Blang$ ballot has a necessary winner can be done in polynomial time.
\end{proposition}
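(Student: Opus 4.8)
The plan is to analyse the structure of a complete DNF formula $\varphi = C_1 \vee \dots \vee C_r$, where each $C_j$ is a prime implicant (a cube of literals over propositional atoms identified with agents), evaluated on a partial assignment in which some agents in $\mathrm{Var}(\varphi)$ already have a direct vote in $\{0,1\}$ while the rest are still placeholders $\Delta$. A necessary winner for $1$ exists iff in \emph{every} completion of the partial assignment $\varphi$ evaluates to $1$; a necessary winner for $0$ exists iff in every completion $\varphi$ evaluates to $0$. The key observation is that checking each of these conditions reduces to simple, local tests on the cubes, so the whole procedure is polynomial in the size of the $\Blang$ ballot.

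First I would handle the necessary winner for $1$. A cube $C_j$ is \emph{forced true} by the partial assignment if every literal in $C_j$ is already satisfied (its variable is assigned and the sign matches); $C_j$ is \emph{still satisfiable} if no literal in it is already falsified. Then $\varphi$ has a necessary winner for $1$ iff some cube $C_j$ is forced true: the ``if'' direction is immediate, and for ``only if'' I would use the fact that $\varphi$ is in \emph{complete} DNF — if no cube is forced true, one can extend the partial assignment to the unassigned variables so as to falsify every cube that is not yet forced true (set each unassigned variable to falsify the first non-falsified literal it appears in within each such cube; this is where completeness / prime-implicant structure matters, since prime implicants are minimal and one must argue these choices are simultaneously realisable). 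Scanning all cubes and all literals to test ``forced true'' is clearly polynomial.

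Next I would handle the necessary winner for $0$. Here $\varphi$ evaluates to $0$ under \emph{every} completion iff every cube $C_j$ is already falsified by the partial assignment, i.e., each $C_j$ contains a literal whose variable is assigned with the opposite sign. Again the ``if'' direction is trivial, and for ``only if'' note that if some cube $C_j$ has no falsified literal, we can complete the assignment to make every still-unassigned literal of $C_j$ true, so $C_j$ becomes true and $\varphi = 1$ in that completion. This test is again a single pass over the cubes. Combining the two cases: run both tests; output ``has a necessary winner'' iff at least one succeeds (noting they cannot both succeed since $\varphi$ is contingent, though we only need the decision problem). Both tests together are linear in the length of the DNF string, hence polynomial in the ballot size.

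The main obstacle I expect is the ``only if'' direction for the necessary winner for $1$: one must show that when no cube is forced true, a single completion simultaneously falsifies \emph{all} the not-yet-forced cubes. Naively, the choices made to falsify one cube could conflict with those needed for another (falsifying a literal $x$ in $C_j$ means setting $x = 0$, which sets the literal $x$ true in any cube $C_\ell$ containing it, so one must falsify $C_\ell$ via a \emph{different} literal). The clean argument is: since no $C_j$ is forced true, each such $C_j$ has at least one literal $\ell_j$ whose variable $v_j$ is unassigned; it suffices to find a completion making $\ell_j$ false for every $j$, i.e., a partial assignment to the unassigned variables that, for each $j$, sets some chosen unassigned literal of $C_j$ to false — this is a monotone/2-SAT-flavoured feasibility question, but in fact it always has a solution because we may simply process the unassigned variables one at a time and, crucially, appeal to the completeness of the DNF: if it were impossible to falsify all remaining cubes, then the already-fixed assignment would already entail $\varphi$, contradicting that no prime implicant is contained in it (here completeness guarantees that entailment of $\varphi$ is witnessed by one of the listed prime implicants being forced true). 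Making this last implication fully rigorous — i.e., that ``the partial assignment entails $\varphi$'' is equivalent to ``some listed prime implicant is forced true'' — is the heart of the proof and is exactly where Definition of complete DNF is used.
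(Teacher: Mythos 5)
Your reduction is the same as the paper's: a complete DNF has a necessary winner $1$ iff some cube is already forced true by the partial assignment, and a necessary winner $0$ iff every cube already contains a falsified literal, and both tests are a linear scan. The $0$-case and the easy directions are fine. The problem is exactly the step you flag at the end: your argument for the only-if direction of the $1$-case is circular. You argue via the contrapositive that when no cube is forced true one can complete the assignment so as to falsify all cubes, and when you hit the real difficulty (the falsifying choices might conflict) you resolve it by saying that otherwise ``the already-fixed assignment would already entail $\varphi$, contradicting that no prime implicant is contained in it, because completeness guarantees that entailment of $\varphi$ is witnessed by one of the listed prime implicants being forced true.'' That last clause \emph{is} the statement you are trying to prove (entailment of $\varphi$ by the partial assignment implies some listed cube is forced true), so as written the key step is assumed rather than proved, and you acknowledge as much by deferring it as ``the heart of the proof.''

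The non-circular argument (the paper's) goes in the direct, not contrapositive, direction and needs no construction of a falsifying completion at all: if the partial assignment $X$ entails $\varphi$, form the cube $C$ whose literals are exactly the values fixed by $X$; then $C \vDash \varphi$, so $C$ is an implicant of $\varphi$; every implicant can be shrunk (by deleting literals while preserving implicant-hood) to a prime implicant whose literal set is a subset of that of $C$; since the DNF is \emph{complete}, that prime implicant appears as one of the listed cubes, and all of its literals are among those made true by $X$, i.e., it is forced true. This is the step where completeness does the work, and it is essential: for a non-complete DNF such as $(x\wedge y)\vee(x\wedge\neg y)$ with $X$ setting $x=1$, the formula is entailed but no listed cube is forced true, so your local test would be unsound without it. A minor secondary slip: from ``$C_j$ is not forced true'' you infer it has an unassigned literal, which fails for cubes already falsified by $X$; those cubes are harmless, but you should restrict the claim to cubes that are neither forced true nor falsified. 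With the absorption argument supplied, your proof closes and coincides with the paper's; without it, the central claim is unproved.
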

\begin{proof}
Observe that the necessary winner for a formula being $1$ (resp., $0$) means that the formula is true (resp., false). We first need to prove the following two claims: 
\begin{enumerate}
    \item The necessary winner of a complete DNF formula is $1$ if and only if every literal of at least one cube of the formula is true.
    \item  The necessary winner of a complete DNF formula is $0$ if and only if every cube of the formula is made false by at least one literal.
\end{enumerate}
These two claims can be computed by reading the formula and the partial truth assignment; thus, if they are true, a necessary winner can be found in polynomial time.
 
  For the right-to-left direction of claim (1), assume that one cube of the formula is true. As the formula is a complete DNF, each cube represents one of its prime implicants. By definition, if a prime implicant is made true, so is the formula. 

  For the left-to-right direction of claim (1), assume that the complete DNF formula $\varphi$ is made true by some partial truth assignment $X$. We create a cube $C$ from the partial assignment,  where if a variable $x$ is true (resp., false) in $X$ then $x$ (resp., $\lnot x$) is a literal in $C$. As $C$ is built from a partial truth assignment making $\varphi$ true, we have that $C\vDash \varphi$ and thus $C$ is an implicant of $\varphi$. Then, either $C$ is a prime implicant of $\varphi$ or there exists a prime implicant $C'$  of $\varphi$, such that $C'\vDash C$, where $C'$ contains a subset of literals in $C$. As $\varphi$ is a complete DNF, in either case there will be a cube of $\varphi$ made true by $X$ (i.e., either $C$ or $C'$).

  For the right-to-left direction of claim (2), if all cubes in the formula are made false, then the formula is also necessarily false (i.e., the necessary winner is $0$). 

  For the left-to-right direction of claim (2), assume that a complete DNF $\varphi$ evaluates to false under a partial truth assignment $X$. Yet, assume for a contradiction that there exists a cube $C$ of $\varphi$ that does not evaluate to false under $X$. As $C$ is not false, then either $C$ is true under $X$ (yielding a contradiction, as $\varphi$ would be true), or there are some variables $v\in \Var{C}$ without a truth value in $X$ and the remaining literals are made true. We can then extend $X$ for each such $v\in \Var{C}$ such that the literal of $v$ in $C$ is made true. As $\varphi$ is a complete DNF, the cube $C$ would be true---as no cube can contain contradictions (e.g., $x$ and $\neg x$). Thus, the formula $\varphi$ would be true and we would reach a contradiction.
  
  Finally, checking that each literal of at least one cube is true (or that every cube is made false by at least one literal) can be done by simply inspecting the formula together with the partial truth assignment, and thus in polynomial time.
\end{proof}

A further advantage of having delegations expressed in complete DNF is that we can check whether a ballot is valid in polynomial time:\footnote{We previously showed  \cite{colley2020smart} that checking if a ballot of contingent DNF formulas is valid is an \textsc{NP}-complete problem. Restricting formulas to contingent \emph{complete} DNFs makes this problem tractable.} a tautology in complete DNF is $\top$, a contradiction is $\bot$, and to check if two complete DNF formulas are equivalent it suffices to see if the lists of their prime implicants are the same.

The next language restriction that we introduce is to ranked liquid democracy ballots. The language $\Liqlang$ restricts the delegations to single other agents, where the delegation function is the identity function $id$. 

\begin{definition}[$\Liqlang$]\label{def:LIQBallot}
Smart ballot $B_a$ for agent $a$ belongs to $\Liqlang$ if every delegating $B_a^h$ is of the form $(\{b\}, id)$ for $b\in\N\setminus\{a\}$ and $id$ the identity function. 
\end{definition}

In some models of ranked liquid democracy---e.g., in Liquid Feedback \cite{BehrensEtAl2014}---the final backup vote must be an abstention ($*$): we denote this language as $\Liqlang_*$. 

Finally, for a given language $\mathcal{L}$ we write $\mathcal{L}[k]$ to indicate the smart ballots in $\mathcal{L}$ having at most $k$ delegations in their ordering. For instance, in Example~\ref{ex:smartballot} the smart ballots $(i)$ and $(iv)$ belong to the language $\blang{2}{}$, while ballot $(iii)$ belongs to the language $\Liqlang_*$ as well as $\liqlang{2}{}$. Note that checking if a ballot is valid for $\Liqlang$ is a tractable problem as it suffices to check that all delegation functions use $id$ and that no one delegates to themselves or to the same agent multiple times. 


\section{Computational complexity of unravellings}\label{sec:compucomplex}

In this section we study the complexity of computational problems for each of our unravelling procedures.  
First, we study how hard it is to unravel a smart profile under a given procedure. We begin with $\minsum$ and $\minmax$, showing that an associated decision problem, $\minsumbound$ and $\minmaxbound$, respectively, are \textsc{NP}-complete. However, when smart ballots are restricted to $\Liqlang$, finding a solution becomes tractable. Unlike $\minsum$ and $\minmax$, 
we show that our greedy procedures, $\unravel{\#}$ with $\mathbf{\#}\in \{\U, \DU, \RU, \DRU\}$, always terminate in a polynomial number of time steps.  


\subsection{Computational complexity of $\minsum$}\label{sec:complexopt}

In this section we study the computational complexity of finding $\minsum$ outcomes, when ballots are restricted to either the $\Blang$ or $\Liqlang$ language, finding the problem to be \textsc{NP}-complete in the former case and tractable in the latter.

We begin by studying the decision problem $\minsumbound$, 
whose input is a smart profile $\Prof$, such that every ballot is restricted to $\Blang$, and a constant $M\in \mathbb{N}$. The problem then asks if there is a consistent certificate $\cert$ that unravels $\Prof$ such that $\rank(\cert)\leq M$.
Repeatedly using $\minsumbound$ for different values of $M$ gives us the minimum bound, and a modified version of $\minsumbound$ using partial certificates would allow us to compute an outcome of $\minsum$. 
Both problems are harder than $\minsumbound$, which we now show being \textsc{NP}-complete.

\begin{lemma}\label{lem:optboundmembership}
$\minsumbound$ is in \textsc{NP}.  
\end{lemma}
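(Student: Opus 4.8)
The plan is to exhibit a polynomial-size certificate (in the sense of the complexity class NP, not the ``certificate'' $\cert$ of Definition~\ref{def:certificate}) for a ``yes'' instance of $\minsumbound$ and show it can be verified in polynomial time. The natural witness is the pair consisting of a consistent certificate $\cert \in \mathbb{N}^n$ together with an ordering $\boldsymbol{\sigma}$ of the agents that realises it in the sense of Definition~\ref{def:ConsistentCert}. Since each $\cert_a \in [1, k_a]$ and $k_a$ is bounded by the size of agent $a$'s ballot, the vector $\cert$ has polynomial size; and $\boldsymbol{\sigma}$ is just a permutation of $\N$, again polynomial size.

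First I would describe the verification algorithm: given $(\cert, \boldsymbol{\sigma})$, simulate the iterative construction of Definition~\ref{def:ConsistentCert}. Starting from $X^0 = \{\Delta\}^n$, process the agents in the order given by $\boldsymbol{\sigma}$; for agent $a$ with $\sigma(a) = z$, check whether $B_a^{\cert_a}$ is a direct vote (in which case set $X^z_a := B_a^{\cert_a}$) or a delegation $(S_a^{\cert_a}, F_a^{\cert_a})$, in which case we must verify that $F_a^{\cert_a}(X^{z-1}\rest_{S_a^{\cert_a}})$ has a necessary winner and set $X^z_a$ to it. Each necessary-winner computation is polynomial by Proposition~\ref{prop:completenec}, since ballots are in $\Blang$; there are $n$ such steps. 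If every step succeeds (the necessary winner always exists at the required moment), the certificate is consistent and yields a well-defined outcome $X_\cert \in \Dom^n$ by the Proposition preceding Definition~\ref{def:opt}. Finally, compute $\rank(\cert) = \sum_{a\in\N}\cert_a$ and check that it is at most $M$. Accept iff all checks pass.

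Then I would argue correctness in both directions: if the instance is a ``yes'' instance, by definition there is a consistent certificate $\cert$ with $\rank(\cert) \le M$, and by Definition~\ref{def:ConsistentCert} there is a witnessing ordering $\boldsymbol{\sigma}$, so the pair $(\cert,\boldsymbol{\sigma})$ is accepted. Conversely, if some pair is accepted, then $\cert$ is consistent (the ordering $\boldsymbol{\sigma}$ witnesses it) and $\rank(\cert) \le M$, so the instance is a ``yes'' instance. Hence $\minsumbound \in \textsc{NP}$.

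I do not anticipate a genuine obstacle here, as this is the routine membership half of an NP-completeness proof; the only point requiring slight care is making explicit that the necessary-winner evaluations used during verification are polynomial-time, which is exactly what Proposition~\ref{prop:completenec} guarantees for $\Blang$, and that the ballot sizes bound the numeric entries of $\cert$ so the witness is genuinely polynomial in the input size. The harder direction, \textsc{NP}-hardness, presumably comes in a subsequent lemma via a reduction (likely from a satisfiability-style problem, exploiting the DNF structure of $\Blang$ delegations).
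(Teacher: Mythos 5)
Your proposal is correct and follows essentially the same route as the paper: the witness is the certificate vector, verification relies on Proposition~\ref{prop:completenec} to evaluate necessary winners in polynomial time, and one finally checks $\rank(\cert)\le M$. The only (harmless) difference is that you add the ordering $\boldsymbol{\sigma}$ to the witness and simulate it directly, whereas the paper verifies consistency from $\cert$ alone by a greedy saturation over at most $\frac{(n-1)n}{2}$ necessary-winner checks; both verifications are polynomial.
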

\begin{proof}
Recall that $\minsumbound$ is defined on $\Blang$ profiles. We prove membership in \textsc{NP} by showing that a witness can be checked in polynomial time. 
  Our witness will be the certificate vector $\cert\in \mathbb{N}^n$, such that $\cert_i$ represents the preference level of agent $i\in \N$ when unravelling the profile. 

  First, we check that $\cert$ abides by Definition~\ref{def:certificate}: that is, for each $i\in \N$, $\cert_i$ corresponds to a preference level in $B_i$. To do this, we need to read the certificate and the profile, taking a polynomial number of time steps. Next, we check that  $\cert$ is consistent: we first find the direct voters $B_i^{\cert_i}\in \{0,1\}$ from the certificate and the profile, and we add them to a set $D$. We construct the outcome vector $X\in \{\Delta\}^n$ and append the entry $X_i=B_i^{\cert_i}$ for these direct voters, which can be done in polynomial time. Then, we check if any necessary winners can be computed from $D$: for each agent $i\in \N\setminus D$ such that there exists a $j \in D$ such that $j \in S_i^{\cert_i}$, we check if we can compute a necessary winner of $F_i^{\cert_i}$ given $X\rest_{S_i^{\cert_i}\cap D}$. If so, we add $i \in D$ and let $X_i= F_i^{c_i}(X\rest_{S_i^{c_i}\cap D})$. 
   Since all functions in the ballots are in complete DNF, by Proposition \ref{prop:completenec} we can check for a necessary winner in polynomial time. Since at least one agent gives a direct vote, we have to check at most $n-1$ agents' functions for a necessary winner in the first `round'. If the certificate $\cert$ is consistent, at least one agent is added in each round. Therefore, we have to do at most $\sum_{k=1}^{n-1}k= \frac{(n-1)n}{2}$ polynomial checks, if a single agent is found in each round. Finally, we check in polynomial time that $\sum_{i\in\N}\cert_i\leq M$.
  All steps can be done in polynomial time, showing that $\minsumbound$ is in \textsc{NP}. 
\end{proof}

We now show that $\minsumbound$ is \textsc{NP}-hard by giving a reduction from Feedback Vertex Set (\textsc{FVS}), a problem shown by \citet{karp1972reducibility} to be \textsc{NP}-complete. The input of \textsc{FVS} is a directed irreflexive graph $G=(V,E)$ and a positive integer $k$,\footnote{The formulation by \citet{karp1972reducibility} is on directed graphs $G$ which allow for reflexive edges. However, our sub-problem is also \textsc{NP}-complete, since a reduction can be given where the constructed graph $G'$ adds a dummy node $a'$ for each node $a$ that had a reflexive edge in $G$, as well as the edges $(a,a')$ and $(a',a)$.} and it asks if there is a subset $X\subseteq V$ with $|X|\leq k$ such that, when all vertices of $X$ and their adjacent edges are deleted from $G$, the remaining graph is cycle-free.

\begin{lemma}\label{lem:optboundnphard}
$\minsumbound$ is \textsc{NP}-hard. 
\end{lemma}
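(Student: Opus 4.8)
The plan is to reduce from \textsc{FVS}: given a directed irreflexive graph $G = (V,E)$ and integer $k$, I will construct in polynomial time a $\Blang$ profile $\Prof$ and a bound $M$ such that $\Prof$ admits a consistent certificate of rank at most $M$ if and only if $G$ has a feedback vertex set of size at most $k$. The key intuition is that a ranked single-agent delegation where agent $a$'s first preference is ``copy agent $b$'' mimics a directed edge $a \to b$; a delegation cycle at the first preference level corresponds exactly to a directed cycle in $G$, and ``paying'' to move an agent to their second preference level corresponds to deleting that vertex from the graph.

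Concretely, I would take $\N = V$ plus possibly one extra ``sink'' agent. For each vertex $a \in V$, if $a$ has out-neighbours in $G$, I need $a$'s first preference level to encode ``my vote depends on all my out-neighbours''; the cleanest choice is a single-agent delegation to one designated out-neighbour, but then a feedback vertex set does not obviously match — so instead I would make $B_a^1$ a Boolean delegation whose variable set $S_a^1$ is exactly the out-neighbourhood of $a$ (e.g.\ the conjunction, or a majority-style complete-DNF formula over those variables), chosen so that: (i) it is a contingent formula in complete DNF as required by Definition~\ref{def:BOOLBallot}, and (ii) computing its necessary winner requires \emph{every} out-neighbour to already have a value — i.e.\ no ``early'' necessary winner can short-circuit the dependency. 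A conjunction $\bigwedge_{b}$ of the out-neighbour literals does not quite work (it has an early necessary winner $0$ once one neighbour is $0$), so I would use something like the formula whose complete DNF is ``all literals positive OR all literals negative'' over $S_a^1$, or add a fresh private dummy voter to each clause so that the function genuinely needs the full input; I'll pick whichever makes the necessary-winner computation genuinely blocked until all of $S_a^1$ is resolved. Each such agent $a$ then has $B_a^2 = 1$ (a direct backup vote), so moving $a$ to level $2$ ``deletes'' $a$. Sink/leaf vertices (no out-edges) simply get $B_a^1 = 1$. Finally set $M = |V| + k$: rank $|V|$ would mean everyone uses level $1$, which is impossible precisely when $G$ has a cycle, and each unit of slack lets one agent jump to level $2$.

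The two directions then go as follows. For the forward direction, suppose $F \subseteq V$ is a feedback vertex set with $|F| \le k$. Define the certificate $\cert$ by $\cert_a = 2$ for $a \in F$ and $\cert_a = 1$ otherwise. I must show $\cert$ is consistent: since $G \setminus F$ is acyclic, take a reverse topological order of $G \setminus F$; agents in $F$ are placed first (their level-$2$ entry is a direct vote), then the remaining agents in that topological order, so that when agent $a \notin F$ is processed, all of $a$'s out-neighbours that lie outside $F$ have already been processed (they come later in topological order, hence earlier in the reverse), and all out-neighbours in $F$ were processed at the very start — so the full input $X\rest_{S_a^1}$ is available and $F_a^1$ evaluates. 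Thus $\cert \in \C{\Prof}$ and $\rank(\cert) = |V| + |F| \le |V| + k = M$. Conversely, suppose $\cert \in \C{\Prof}$ with $\rank(\cert) \le M = |V| + k$. Since every $\cert_a \ge 1$, the set $F = \{a : \cert_a \ge 2\}$ has size at most $k$. I claim $G \setminus F$ is acyclic: if some cycle $a_1 \to a_2 \to \dots \to a_t \to a_1$ survived in $G \setminus F$, then every $a_j$ has $\cert_{a_j} = 1$, so each $a_j$'s value is computed via $F_{a_j}^1$, which (by the ``genuinely needs full input'' design) requires $a_{j+1}$'s value to be already determined; following the consistency ordering $\sigma$ around the cycle yields $\sigma(a_1) > \sigma(a_2) > \dots > \sigma(a_t) > \sigma(a_1)$, a contradiction. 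Hence $F$ is a feedback vertex set of size $\le k$.

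The main obstacle — and the step that needs the most care — is the gadget design ensuring that a level-$1$ Boolean delegation $F_a^1$ has \emph{no necessary winner} until \emph{all} of its input variables $S_a^1$ are assigned, while still being a \emph{contingent} formula in \emph{complete} DNF (the $\Blang$ requirement). Plain conjunctions/disjunctions fail because of early necessary winners, and an arbitrary ``needs all inputs'' Boolean function (like parity) is not expressible in polynomially-sized complete DNF. My intended fix is to give each agent $a$ a private fresh voter $p_a$ with direct vote, say, $1$, and set $F_a^1$ to be a complete-DNF formula over $S_a^1 \cup \{p_a\}$ engineered so that, given $p_a = 1$, the function reduces to something with no early necessary winner over $S_a^1$ of small size (e.g.\ a single conjunction padded so completeness is cheap, or the formula $x_1 \wedge \dots \wedge x_r$ together with $\lnot p_a$ disjuncts that make it contingent) — I would work out the exact minimal such gadget and verify its completeness and contingency explicitly; this, together with confirming that adding the $p_a$'s does not change the rank arithmetic (they always use level $1$ and contribute a fixed additive constant, absorbed into $M$), is the one genuinely fiddly part of the argument. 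Everything else — polynomial-time constructibility, the topological-order arguments, and the rank bookkeeping — is routine. Combined with Lemma~\ref{lem:optboundmembership}, this establishes \textsc{NP}-completeness of $\minsumbound$.
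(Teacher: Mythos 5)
Your reduction is structurally the same as the paper's: reduce from \textsc{FVS} with agents $=$ vertices, a first-level Boolean delegation over each vertex's out-neighbourhood $O_v$, backup vote $1$, and $M=|V|+k$; and both directions of your equivalence (feedback set $\leftrightarrow$ set of agents pushed to level $2$, acyclicity $\leftrightarrow$ existence of a consistency ordering at level $1$) are argued correctly \emph{modulo} one point. That point is the gap, and it is the one you flag yourself: you never actually fix the delegation formula, deferring the ``no necessary winner until all inputs are assigned'' gadget to future verification. Since your backward direction (a surviving cycle forces $\sigma(a_1)>\dots>\sigma(a_t)>\sigma(a_1)$) rests entirely on that unproved property of an unspecified formula, the proposal as written is not a complete proof.

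What you missed is that no gadget is needed: the plain conjunction $\bigwedge_{u\in O_v} u$, which you dismissed, already works. A single cube of positive literals is contingent and is its own unique prime implicant, hence a legal complete-DNF delegation in $\Blang$; and the ``early necessary winner $0$'' you worry about can never occur, because in this construction every backup vote is $1$, so by induction along any consistency ordering every agent's value is $1$ --- no input to any conjunction is ever $0$, and the conjunction acquires a necessary winner only once \emph{all} out-neighbours have been assigned. This is exactly the observation the paper's proof uses. (Your fallback ``all literals positive or all literals negative'' formula would in fact also satisfy the requirements --- it is contingent, its complete DNF consists of the two full cubes, and under all-$1$ partial assignments it has no necessary winner until every variable is set --- but you would still need to verify this explicitly, and it is simply unnecessary.) With the conjunction in place, your forward direction (process the feedback set first at level $2$, then the remaining agents in an order in which out-neighbours are resolved earlier) and your backward direction go through essentially verbatim and coincide with the paper's argument.
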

\begin{proof}
Recall that $\minsumbound$ is defined on the language of complete DNFs. We prove the claim by reducing from Feedback Vertex Set (\textsc{FVS}). Given an instance $(G, k)$ of \textsc{FVS}, let an instance of $\minsumbound$ be such that $\N=V$, $M=|V|+k$, and for each vertex-agent $v\in V$ their ballot $B_v$ is constructed as follows, for $O_v = \{ u \in V \mid (v,u) \in E \}$ the set of outgoing edges of vertex $v$ in $G$:
\[
B_v= (O_v, \bigwedge_{u \in O_v} u)>1.
\]

 The first delegation of each agent $v$ is a conjunction of positive literals (hence, a complete DNF), each representing one of the outgoing edges from $v$ in graph $G$. Then, the backup vote for $1$ represents the removal of the vertex $v$ in the FVS problem. For the agents $v\in V $ without any outgoing edges ($O_v=\emptyset$), their ballot is $B_v=1$.

To show the correctness of our reduction, we first prove the following claim: a graph $G$ is acyclic if and only if  $\cert=\{1\}^n$ is a consistent certificate for the profile $\Prof$ given by the translation above.

    For the left-to-right direction, we prove the contrapositive: assume that the certificate $\cert=\{1\}^n$ is not consistent for $\Prof$.  Therefore, there exists no ordering of the agents such that all their votes can be added by using the votes added previously following the ordering. This means that there exists a delegation cycle between the formulas at the first preference level of some agents in $\Prof$, as at least two agents require each others' votes to determine their own. Since by construction of $\Prof$, the literals in the formulas represent the outgoing edges in $G$, the graph $G$ is not acyclic. 

    For the right-to-left direction, let $\cert=\{1\}^n$ be a consistent certificate for $\Prof$. 
    First, note that all nodes in $G$ representing non-delegating agents in $\Prof$ have no outgoing edges.  Second, for each delegating agent $v\in V$, since they can only possibly be assigned a vote for $1$ (their backup direct vote), the truth assignment to the formula $\bigwedge_{u \in O_v} u$ can only be determined when all agents in the delegation have been assigned a vote. Hence, there can be no delegation cycles within the first preferences, as this would entail that a second preference must be used. Thus, every maximal path in $G$ starting from a node $v$ ends in a node without any outgoing edges (a node representing a direct voter). Therefore, $G$ is acyclic.
    
    We now prove the reduction using the previous claim. 
    First, assume that there exists a subset $X$ such that $|X|\leq k$ and the resulting graph has no cycles: we want to show that $\rank(\Prof)\leq M= |V|+k$. If all of the agents in $X$ receive their second preference, then all of the agents in $\N\setminus X$ get their first preference. Since this subset is acyclic, it is also consistent (given our claim above), and the addition of direct voters does not impact the consistency of a certificate. The rank of this unravelling is $|V|+|X|\leq |V|+k$ and therefore, $\rank(\Prof)\leq M= |V|+k$.


	


	Next, we show that if $\rank(\Prof)\leq M= |V|+k$, then there exists a subset $X$ such that $|X|\leq k$ and the remainder of $G$ without $X$ is cycle-free. We let $\cert$ be the certificate of unravelling $\Prof$ such that the rank is less than or equal to $|V|+k$.  From $\cert$, we build $X=\{u\mid \rank(\cert_u)=2\}$.  We remove the agents in $X$ from the profile, both their ballots and any mention of them in delegations. Note that since $\rank(\Prof)\leq M= |V|+k$, it must be the case that $|X|\leq k$. Thus, the restriction of the certificate to $v\in\N\setminus X$ must be such that $\cert_v=1$. We can now use the claim above to state that the resulting graph with nodes $V\setminus X$ is acyclic.

	
Therefore, $\minsumbound$ is \textsc{NP}-hard. 
\end{proof}

Lemmas~\ref{lem:optboundmembership} and~\ref{lem:optboundnphard} together give us the following theorem.

\begin{theorem}\label{thm:minsumNPcomp}
  $\minsumbound$ is \textsc{NP}-complete.
\end{theorem}
\begin{remark}
The reduction in the proof of Lemma~\ref{lem:optboundnphard} does not use negated literals in the ballots: thus, $\minsumbound$ would still be \textsc{NP}-complete if we were to further restrict $\Blang$ to contingent complete DNF formulas with only positive literals.
\end{remark}


The proof of our next result uses \emph{Edmonds' algorithm} \cite{edmonds1967optimum}.\footnote{Also independently suggested by \citet{chu1965shortest} and \citet{bock1971algorithm}.} This algorithm finds, for a given weighted directed graph, a minimum \emph{arborescence tree}, i.e., a directed rooted tree minimising the sum of the weights of the edges in the tree.\footnote{For undirected graphs, the corresponding problem is that of finding a minimum spanning tree.} 

Edmond's algorithm takes as input a (pre-processed) weighted directed graph $D=(V,E,w)$ and a root $r\in V$, where $V$ is a set of vertices (or nodes), $E$ is a set of edges, and $w$ is a vector of the edges' weights. 
At each step the algorithm picks a vertex $v\in V\setminus \{r\}$ that does not have yet an incoming edge in the arborescence tree and it adds an incoming edge of this vertex having minimum weight. After each edge has been added, the algorithm checks if a cycle has formed: if that is the case, the nodes involved in the cycle are \emph{contracted} to a single node $v_C$ creating a new directed graph $D'$. The algorithm continues until the contracted graph is a directed spanning tree, and then all of the contractions are expanded. 

The contraction of cycles is performed as follows. Given a set of nodes $C$ involved in a cycle, we let $V'=(V\setminus C)\cup\{v_C\}$, for $v_C$ a new node. In case $e_{uv}\in E$ for $u\notin C$ and $v\in C$, we let $e_{uv_C}\in E'$ such that $w(e_{uv_C})= w(e_{uv})-w(e_{wv})$ where $e_{wv}$ is the lowest weighted incoming edge of $v$ (the weight of $e_{uv_C}$ corresponds to the incoming weight to the cycle, minus the lowest weighted incoming weight to node $v$ in the cycle). In case $e_{vu}\in E$ for $v\in C$ and $u\notin C$, we let $e_{v_C u}\in E'$ with $w(e_{v_C u})=w(e_{vu})$. All edges whose nodes are not in the cycle $C$ remain unchanged.

\begin{theorem}\label{thm:optcomplex}
    An outcome in $\minsum(\Prof)$ on a profile $\Prof$ in $\Liqlang$ can be found in $\mathcal{O}(n(d+n))$ time, where $d$ represents the number of delegations in $\Prof$.
\end{theorem}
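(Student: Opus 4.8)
The goal is to compute a \minsum\ outcome for a $\Liqlang$ profile $\Prof$ by reducing the problem to finding a minimum-weight arborescence, which Edmonds' algorithm solves in polynomial time. First I would build a weighted directed graph $D = (V, E, w)$ from $\Prof$. The vertex set will be $\N$ together with one extra ``root'' vertex $r$ representing ``casting a direct vote''. For each agent $a$ and each delegating preference level $B_a^h = (\{b\}, id)$, add an edge $(a, b)$ (or $(b,a)$, depending on the orientation convention one fixes for arborescences) with weight $h$; for the backup direct vote $B_a^{k_a} \in \Dom$, add an edge from $a$ to the root $r$ with weight $k_a$. Every agent that votes directly at some level thus gets an edge to $r$; every agent has at least one outgoing (resp.\ incoming) edge. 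The crucial observation is that a consistent certificate $\cert$ corresponds exactly to a choice of one outgoing edge per non-root vertex such that the resulting subgraph, rooted at $r$, is an arborescence: acyclicity is precisely the condition that lets us order the agents as in Definition~\ref{def:ConsistentCert}, and the edge into $r$ marks the agents who use a direct vote. Under this correspondence, the total weight of the arborescence equals $\sum_{a \in \N} \cert_a = \rank(\cert)$, so a minimum arborescence rooted at $r$ yields a \minsum\ certificate.

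**Key steps, in order.** (1) Describe the construction of $D$ precisely and note it has $n+1$ vertices and $\mathcal{O}(d + n)$ edges (one edge per delegation, one per backup vote), so it can be built in $\mathcal{O}(d+n)$ time. (2) Prove the bijection between consistent certificates of $\Prof$ and spanning arborescences of $D$ rooted at $r$: given a consistent certificate, the chosen preference levels pick out one outgoing edge per vertex; consistency (the existence of the ordering $\boldsymbol{\sigma}$) forces the absence of directed cycles among the delegation edges, and together with the reachability of $r$ this is exactly an arborescence. Conversely, an arborescence rooted at $r$ can be ``peeled'' from the leaves, giving an ordering that witnesses consistency of the induced certificate; here I would invoke the fact that in $\Liqlang$ every delegation is the identity function, so a necessary winner is simply the delegatee's vote and is always computable once that vote is set. (3) Observe weight equals rank, so $\argmin$ over arborescences gives $\argmin$ over consistent certificates, i.e.\ a \minsum\ certificate, from which $X_\cert$ is read off in $\mathcal{O}(n)$ time by following the ordering. (4) Cite that Edmonds' algorithm runs in $\mathcal{O}(|V||E|) = \mathcal{O}(n(d+n))$ time on a graph with $\mathcal{O}(n)$ vertices and $\mathcal{O}(d+n)$ edges (the $\mathcal{O}(VE)$ bound being the classical one matching the claimed running time), using the contraction/expansion description recalled before the theorem statement; add the $\mathcal{O}(d+n)$ construction cost and the $\mathcal{O}(n)$ extraction cost, all dominated by $\mathcal{O}(n(d+n))$.

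**Main obstacle.** The routine part is the graph construction and the complexity bookkeeping; the delicate part is getting the bijection exactly right, in particular handling the orientation convention and the root vertex cleanly. One subtlety is that a consistent certificate need not use \emph{every} agent's delegation in a ``tree-like'' way a priori---one must check that the induced edge selection really is a spanning arborescence and not, say, a subgraph missing some vertices or containing a vertex with no path to $r$; this is where consistency of the certificate (Definition~\ref{def:ConsistentCert}) does the work, since an agent whose induced edges never reach a direct voter could never have its vote assigned. A second point to handle carefully is that $\Liqlang$ ballots can repeat nothing by validity (condition $(i)$ of Definition~\ref{def:validsmartballot} plus $id$ being the only function forces distinct delegatees at distinct levels), so the multiset of edges out of each vertex is genuinely a set and the weights $1, 2, \dots$ are assigned to distinct edges---this is needed so that ``pick the minimum-weight incoming edge'' in Edmonds' algorithm corresponds to ``pick the most-preferred available delegation.'' I would also remark that \minsum\ may be irresolute, but Edmonds' algorithm returns one optimal arborescence, which suffices for ``an outcome in $\minsum(\Prof)$.''
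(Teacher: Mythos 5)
Your proposal is correct and takes essentially the same approach as the paper: the paper's proof also builds the graph on $\N\cup\{r\}$ with delegation and backup-vote edges weighted by preference level and applies Edmonds' algorithm, obtaining the $\mathcal{O}((n+1)(d+n))=\mathcal{O}(n(d+n))$ bound. Your explicit argument for the correspondence between consistent certificates and spanning arborescences rooted at $r$ (and the weight-equals-rank observation) is in fact spelled out in more detail than in the paper's proof, which states this correspondence only briefly.
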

\begin{proof}
The proof idea is to create a graph on which to apply Edmonds' algorithm~\cite{edmonds1967optimum}. For a profile $\Prof$ of $\Liqlang$ ballots, we construct a directed graph $D= (V,E)$, where $V=\N\cup\{r\}$ for a fresh node $r$. For the edges in $E$, we let $e_{ji} \in E$ if $B_i^k =  (\{j\},j)$ for some $k$: i.e., we add an edge from $j$ to $i$ if $i$ was delegating to $j$ at $i$'s $k^\text{th}$ preference level. Furthermore, we add an edge $e_{ri} \in E$ for all $i \in \N$, representing the final direct vote of each voter. The weight of each edge is always given by the preference level of the delegation: if $B_i^k\in\Dom$ then $w(e_{ri})=k$, and if $B_i^k= (\{j\}, j)$ then $w(e_{ji})=k$.

Edmonds' algorithm returns the arborescence tree $A=(V, E')$ rooted at $r$ in time $\mathcal{O}(|V|\times|E|)$, minimising its weight $w(A)= \sum_{e_{ij}\in E'} w(e_{ij})$. Note that by applying Edmonds' algorithm to the graph $D$ above, we find an unravelling of $\Prof$, whose certificate vector $\cert$ minimises $\rank(\cert)$. Moreover, since it returns a tree which includes every node, there are no delegation cycles and every agent has one of their preference levels used: hence, the unravelling is consistent. 

  Thus, we can find a solution of $\minsum$ in $\mathcal{O}(|V|\times|E|) = \mathcal{O}((n+1)\times(d+n))$ time steps, since $|V|=n+1$ (all the agents plus the vertex $r$), and $|E|=d+n$, where $d$ represents the number of delegations in $\Prof$. To simplify the bound, this can be done in $\mathcal{O}(n(d+n))$ time steps.
\end{proof}

We can thus find an optimal unravelling of a $\Liqlang$ smart profile in a polynomial number of time steps. Furthermore, as the Edmonds' algorithm is recursive, we are guaranteed that it will terminate giving an optimal unravelling, provided that there is some tie-breaking rule when there are many optimal unravellings. We now illustrate in an example the application of Edmonds' algorithm in the proof of Theorem~\ref{thm:optcomplex}.


\begin{example}\label{ex:EdmondsAlgoLiquidopt}
We show the application of Edmonds' algorithm in the proof of Theorem~\ref{thm:optcomplex} to get a $\minsum$ outcome associated to profile $\Prof$ in Table~\ref{table:exampleEdmondsAlg}. The directed graph $D=(V,E,w)$ has nodes $V=\{a,b,c,d,e,r\}$, edges $E=\{(ra)$, $ (cb)$, $(ab)$, $(rb)$, $(dc)$, $(ec)$, $(rc)$, $(bd)$, $(ed)$, $(rd)$, $(re)\}$, and weights $w(ra) = w(cb) = w(dc) = w(bd) = w(re) = 1$, $w(ab) = w(ec) = w(ed)= 2$, and $w(rb) = w(rc) = w(rd) = 3$.
The graph $D$ is shown on the left of Figure~\ref{fig:directgraphs}, with solid, dashed, and dotted lines representing first, second, and third preference levels, respectively.

\begin{table}[h]
\centering
 \begin{tabular}{llll} 
 \hline\noalign{\smallskip}
&$B_x^1$&$B_x^2$&$B_x^3$\\
 \noalign{\smallskip}\hline\noalign{\smallskip} 
$a$&$1$&-&-\\
$b$&$(\{c\},c)$&$(\{a\},a)$&$*$\\
$c$&$(\{d\},d)$&$(\{e\},e)$&$*$\\
$d$&$(\{b\},b)$&$(\{e\},e)$&$*$\\
$e$&$0$&-&-\\
\noalign{\smallskip}\hline
\end{tabular}
  \caption{The $\Liqlang$ profile that in Example~\ref{ex:EdmondsAlgoLiquidopt} is unravelled via Edmonds' algorithm, and in  Example~\ref{ex:Liquidminmax} by the algorithm from Theorem~\ref{thm:minmaxliquid}.}
  \label{table:exampleEdmondsAlg}
\end{table}

	In Figure~\ref{fig:directgraphs}, we see at the bottom of $D$ that there is a cycle among the nodes $b,c$ and $d$, among the edges representing the first preference levels. Edmonds' algorithm contracts this cycle to a single vertex $v$, creating a second directed graph $D'=(V',E',w)$, in the centre of Figure~\ref{fig:directgraphs}. The nodes of $D'$ are $V'=\{a,v,e,r\}$; while for the edges $E'$, we keep $(ra)$ and $(re)$ but we alter the edges coming into and out of the cycle. However, note that there are only incoming edges to the cycle: $(ab), (rb), (rd), (rc), (ed), (ec)$ and thus we add to $E'$ only edges coming into $v$, taking into account the lowest weighted incoming edge to each node in the cycle.

For the edge $(rb) \in E$, we thus have an edge $(rv)\in E'$ whose weight is computed as $w(rv) = w(rb) - w(xb)$ where $w(xb)$ is the weight of the lowest incoming edge of $b$, e.g., $(cb)$, which has weight $w(cb) = 1$. Thus, $w(rv) = 3 - 1 = 2$, and analogously for $(rc)$ and $(rd)$. For the edge $(ed) \in E$, we have an edge $(ev)\in E'$ whose weight is $w(ev) = w(ed) - w(xd) = 2 - 1 = 1$, and similarly for $(ec)$. Finally, for $(ab) \in E$, we have an edge $(av)\in E'$ with weight $w(av) = w(ab) - w(xb) = 1$.

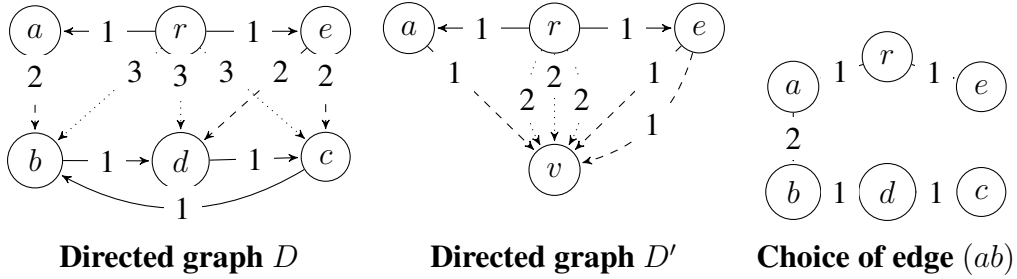
\begin{figure}[t]
\begin{center}
\begin{tabular}{ccc}
	\begin{tikzpicture}[node distance=1.2cm]

  \node[draw, circle]     (r)                        {$r$};
     \node[draw, circle] (f) [right= 1.25cm of r]   {$e$};
  \node[draw, circle]     (a) [left= 1.25cm of r]      {$a$};
  \node[draw, circle]      (b) [below= 1cm of a]      {$b$};
  \node[draw, circle]       (d) [below= 1cm of r]      {$d$};
  \node[draw, circle]             (c) [below= 1cm of f]      {$c$};

\draw[->, >=stealth',shorten >=1pt] 
(r) -- (a) node [midway,fill=white] {1} ;
\draw[->, >=stealth',shorten >=1pt]
(r) -- (f) node [midway,fill=white] {1} ;
\draw[->, >=stealth',shorten >=1pt, dashed]
(a) -- (b) node [near start,fill=white] {2} ;
\draw[->, >=stealth',shorten >=1pt]
(b) -- (d) node [midway,fill=white] {1} ;
\draw[->, >=stealth',shorten >=1pt]
(d) -- (c) node [midway,fill=white] {1} ;
\draw[->, >=stealth',shorten >=1pt, dotted]
(r) -- (b) node [near start,fill=white] {3} ;
\draw[->, >=stealth',shorten >=1pt, dotted]
(r) -- (c) node [near start, fill=white] {3} ;
\draw[->, >=stealth',shorten >=1pt, dotted]
(r) -- (d) node [near start,fill=white] {3} ;
\draw[->, >=stealth',shorten >=1pt, dashed]
(f) -- (d) node [near start,fill=white] {2} ;
\draw[->, >=stealth',shorten >=1pt, dashed]
(f) -- (c) node [near start,fill=white] {2} ;
\draw[->, >=stealth',shorten >=1pt]
(c) to [bend left] node [midway,fill=white] {1} (b)  ;
\end{tikzpicture}
& 
\begin{tikzpicture}[node distance=1.2cm]
  \node[draw, circle]     (r)      {$r$};
     \node[draw, circle]     (f) [right= 1.25cm of r]    {$e$};
  \node[draw, circle]     (a) [left= 1.25cm of r]      {$a$};
   \node[draw, circle]       (v) [below= 1.2cm of r]      {$v$};
   \node (x) [below=0.15 of v] {};
\draw[->, >=stealth',shorten >=1pt]
(r) -- (a) node [midway,fill=white] {1} ;
\draw[->, >=stealth',shorten >=1pt]
(r) -- (f) node [midway,fill=white] {1} ;
\draw[->, >=stealth',shorten >=1pt, dashed]
(a) -- (v) node [near start,fill=white] {1} ;
\draw[->, >=stealth',shorten >=1pt, dashed]
(f) -- (v) node [near start,fill=white] {1} ;
\draw[->, >=stealth',shorten >=1pt,dotted]
(r) to [bend left] node [midway,fill=white] {2} (v)  ;
\draw[->, >=stealth',shorten >=1pt,dotted]
(r) to [bend right] node [midway,fill=white] {2} (v)  ;
\draw[->, >=stealth',shorten >=1pt,dashed]
(f) to [bend left] node [midway,fill=white] {1} (v)  ;
\draw[->, >=stealth',shorten >=1pt, dotted]
(r) -- (v) node [near start,fill=white] {2} ;
\end{tikzpicture}
&
\begin{tikzpicture}
  \node[draw, circle]     (r)  at (0,-1.35)      {$r$};
  \node[draw, circle]     (a) at (-1.25, -1.75)  {$a$};
  \node[draw, circle]     (f) at (1.25, -1.75)     {$e$};
  \node[draw, circle]      (b) at (-1.25, -3.15)       {$b$};
  \node[draw, circle]      (d) at (0, -3.15)     {$d$};
  \node[draw, circle]      (c) at (1.25, -3.15)      {$c$};
  \node (x) at (0, -3.5) {};
   \draw[->, >=stealth',shorten >=1pt]
(r) -- (a) node [midway,fill=white] {1} ;
   \draw[->, >=stealth',shorten >=1pt]
(r) -- (f) node [midway,fill=white] {1} ;
   \draw[->, >=stealth',shorten >=1pt, dashed]
(a) -- (b) node [midway,fill=white] {2} ;
   \draw[->, >=stealth',shorten >=1pt]
(b) -- (d) node [midway,fill=white] {1} ;
   \draw[->, >=stealth',shorten >=1pt]
(d) -- (c) node [midway,fill=white] {1} ;
\end{tikzpicture}
\\
\textbf{Directed graph $D$}& \textbf{Directed graph $D'$}&\textbf{Choice of edge $(ab)$}\\
\end{tabular}
\end{center}
\caption{Three stages of unravelling the $\Liqlang$ profile $\Prof$ from Table~\ref{table:exampleEdmondsAlg} by using Edmonds' algorithm. The directed graph $D$ (left) represents the initial profile. In $D'$ (centre), the nodes $b,c$ and $d$ are contracted into $v$, as they were in a cycle in $D$. The arborescence tree (right) is the output where the edge $(ab)$ was chosen to break the tie, and it corresponds to an outcome of $\minsum$ on $\Prof$.
}\label{fig:directgraphs}
\end{figure}

Since there are no cycles in $D'$, we can find an arborescence tree of $D'$ rooted at $r$ with edges $(ra),(re)$ and then either $(av)$ or $(ev)$, as they both have the lowest weight of $1$. Suppose that $(av)$ is chosen: this represents the delegation from $b$ to $a$ with weight $2$ and in the arborescence tree $a$ will be followed by $b$---this unravelling in shown on the right hand-side of Figure~\ref{fig:directgraphs}. From here the unravelling continues, until all of the vertices of the cycle have been chosen; giving the edges $(bd)$ and $(dc)$. 
Alternatively, the algorithm could have chosen the edges $(ec)$ or $(ed)$ instead of $(ab)$: all of these unravellings are optimal, with a total weight of $6$. 

\end{example}

\subsection{Computational complexity of $\minmax$}\label{subsec:compminmax}

We study here the computational complexity of the $\minmax$ rule, showing that like $\minsum$: it is \textsc{NP}-hard for the language $\Blang$, and tractable for $\Liqlang$.
We begin by studying the problem $\minmaxbound$, which takes as input a valid smart profile $\Prof$ restricted to $\Blang$ as well as a constant $M$, and it asks whether there is an unravelling given by a certificate $\cert$ such that $\max_{a\in \N}(\cert)\leq M$. We first show membership in \textsc{NP} and then \textsc{NP}-hardness. 

\begin{lemma}\label{lem:minmaxboundmembership}
  $\minmaxbound$ is in \textsc{NP}.
\end{lemma}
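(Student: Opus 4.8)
The plan is to mirror the structure of the membership proof for $\minsumbound$ (Lemma~\ref{lem:optboundmembership}), since the only difference between $\minmaxbound$ and $\minsumbound$ is the aggregation we apply to the preference levels of the certificate. So the witness will again be a certificate vector $\cert \in \mathbb{N}^n$, where $\cert_a$ records the preference level used for agent $a$ when unravelling $\Prof$. We need to argue that verifying such a witness takes polynomial time.

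First I would check that $\cert$ conforms to Definition~\ref{def:certificate}: for each $a \in \N$, we read $B_a$ and confirm that $\cert_a$ lies in $[1,k_a]$; this is a single pass over the profile and the certificate, hence polynomial. Second, I would verify that $\cert$ is \emph{consistent}, reusing verbatim the procedure from Lemma~\ref{lem:optboundmembership}: initialise $X = \{\Delta\}^n$, collect into a set $D$ all agents $a$ with $B_a^{\cert_a} \in \Dom$ (the direct voters at their chosen level) and set their entries in $X$; then repeatedly look for an agent $a \notin D$ whose function $F_a^{\cert_a}$ has a necessary winner given $X\rest_{S_a^{\cert_a} \cap D}$, add $a$ to $D$, and record its vote. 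Since all delegation functions in a $\Blang$ ballot are contingent complete DNFs, Proposition~\ref{prop:completenec} guarantees each necessary-winner check is polynomial; each round adds at least one agent to $D$ when $\cert$ is consistent, so at most $\binom{n}{2}$ such checks suffice, and if after exhausting this process $D \neq \N$ then $\cert$ is not consistent and we reject. Finally, instead of checking $\sum_{a \in \N} \cert_a \le M$, I would check $\max_{a \in \N} \cert_a \le M$, which is again a single linear scan over $\cert$.

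Since every step runs in polynomial time, a correct witness can be verified efficiently, establishing that $\minmaxbound \in \textsc{NP}$.

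I do not anticipate a genuine obstacle here: the proof is essentially identical to that of Lemma~\ref{lem:optboundmembership}, with the sum replaced by a maximum in the final arithmetic check. The only point worth stating carefully is that consistency of a certificate is an intrinsic property of $\cert$ and $\Prof$ — independent of which objective ($\rank$ or $\max$) we later optimise — so the same polynomial-time consistency verifier applies unchanged, and the bound $\mathcal{O}(n^2)$ on the number of necessary-winner checks carries over directly.
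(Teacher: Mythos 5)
Your proof is correct and follows essentially the same route as the paper: the paper's own argument also takes the certificate $\cert$ as the witness, reuses the consistency check from Lemma~\ref{lem:optboundmembership} (relying on Proposition~\ref{prop:completenec} for polynomial-time necessary-winner computation), and replaces the final sum check with verifying that every entry of $\cert$ is at most $M$. No gaps to report.
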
 
\begin{proof}
Recall that $\minmaxbound$ is defined on $\Blang$ profiles. To prove membership in \textsc{NP} we can check in polynomial time that a certificate vector $\cert$ abides by Definition~\ref{def:certificate} and is consistent, as we did for Lemma~\ref{lem:optboundmembership}.
  Then, we need to check that all entries in the certificate are less than or equal to the constant $M$, which can be done in polynomial time. 
\end{proof}

\begin{lemma}\label{lem:minmaxboundhard}
  $\minmaxbound$ is \textsc{NP}-hard.
\end{lemma}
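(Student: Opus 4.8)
The plan is to prove \textsc{NP}-hardness of $\minmaxbound$ by a reduction from \textsc{FVS}, exactly as was done for $\minsumbound$ in Lemma~\ref{lem:optboundnphard}, but with a twist to convert the cardinality constraint of \textsc{FVS} into a \emph{bound on the maximum preference level} rather than on the sum. The difficulty is that in the $\minsumbound$ reduction every vertex-agent had a two-level ballot, so a small feedback vertex set $X$ translated directly into a small rank; but for $\minmax$, having even a single agent fall back to its second preference already forces $\max(\cert) \ge 2$, so that reduction would only show hardness for the trivial bound $M=1$. Instead I would set the target bound to $M=2$ and engineer the ballots so that, in order to achieve $\max(\cert) \le 2$, one is forced to break all cycles using \emph{at most $k$} of the ``expensive'' agents' backup votes, while every other agent can be resolved at level $1$.

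Concretely, I would take an instance $(G,k)$ of \textsc{FVS} with $G = (V,E)$ and build a $\Blang$ profile as follows. For each vertex $v \in V$ create a vertex-agent $v$ whose first preference is the conjunction $\bigwedge_{u \in O_v} u$ over its out-neighbours (a complete DNF, as in Lemma~\ref{lem:optboundnphard}); but now instead of a direct backup vote at level $2$, give $v$ a second-level delegation to a fresh ``selector'' gadget, and a direct backup vote at level $3$. The selector gadget is designed so that it can supply a determinate vote at level $2$ to at most $k$ of the vertex-agents simultaneously — for instance by using $k$ auxiliary direct-voter agents and a layer of agents whose level-$1$ delegations implement a ``choose $k$ of $|V|$'' bottleneck — so that at most $k$ vertex-agents can be resolved at preference level $2$ and any remaining cycle would force some agent up to level $3$, violating $\max(\cert)\le 2$. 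I would then argue: $G$ has a feedback vertex set of size $\le k$ iff the constructed profile admits a consistent certificate with $\max(\cert)\le 2$, using the same acyclicity-of-first-preferences claim from Lemma~\ref{lem:optboundnphard} (the level-$1$ conjunctive delegations of the \emph{non-selected} vertices form an acyclic subgraph exactly when those vertices induce a cycle-free subgraph of $G$) and checking that the gadget indeed caps the number of level-$2$ resolutions at $k$ without introducing new cycles.

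The key steps, in order, would be: (i) describe the gadget construction precisely and verify the profile is valid and in $\Blang$ (all delegation functions must be contingent complete DNFs — the conjunctions are fine, and the gadget functions must be chosen accordingly, e.g.\ identity delegations or simple conjunctions); (ii) prove the forward direction: given a feedback vertex set $X$ with $|X|\le k$, route the agents in $X$ through the selector at level $2$, everyone in $V\setminus X$ at level $1$ (consistent by the acyclicity claim), and everything in the gadget at level $1$, yielding $\max(\cert)\le 2$; (iii) prove the backward direction: from a consistent certificate with $\max(\cert)\le 2$, observe no vertex-agent uses level $3$, so the set $X$ of vertex-agents using level $2$ satisfies $|X|\le k$ (forced by the gadget's capacity) and $V\setminus X$ induces an acyclic subgraph of $G$, hence $X$ is a feedback vertex set. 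Combined with Lemma~\ref{lem:minmaxboundmembership}, this gives \textsc{NP}-completeness of $\minmaxbound$.

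The main obstacle I anticipate is designing the selector gadget so that its \emph{own} internal delegations are (a) always resolvable at level $1$ regardless of which $\le k$ vertex-agents it must serve, (b) genuinely cap the number of simultaneously-served vertex-agents at $k$ — a global counting constraint, which is not obviously expressible by purely local Boolean delegations — and (c) never create a delegation cycle that propagates back into the vertex-agents. If a clean ``choose-$k$'' gadget proves awkward, an alternative I would fall back on is a reduction directly from a bounded-degree or weighted variant, or simply reusing the $\minsumbound$ construction with $M$ chosen so that $\minsumbound$ and $\minmaxbound$ coincide on that instance class; but the gadget approach is the cleanest route to hardness for a fixed small bound, so I would attempt that first.
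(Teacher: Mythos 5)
There is a genuine gap, and it sits exactly where you flagged it: the ``choose-$k$'' selector gadget cannot be realised in this model. First, votes are never consumed: once the auxiliary direct voters in your gadget have determined votes, \emph{every} vertex-agent whose level-$2$ delegation reads those votes can compute a necessary winner, so a capacity bottleneck of $k$ simply does not arise---nothing in the semantics limits how many agents may use a given agent's vote. Second, delegation functions are Boolean functions of the agents' \emph{votes}, not of which preference level they used; in the \textsc{FVS}-style construction every vertex-agent ends up voting $1$ whether it is resolved at level $1$ or at its backup, so there is no signal in the outcome vector for a gadget to count. Third, even if you redesigned the ballots so that using level $2$ flips an agent's vote, enforcing ``at most $k$ of these $|V|$ votes are flipped'' is a threshold constraint whose complete DNF has exponentially many prime implicants, so the reduction would not be polynomial (and $k$ is part of the input). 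Your fallback also fails: on the Lemma~\ref{lem:optboundnphard} profiles every ballot has two levels, so $\minmaxbound$ with $M=2$ is trivially a yes-instance (everyone takes the backup vote $1$) and with $M=1$ it just asks whether $G$ is acyclic, which is polynomial; there is no choice of $M$ on which the two problems coincide nontrivially.

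The paper sidesteps the counting issue entirely by reducing from \textsc{CNF-Sat} with the fixed bound $M=2$. A direct voter $x$ votes $1$; each variable agent has ballot $((\{x\},x)>0)$, so its choice between level $1$ and level $2$ encodes a truth value; each clause agent $c$ has $((\{y\},y)>(\{y\}\cup\Var{c},\,y\vee\bigvee_{l\in c}l)>1)$, whose second level is determinable without $y$'s vote exactly when some literal of $c$ is true; and agent $y$ delegates first to $x\wedge\bigwedge_{c\in C}c$. The cycle between $y$ and the clause agents forces the clause agents onto their second level, and that level can be resolved (keeping $\max(\cert)\le 2$) iff the assignment encoded by the variable agents satisfies every clause. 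In short, the hardness is extracted from satisfiability of per-clause local conditions rather than from a global cardinality bound, which is what makes the reduction expressible with contingent complete-DNF delegations; your approach as proposed would fail at step (i) of your own plan.
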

\begin{proof}
Recall that $\minmaxbound$ is defined on the language of $\Blang$ where delegations are expressed as complete DNFs.    We reduce from the \textsc{NP}-complete problem \textsc{CNF-Sat} which has as input a formula $\varphi$ in CNF---i.e., a conjunction of \emph{clauses} (disjunctions of literals)---and it asks if there exists a satisfying assignment for $\varphi$.   

For a given formula $\varphi$ in CNF, let $C=\{c\mid c\text{ is a clause of }\varphi\}$ be a set of variables $c$, each one representing one of the clauses of $\varphi$. Construct now an instance of $\minsumbound$ where $M=2$ and the set of agents is $\N=\{x,y\}\cup C\cup \Var{\varphi}$, for $x$ and $y$ fresh variables. The ballots are defined as follows:

    \begin{itemize}
        \item $B_x=(1)$,
        \item $B_v=((\{x\},x)>0)$ for all $v\in\Var{\varphi}$,
        \item $B_y=((\{x\}\cup C, x\wedge\bigwedge_{c\in C}c)>(C, \bigwedge_{c\in C}c)> 1)$,
        \item $B_c= ((\{y\},y)> (\{y\}\cup\Var{c},y\vee \bigvee_{l\in c}l)> 1)$ for all $c\in C$, \\ where $l\in c$ represents the literal $l$ of clause $c$, and if $c$ contains a variable and its negation (i.e., $c$ is a tautology), we remove the second delegation. 
    \end{itemize}
    
    Note that each delegation is a complete DNF since it is either a cube or a clause. 
    
    Assume that $\varphi$ is satisfiable. Then, each agent $v\in \Var{\varphi}$ gets their first preference if in the satisfying truth assignment of $\varphi$ the variable $v$ is true ($\cert_v=1$), and their second preference if $v$ is false ($\cert_v=2$). The satisfying truth assignment of $\varphi$ makes every clause $c\in C$ true, and therefore one literal in $y\vee\bigvee_{l\in c}l$ is made true, making the whole formula true. Thus, agents $c\in C$ cannot receive higher than their second preference ($\cert_c\leq 2$). Agent $y$ can receive their first preference ($\cert_y=1$), given that we can determine the vote of each $c\in C$. Finally, agent $x$ receives their first preference ($\cert_x=1$). Therefore, if $\varphi$ is satisfiable, there is an unravelling such that every agent receives at most their second preference level in the ballot. 
    
    Next, assume that $\varphi$ is not satisfiable. For a contradiction, assume that there exists an unravelling with certificate $\cert$ such that $\max(\cert)\leq 2$. Since $\cert_x=1$ and for all $v \in \Var{\varphi}$ we have $\cert_v\leq 2$, it must be the case that either $\cert_y=1$ or $\cert_y=2$. 
    
    If $\cert_y=1$ then either all clauses $c\in C$ evaluate to $1$  or there exists a $c\in C$ whose vote is $0$. In the latter case, this has to come from $c$'s first or second preference. It cannot be $c$'s first delegation, as in this scenario $y$'s vote is determined by $c$'s and thus the unravelling would not be consistent. Their second preference can only be $0$ if all of the literals of $c$ and $y$ are false, which without the vote of $y$ cannot be determined: thus, we have reached a contradiction.  However, if the votes of all of $c\in C$ are $1$ this either means that each $c\in C$ can be made true (using the second preference delegation) and therefore $\varphi$ is satisfiable, or the third preference of $c\in C$ have been used and therefore, $\max(\cert)> 2$. In both cases we reach a contradiction. 
    
    The same reasoning holds for $\cert_y=2$, and thus this concludes the proof. \end{proof}

Lemmas~\ref{lem:minmaxboundmembership} and~\ref{lem:minmaxboundhard} together give us the following:

\begin{theorem}\label{thm:minmaxNP}
  $\minmaxbound$ is \textsc{NP}-complete.
\end{theorem}

Next we study the complexity of finding a $\minmax$ solution on $\Liqlang$ ballots. 


\begin{theorem}\label{thm:minmaxliquid}
  An outcome of $\minmax$ on a profile $\Prof$ in $\Liqlang$ can be found in time $\mathcal{O}(n^2\ell^2)$, where $\ell$ is the highest preference level of any agent in the profile.
\end{theorem}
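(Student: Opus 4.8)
The plan is to adapt the graph-theoretic approach from Theorem~\ref{thm:optcomplex}, but instead of running Edmonds' algorithm for minimum arborescence, we will iterate over the possible values of the bound on the maximum preference level. First I would build, for each candidate bound $M \in [1,\ell]$, a directed graph $D_M=(V,E_M)$ on $V = \N \cup \{r\}$ exactly as in the proof of Theorem~\ref{thm:optcomplex} --- an edge $e_{ji}$ whenever $B_i^k=(\{j\},j)$ for some $k$, and an edge $e_{ri}$ whenever $B_i^k\in\Dom$ --- but keeping only those edges whose associated preference level $k$ satisfies $k\le M$. The key observation is that a consistent certificate $\cert$ with $\max(\cert)\le M$ exists if and only if every vertex of $\N$ is reachable from $r$ in $D_M$: indeed, reachability from $r$ lets us pick, for each agent, an incoming edge along a path rooted at $r$, which yields an acyclic (hence consistent) choice of preference levels all bounded by $M$; conversely a consistent certificate with max at most $M$ induces such a spanning subtree inside $D_M$. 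Reachability from a single source can be tested by BFS/DFS in $\mathcal{O}(|V|+|E_M|)=\mathcal{O}(n + (d+n)) = \mathcal{O}(n + d)$ time, and $d \le n\ell$, so each test costs $\mathcal{O}(n\ell)$.

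Next I would find the smallest $M^\ast$ for which the reachability test succeeds; since the property is monotone in $M$ this can be done by linear scan (or binary search) over $M\in[1,\ell]$, costing at most $\ell$ reachability tests, i.e.\ $\mathcal{O}(n\ell^2)$ overall --- well within the claimed $\mathcal{O}(n^2\ell^2)$ bound. Once $M^\ast$ is fixed, I would extract an actual outcome: run a BFS from $r$ in $D_{M^\ast}$, and for each agent $a$ record the preference level $\cert_a$ corresponding to the edge by which $a$ was first discovered; the BFS tree is acyclic and rooted at $r$, so processing agents in BFS order gives a valid ordering $\boldsymbol\sigma$ witnessing consistency as in Definition~\ref{def:ConsistentCert}, and computing the induced vector $X_\cert$ of direct votes takes a further $\mathcal{O}(n)$ steps (each agent either copies a predecessor's vote or uses its backup). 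This certificate has $\max(\cert)\le M^\ast$, and by minimality of $M^\ast$ it is a $\minmax$ outcome.

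The main subtlety --- and the step I would be most careful about --- is the correctness of the ``reachability $\iff$ bounded consistent certificate'' equivalence, and in particular making sure the BFS tree genuinely certifies consistency. Because $\Liqlang$ delegations are single-agent identity delegations, an agent's vote at a chosen level is either a direct value or a verbatim copy of exactly one other agent's vote, so there is no issue of partial inputs / necessary winners: the subtree rooted at $r$ literally propagates votes down from the leaves (direct voters, adjacent to $r$) to the agents. I would state this as a short lemma, proving both directions explicitly: ($\Rightarrow$) given a consistent $\cert$ with $\max(\cert)\le M$, the ordering $\boldsymbol\sigma$ from Definition~\ref{def:ConsistentCert} shows that each agent's chosen edge points to either $r$ or an earlier agent, so following these edges backward from any agent reaches $r$; ($\Leftarrow$) the BFS tree in $D_M$ gives each non-root vertex a unique parent via an edge of weight $\le M$, and BFS order is a valid $\boldsymbol\sigma$.

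Finally I would double-check the running time: building each $D_M$ costs $\mathcal{O}(n+d)$; there are $\mathcal{O}(\ell)$ values of $M$; each reachability test is $\mathcal{O}(n+d) = \mathcal{O}(n\ell)$; and the final outcome extraction is $\mathcal{O}(n+d)=\mathcal{O}(n\ell)$. Summing gives $\mathcal{O}(n\ell^2)$, which is dominated by the stated $\mathcal{O}(n^2\ell^2)$; the looser bound in the statement presumably leaves room for a less careful implementation (e.g.\ rebuilding the graph from scratch at each step, or using $d\le n^2$ rather than $d\le n\ell$). I would remark on this so the reader sees the bound is not tight but is certainly an upper bound.
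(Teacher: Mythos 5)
Your proposal is correct and follows essentially the same route as the paper's proof: build a level-thresholded delegation graph rooted at a fresh node $r$, increase the threshold until every agent is reachable from $r$, and read the certificate off an arborescence rooted at $r$ (your BFS tree). The only differences are minor refinements --- a single source-reachability search per level instead of $n$ per-agent checks, and an explicitly stated reachability-iff-bounded-consistent-certificate lemma --- which is why you obtain the sharper $\mathcal{O}(n\ell^2)$ bound, still within the paper's stated $\mathcal{O}(n^2\ell^2)$.
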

\begin{proof}

We provide an algorithm to find a $\minmax$ outcome, by transforming the profile $\Prof$ into a directed graph and then finding an arborescence tree. 

Construct a directed graph $G=(V,E,w)$ where $V=\N\cup\{r\}$, $G$ is rooted at $r$, and the set of edges $E$ with weights $w$ is constructed iteratively from $E = \emptyset$. Starting from $\lev = 1$ until $\lev = \ell$, where $\ell$ is the maximum preference level given by any agent in $\Prof$, the following procedure is executed:
  \begin{enumerate}
    \item Add to the current set $E$ an edge $e_{ij}$ if $B^\lev_i=(\{j\}, j)$, and an edge $e_{ri}$ if $B_i^\lev\in \Dom$. Namely, $E:= E\cup \{e_{ji}\mid B^\lev_i=(\{j\}, j)\}\cup \{e_{ri}\mid B_i^\lev\in \Dom\}$. In both cases, let $w(e_{ij}) = w(e_{ri}) = \lev$;
    \item Check in $\mathcal{O}(|V|+|E|)$ time (see, e.g.,  \cite{cormen2009introduction}, pg. 606) if there is a path from $r$ to $a$ via $E$, for $a\in \N$ (hence, this step has to be repeated $n$ times, for each $a \in \N$). Then, if $r$ is connected to all $a\in \N$, exit the loop; otherwise, if there is some $a\in \N$ not connected to $r$, let $\lev:=\lev+1$.
  \end{enumerate}
  
  After the execution of this iterative procedure, we thus obtain a graph $G$ where all nodes are connected to $r$. Then, we can find \emph{any} arborescence tree from $E$ rooted at $r$ in $\mathcal{O}(|V|+|E|)$ time \cite[pg. 19]{kozen2012design}. The certificate $\cert$ for an outcome of $\minmax$ is then given by $\cert_a$ being the incoming weight of $a$ in the arborescence tree, for all $a\in\N$. Intuitively, we obtain a $\minmax$ outcome since the root $r$ represents direct votes, if there are paths from $r$ to any agent we can determine their votes, and since the edges are added iteratively we know that a path does not exist for a lower preference level.

As all agents give a backup vote, there will eventually be $e_{ri}\in E$ to add for $i\in \N$ and thus the algorithm always terminates. Since the loop iterates at most $\ell$ times, and each time it makes $n$ checks, each bounded by $\mathcal{O}(|V|+|E|)$, it overall takes at most $\mathcal{O}(n\ell (|V|+|E|))$. Since $|V|=n+1$ and $|E|\leq n\ell$, the time bound is $\mathcal{O}(n\ell(n+1+n\ell))$. In $\mathcal{O}(|V|+|E|)=\mathcal{O}(n+1+n\ell)$ an arborescence tree is found. 
Thus, a solution can be found in $\mathcal{O}((n\ell+1)(n+1+n\ell))$ time steps, which can be simplified to $\mathcal{O}(n^2\ell^2)$.
\end{proof}

We now show in an example the application of the algorithm from the proof of Theorem~\ref{thm:minmaxliquid} to find a $\minmax$ outcome on a $\Liqlang$ profile.

\begin{example}\label{ex:Liquidminmax}
	
	Consider the $\Liqlang$ profile in Table~\ref{table:exampleEdmondsAlg}. We construct the directed graph $D_1=(V,E_1,w)$, shown in Figure~\ref{fig:directgraphsminmaxExample} (left), where $V=\{a,b,c,d,e,r\}$ and $E_1$ are the edges added when considering $\lev = 1$. Since the nodes $b,d$ and $c$ are not connected to the root $r$ in $D_1$, we set $\lev = 2$ and we create the graph $D_2=(V,E_2,w)$, shown in Figure~\ref{fig:directgraphsminmaxExample} (right). The set $E_2$ thus contains edges representing all of the first and second preference levels. Since in $D_2$ there is a path from $r$ to every other node, we search for an arborescence tree that will represent a $\minmax$ outcome, e.g., via a depth-first algorithm. One such tree has edges $\{(ra)$, $(re)$, $(ab)$, $(bd)$, $(dc)\}$.

	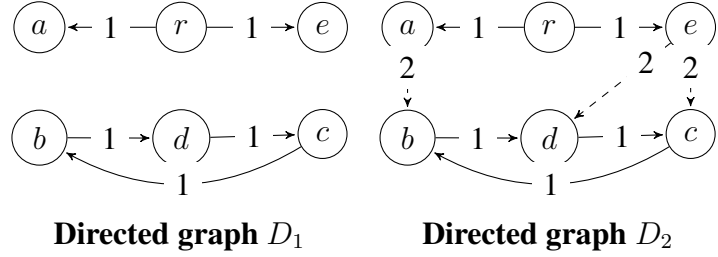
\begin{figure}[h]
\begin{center}
\begin{tabular}{cc}
	\begin{tikzpicture}[node distance=1.2cm]

  \node[draw, circle]     (r)                        {$r$};
     \node[draw, circle] (f) [right= 1.2cm of r]   {$e$};
  \node[draw, circle]     (a) [left= 1.2cm of r]      {$a$};
  \node[draw, circle]      (b) [below= 0.75cm of a]      {$b$};
  \node[draw, circle]       (d) [below= 0.75cm of r]      {$d$};
  \node[draw, circle]             (c) [below= 0.75cm of f]      {$c$};

\draw[->, >=stealth',shorten >=1pt] 
(r) -- (a) node [midway,fill=white] {1} ;
\draw[->, >=stealth',shorten >=1pt]
(r) -- (f) node [midway,fill=white] {1} ;
\draw[->, >=stealth',shorten >=1pt]
(b) -- (d) node [midway,fill=white] {1} ;
\draw[->, >=stealth',shorten >=1pt]
(d) -- (c) node [midway,fill=white] {1} ;
\draw[->, >=stealth',shorten >=1pt]
(c) to [bend left] node [midway,fill=white] {1} (b)  ;
\end{tikzpicture}
& 
	\begin{tikzpicture}[node distance=1.2cm]

  \node[draw, circle]     (r)                        {$r$};
     \node[draw, circle] (f) [right= 1.2cm of r]   {$e$};
  \node[draw, circle]     (a) [left= 1.2cm of r]      {$a$};
  \node[draw, circle]      (b) [below= 0.75cm of a]      {$b$};
  \node[draw, circle]       (d) [below= 0.75cm of r]      {$d$};
  \node[draw, circle]             (c) [below= 0.75cm of f]      {$c$};

\draw[->, >=stealth',shorten >=1pt] 
(r) -- (a) node [midway,fill=white] {1} ;
\draw[->, >=stealth',shorten >=1pt]
(r) -- (f) node [midway,fill=white] {1} ;
\draw[->, >=stealth',shorten >=1pt, dashed]
(a) -- (b) node [near start,fill=white] {2} ;
\draw[->, >=stealth',shorten >=1pt]
(b) -- (d) node [midway,fill=white] {1} ;
\draw[->, >=stealth',shorten >=1pt]
(d) -- (c) node [midway,fill=white] {1} ;
\draw[->, >=stealth',shorten >=1pt, dashed]
(f) -- (d) node [near start,fill=white] {2} ;
\draw[->, >=stealth',shorten >=1pt, dashed]
(f) -- (c) node [near start,fill=white] {2} ;
\draw[->, >=stealth',shorten >=1pt]
(c) to [bend left] node [midway,fill=white] {1} (b)  ;
\end{tikzpicture}

\\
\textbf{Directed graph $D_1$}& \textbf{Directed graph $D_2$}\\
\end{tabular}
\end{center}
\caption{Application of the algorithm in the proof of Theorem~\ref{thm:minmaxliquid} to the $\Liqlang$ profile $\Prof$ from Table~\ref{table:exampleEdmondsAlg}. The directed graph $D_1$ (left) shows the first iteration of the algorithm for the first preference levels of the agents. As in $D_1$ there is not a path from $r$ to every other node, the algorithm moves to the second iteration, constructing $D_2$ (right), which shows the agents' first and second preference levels. Since $D_2$ is connected, the algorithm terminates.  }\label{fig:directgraphsminmaxExample}
\end{figure}
\end{example}
\subsection{Computational complexity of greedy unravellings}\label{subsec:terminationGreedy}

We show here that $\unravel{\#}$ always terminates when paired with any update procedure $\#\in\{\mathbf{U}, \mathbf{DU},\mathbf{RU}, \mathbf{DRU}\}$, given a valid profile. Next, we show that they are all tractable algorithms, terminating in a polynomial number of time steps.

\begin{proposition}\label{prop:unraveltemrination}
Algorithms $\unravel{\#}$ with $\#\in \{\mathbf{U},\mathbf{DU},\mathbf{RU},\mathbf{DRU}\}$ always terminate on a valid smart profile $\Prof$.
\end{proposition}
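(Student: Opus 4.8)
The plan is to show that each of the four variants of \textsc{Unravel} makes progress: every time the outer loop body completes, at least one agent who previously had a placeholder $\Delta$ in $X$ receives a vote in $\Dom$, and no vote is ever overwritten. Since there are only $n$ agents, after at most $n$ iterations of the outer loop we reach $X \in \Dom^n$ and the algorithm halts. The only thing that could go wrong is that the algorithm gets stuck in the inner \textbf{while} loop (line~\ref{2nd:line:repeat2}) forever, incrementing $\lev$ without ever changing $X$; so the heart of the argument is to bound $\lev$ and guarantee that some update fires before $\lev$ runs past the last preference level of every agent.

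\textbf{Key steps.} First I would fix an arbitrary iteration of the outer loop, with current partial vector $X$ (equal to the stored copy $Y$ at the start of the inner loop), and let $U = \{a \in \N \mid X_a = \Delta\}$ be the set of agents still lacking a vote; if $U = \emptyset$ we are already done, so assume $U \neq \emptyset$. Let $k^\star = \max_{a \in U} k_a$ be the largest number of preference levels among undetermined agents. I claim that when $\lev$ reaches $k^\star$ (if not before), the call to \textsc{Update}($\#$) necessarily changes $X$. The reason: take any $a \in U$ with $k_a = k^\star$; its last preference level $B_a^{k_a}$ is, by Definition~\ref{def:smartballot}, a direct vote $x \in \Dom$. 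Hence at $\lev = k^\star$ the condition $B_a^\lev \in \Dom$ holds for this agent, so in \textsc{Update}($\mathbf{U}$) line~\ref{lineU:votedirect} fires, in \textsc{Update}($\mathbf{DU}$) line~\ref{line:DUdirect} fires, in \textsc{Update}($\mathbf{RU}$) the set $P$ is non-empty so line~\ref{line:lottery}--\ref{lineRU:directaddY} fire, and in \textsc{Update}($\mathbf{DRU}$) the set $P$ is non-empty so line~\ref{lineDRU:randomdirect} fires. In every case at least one coordinate of $X$ changes from $\Delta$ to a value in $\Dom$, breaking the $X = Y$ condition and exiting the inner loop. (In fact the inner loop may exit at a smaller value of $\lev$; the point is only that it cannot run past $k^\star$.) Thus the inner loop always terminates, having added at least one new direct or computable vote.

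\textbf{Why no vote is overwritten, and conclusion.} Next I would observe that in all four \textsc{Update} procedures the only assignments to $X$ are of the form $x_a := \cdots$ inside a loop or conditional guarded by $x_a = \Delta$ (lines~\ref{line:loopwodirect}, the \textbf{for} headers in Algorithms~\ref{Alg:DU}--\ref{Alg:DRU}, and the membership of $a$ in $P$ or $P'$, which is only granted when $x_a = \Delta$). Hence once an agent's entry is in $\Dom$ it is never modified again, so the number of placeholders is non-increasing across the whole run and strictly decreases in each full pass of the outer loop (by the previous paragraph). Starting from $n$ placeholders, after at most $n$ iterations of the outer loop we have $X \in \Dom^n$ and the guard of line~\ref{1st:line:repeat1} fails, so \textsc{Unravel}($\#$) returns. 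The main obstacle is the bookkeeping in the first step: one must carefully unwind the nested-loop structure of Algorithm~\ref{Alg:Gen} together with each of the four \textsc{Update} subroutines to confirm that the "guaranteed firing at $\lev = k^\star$" argument goes through uniformly, and in particular that the inner loop's reset of $\lev$ to $1$ at the start of each outer iteration (line~\ref{line:initlev}) does not reintroduce an infinite regress — which it does not, precisely because the set $U$ has strictly shrunk, so $k^\star$ is computed afresh over a smaller set. Validity of the profile (Definition~\ref{def:validsmartballot}) is used only implicitly, to know the backup votes and the well-formedness of the ballots; the argument does not even need condition~$(i)$, only that every ballot ends in a direct vote.
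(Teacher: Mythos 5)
Your proof is correct and follows essentially the same route as the paper's: the last preference level of every ballot is a direct vote, so the inner loop cannot stall past the maximal remaining preference level, votes are never overwritten, and hence each pass of the outer loop strictly reduces the number of placeholders until $X\in\Dom^n$. The only difference is presentational — you argue progress directly with the explicit bound $\lev\le k^\star$, whereas the paper phrases the same idea as a proof by contradiction.
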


\begin{proof}

Let $\Prof$ be a valid smart profile for $n$ agents. For the sake of a contradiction, assume that $\unravel{\#}$ by Algorithm~\ref{Alg:Gen} does not terminate on $\Prof$. Hence, \textsc{Unravel} cannot exit the {\bf while} loop from either line~\ref{2nd:line:repeat2}, due to no direct votes being computable at any preference level, or from line~\ref{1st:line:repeat1}, due to $X\notin \Dom^n$. 

Consider \textsc{Unravel} being unable to terminate due to a cycle involving the {\bf while} loop from line~\ref{2nd:line:repeat2}. Let $A = \{a \in \N \mid x_a = \Delta\}$ be the set of agents whose votes have not been computed due to a cycle. As $\Prof$ is a valid smart profile, we know that for all $a \in A$, $B_a$ has a finite number of preference levels\footnote{Recall that since both $\Dom$ and the possible sets of delegates are finite, and since all functions given in an agent's valid ballot must differ, the possible number of functions must also be finite.} and the final preference is a direct vote. In each of the update procedures ($\mathbf{U}, \mathbf{DU}, \mathbf{RU}$ and $\mathbf{DRU}$), after a finite number of loops, we will reach a direct vote of an agent in $A$. Each of the update procedures will add at least one direct vote to $X$ at this point, breaking this cycle. Moreover, no procedure replaces a vote in $X$ with $\Delta$ or with any value not in $\Dom$.

Therefore, if the algorithm does not terminate,  it must be due to the {\bf while} loop at line~\ref{1st:line:repeat1}. This can only happen while $X\notin \Dom^n$. However, as we can exit the cycle from line~\ref{2nd:line:repeat2}, the algorithm always changes some $x_a=\Delta$ to a vote in $\Dom$. Thus, after a finite number of iterations we will have that $X\in \Dom^n$ and \textsc{Unravel} terminates.
\end{proof}

Next, we show that our unravelling procedures terminate in polynomial time on $\Blang$ ballots. Recall that the delegations in a $\Blang$ ballot are Boolean functions $\varphi$ expressed in complete DNF. 
 The size of the input for $\unravel{\#}$ for smart ballots in $\Blang$ is in $\mathcal{O}(\max_p(\Prof)\cdot n\cdot \max_\varphi(\Prof))$, where $\max_p(\Prof)$ is the highest preference level of any ballot in $\Prof$ and $ \max_\varphi(\Prof)$ is the maximum length of any formula in $\Prof$.

\begin{proposition}\label{prop:polynomialterminationunravel}
$\unravel{\#}$ for $\# \in  \{\mathbf{U},\mathbf{DU},\mathbf{RU},\mathbf{DRU}\}$ terminates in at most $\mathcal{O}(n^2\cdot \max_p(\Prof)\cdot  \max_\varphi(\Prof))$ time steps, on a valid smart profile $\Prof$ in $\Blang$.
\end{proposition}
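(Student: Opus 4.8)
The plan is to bound separately the three nested sources of repetition in Algorithm~\ref{Alg:Gen}: the outer \textbf{while} loop at line~\ref{1st:line:repeat1}, the inner \textbf{while} loop at line~\ref{2nd:line:repeat2}, and the cost of a single call to \textsc{Update}. Since Proposition~\ref{prop:unraveltemrination} already guarantees termination on a valid profile, it suffices to count iterations and multiply. First I would observe that no update procedure ever overwrites an entry $x_a\in\Dom$ with $\Delta$ or with a non-domain value, and that each execution of the inner loop (equivalently, each pass through the outer loop) changes at least one entry of $X$ from $\Delta$ to a value in $\Dom$ — this is exactly the content of the argument in the proof of Proposition~\ref{prop:unraveltemrination}. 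Hence the outer loop is executed at most $n$ times.

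Second, I would bound the number of \textsc{Update} calls inside one execution of the outer loop by $\max_p(\Prof)$. Fix one such execution and let $A=\{a\in\N\mid x_a=\Delta\}$ at its start; since we have not exited the outer loop, $A\neq\emptyset$, and $X$ does not change until the inner loop exits, so every $a\in A$ stays undetermined throughout. Because $\Prof$ is valid, each $B_a$ with $a\in A$ ends in a direct backup vote $B_a^{k_a}\in\Dom$ with $k_a\le\max_p(\Prof)$; agents $a$ with $k_a<\lev$ are simply skipped by \textsc{Update}. When the counter first reaches a level $\lev=k_a$ for some $a\in A$ (which happens by $\lev=\min_{a\in A}k_a\le\max_p(\Prof)$), \textsc{Update} finds a direct vote for $a$, and in each of $\mathbf{U},\mathbf{DU},\mathbf{RU},\mathbf{DRU}$ this forces at least one direct vote to be added to $X$ — for $\mathbf{RU}$ and $\mathbf{DRU}$ the selected agent of $P$ (resp.\ $P'$) need not be $a$ itself, but a vote is nonetheless added. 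Thus $X\neq Y$ and the inner loop exits, so $\lev$ never exceeds $\max_p(\Prof)$.

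Third, I would bound one call to \textsc{Update} by $\mathcal{O}(n\cdot\max_\varphi(\Prof))$: each of the four variants performs at most a constant number of loops over the at most $n$ agents, plus $\mathcal{O}(n)$ bookkeeping for the sets $P,P'$ and the uniform selection in $\mathbf{RU}/\mathbf{DRU}$; for each agent the only nontrivial work is testing whether $B_a^\lev\in\Dom$ and, if not, computing the necessary winner of the complete-DNF formula $F_a^\lev$ on the partial assignment $Y\rest_{S_a^\lev}$, which by Proposition~\ref{prop:completenec} reduces to reading the formula and the relevant entries of $Y$, i.e.\ $\mathcal{O}(\max_\varphi(\Prof))$ (the restriction $Y\rest_{S_a^\lev}$ touches at most as many entries as there are variables in $F_a^\lev$, hence at most $\max_\varphi(\Prof)$). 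Multiplying the three bounds — at most $n$ outer iterations, at most $\max_p(\Prof)$ \textsc{Update} calls each, and $\mathcal{O}(n\cdot\max_\varphi(\Prof))$ per call — and absorbing the $\mathcal{O}(n)$ cost of the assignment $Y:=X$ per outer iteration, gives the claimed $\mathcal{O}(n^2\cdot\max_p(\Prof)\cdot\max_\varphi(\Prof))$ bound.

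The main obstacle I anticipate is the second step: one must be careful about the behaviour of \textsc{Update} once $\lev$ exceeds an agent's number of preference levels, and must verify the ``a direct vote is added'' guarantee uniformly across all four update variants (in particular that the random variants still make progress even though the selected agent may differ from the witness $a$). A minor point to state precisely is that $n$ may exceed $\max_\varphi(\Prof)$ for ballots with very short formulas, so the $\mathcal{O}(\max_\varphi(\Prof))$ factor should be attributed to the necessary-winner test while the separate $\mathcal{O}(n)$ factor comes from looping over agents, not from evaluating a single formula.
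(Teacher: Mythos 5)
Your proposal is correct and follows essentially the same decomposition as the paper's proof: at most $n$ iterations of the outer loop, at most $\max_p(\Prof)$ calls to \textsc{Update} per outer iteration, and $\mathcal{O}(n\cdot\max_\varphi(\Prof))$ per call via the necessary-winner test of Proposition~\ref{prop:completenec}. Your justification of the inner-loop bound (that a direct backup vote of some still-undetermined agent forces progress by level $\min_{a\in A}k_a\le\max_p(\Prof)$ in all four update variants) is in fact spelled out more carefully than in the paper, which simply asserts the $\max_p(\Prof)$ bound.
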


\begin{proof}
The {\bf while} loop from line~\ref{1st:line:repeat1} in \textsc{Unravel} (see Algorithm~\ref{Alg:Gen}) can be repeated at most $n$ times (when a single  vote is added to $X$ at each iteration). Moreover, the {\bf while} loop from line~\ref{2nd:line:repeat2} can be repeated at most $\max_p(\Prof)$ times, when all smart ballots are of the same length and no vote is computable in the first $\max_p(\Prof) - 1$ iterations. 
  
 The following is executed at most $n \cdot \max_p(\Prof)$ times.  $\unravel{\#}$ checks that for each agent $a$ such that $x_a =\Delta$ (at most $n-1$) either $B_{a}^\texttt{lev}\in \Dom$ or $\varphi^\texttt{lev}_{a}$ has a necessary winner (depending on the update procedure used). As each $\varphi^\texttt{lev}_{a}$ is a complete DNF, to verify if it has a necessary winner we check if either $(i)$ all literals of a cube of $\varphi^\texttt{lev}_{a}$ are made true by $X\rest_{ S^\texttt{lev}_{a}}$, or $(ii)$ one literal in each cube is made false by $X\rest_{ S^\texttt{lev}_{a}}$, returning a direct vote of $1$ or $0$, respectively, as described in Proposition~\ref{prop:completenec}.

 The use of $\unravel{\#}$ takes at most $\mathcal{O}(n \cdot 2 \max_\varphi(\Prof))$ steps, which is equivalent to  $\mathcal{O}(n \cdot \max_\varphi(\Prof))$ steps. Thus, $\unravel{\#}$ with $\#\in\{\mathbf{U},\mathbf{DU},\mathbf{RU},\mathbf{DRU}\}$ yields a vector $X$ of direct votes in $\mathcal{O}(n^2\cdot \max_p(\Prof)\cdot  \max_\varphi(\Prof))$ time steps.
  \end{proof}

\section{Comparing the unravelling procedures}\label{sec:comparingunravelling}

In this section we complement the results of Section~\ref{sec:compucomplex}, which analysed the computational complexity of our unravelling procedures, with the aim to further distinguish our defined unravellings, to understand when a procedure would be preferable. 




\subsection{Restrictions yielding distinct or identical outcomes}
\label{subsec:compareunravelling}

We study here under which restrictions on the language of the ballots, the outcomes of our unravelling procedures coincide or differ. First, we show that all unravelling procedures defined in Subsection~\ref{subsec:unravel} can give different outcomes, even when the ballots are restricted to $\Liqlang$.

\begin{proposition}\label{prop:unravelldifferent}
The unravellings $\unravel{\#}$, for $\# \in  \{\mathbf{U},\mathbf{DU},\mathbf{RU},\mathbf{DRU}\}$, $\minsum$ and $\minmax$ can give different certificates and outcomes on the same smart profile $\Prof$ of $\Liqlang$ ballots.
\end{proposition}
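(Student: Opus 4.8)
The plan is to construct a single $\Liqlang$ profile $\Prof$ on which the six procedures provably disagree, and then argue the disagreement by tracking certificates. Since the statement is an existence claim, a well-chosen example suffices. I would reuse the flavour of Example~\ref{ex:EdmondsAlgoLiquidopt}: a profile with one or two delegation cycles at the first preference level, plus a handful of direct voters placed so that $\minsum$ (which globally minimises the sum of levels) is forced to use a deep backup of one agent, while $\minmax$ (which minimises the maximum level) is forced to spread second preferences across many agents, and the greedy procedures break the cycle at whichever point their update rule reaches first. The key is to engineer the numbers so that the $\rank$-minimising certificate has a large $\max$, and the $\max$-minimising certificate has a large $\rank$, which already separates $\minsum$ from $\minmax$ (this is exactly the phenomenon in Example~\ref{ex:MinMaxbenefit}).

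First I would lay out the profile in a table as in Table~\ref{table:OptExPreference}, with, say, six or seven agents: a couple of fixed direct voters, a cycle $C_1$ among three agents reachable only through a long backup chain of one agent (forcing $\minsum$ to pay a high level there), and a second independent cycle $C_2$ whose agents each have a cheap second preference (so $\minmax$ prefers paying level $2$ many times over paying level $4$ once). Then I would compute, for each procedure, the certificate(s) it returns. For $\minsum$ and $\minmax$ this is the Edmonds/arborescence computation of Theorems~\ref{thm:optcomplex} and~\ref{thm:minmaxliquid}. For the four greedy procedures I would run Algorithm~\ref{Alg:Gen} with each of $\mathbf{U},\mathbf{DU},\mathbf{RU},\mathbf{DRU}$ step by step, exploiting that: $\mathbf{U}$ versus $\mathbf{DU}$ differ on whether computable votes are added simultaneously with direct votes at a given level (so I need an agent whose computable vote at level $1$ competes with someone's direct vote at level $2$); $\mathbf{RU}$ and $\mathbf{DRU}$ differ from their non-random counterparts by adding only one vote per iteration, which changes which cycle-breaking backup gets triggered, and I would choose a fixed outcome of the random choice to exhibit a certificate not produced by $\mathbf{U}$ or $\mathbf{DU}$. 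The reader can verify each trace directly from the algorithms, so the write-up is a sequence of short "running the algorithm yields $\cert = (\dots)$" assertions plus the two arborescence computations.

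The main obstacle is bookkeeping: I need one profile that simultaneously separates all $\binom{6}{2}$ pairs, or at least shows that the set of certificates produced is different for each of the six procedures. It may be cleaner to accept a slightly larger profile (eight to ten agents) with two or three loosely coupled "gadgets," each responsible for separating a specific subset of procedures — e.g. one gadget forcing $\mathrm{rank}(\minsum) < \mathrm{rank}(\minmax)$ and $\max(\minmax) < \max(\minsum)$, another gadget with a direct-vote-versus-computable-vote race to split $\mathbf{U}$ from $\mathbf{DU}$, and a third where a single random pick versus a batch update splits $\mathbf{RU},\mathbf{DRU}$ from $\mathbf{U},\mathbf{DU}$. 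I would then just tabulate the resulting certificate for each of the six procedures and observe they are pairwise distinct (and hence so are the induced outcome vectors, up to the tie-breaking freedom, which I would handle by picking witnesses realising distinct $X_\cert$). Since the proposition only asks for "can give different certificates and outcomes," exhibiting one such $\Prof$ together with the six traces completes the argument; I expect the bulk of the effort to be in choosing the backup-chain length and the second-preference levels so the inequalities come out the right way, not in any conceptual difficulty.
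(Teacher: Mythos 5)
Your strategy is exactly the paper's: the proposition is proved by exhibiting one $\Liqlang$ profile and simply tabulating, for each of the six procedures, the certificate(s) and outcome(s) it produces (the paper does this with a four-agent profile, Table~\ref{table:UnravellingdifferentOutcomes}, no gadget decomposition needed). However, as written your text is a construction plan rather than a proof: since the statement is an existence claim, the witness profile together with the six traces \emph{is} the entire content, and you never actually fix a profile or compute a single certificate. That part is routine to complete, as the paper's small example shows, so there is no conceptual obstacle there.

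One specific ingredient of your plan deserves a warning. You propose to separate $\minsum$ from $\minmax$ by reproducing the phenomenon of Example~\ref{ex:MinMaxbenefit} (one agent's deep backup being cheaper in total rank than many level-$2$ concessions) inside a $\Liqlang$ profile. That example relies essentially on multi-agent disjunctive delegations: one deep choice of a single agent resolves everything at once. With single-agent delegations, every level-$1$ delegation cycle must be broken by one of its \emph{own} members moving to a later level, and a level-$k$ concession costs $k-1$ extra rank, so engineering a $\Liqlang$ profile where the rank-minimising certificate provably has strictly larger maximum than the max-minimising one is delicate and not obviously achievable by the gadget you describe. The paper avoids this entirely: it separates $\minmax$ from the other five procedures only by its extreme irresoluteness, observing that $\minmax$ returns (among its many tied outcomes) a certificate such as $(3,3,2,2)$ whose outcome $(1,0,0,1)$ is produced by no other procedure, which is all the proposition requires. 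If you replace your strong $\minsum$/$\minmax$ gadget by this weaker observation, your plan matches the paper's proof and can be completed with a very small profile.
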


\begin{proof}
Consider the $\Liqlang$ profile $\Prof$ for the domain $\Dom=\{1,0\}$ and the set of agents $\N=\{a,b,c,d\}$ presented in Table~\ref{table:UnravellingdifferentOutcomes} (left). The outcomes of the unravelling procedures and their certificates are also shown in Table~\ref{table:UnravellingdifferentOutcomes} (right). However, we do not show the outcomes of $\minmax$, since it returns all consistent certificates such that no entry is greater than $3$; e.g., it will also include the certificate $\cert= (3,3,2,2)$ giving the outcome $( 1, 0, 0,1)$. Since the latter is not an outcome of any of the other procedures, $\minmax$ differs from those. Moreover, while procedures $\unravel{\RU}$ and $\unravel{\DRU}$ give the same outcomes $(1,1,1,1)$ and $(0,0,0,1)$, they are returned at different rates.
\end{proof}
\begin{table}[h]
\centering
  \begin{tabular}{lllll} 
 \hline\noalign{\smallskip}
&$B_x^1$&$B_x^2$&$B_x^3$&$B_x^4$\\
 \noalign{\smallskip}\hline\noalign{\smallskip} 
$a$&$(\{b\}, b)$&$(\{c\},c)$&$(\{d\},d)$&$1$\\
$b$&$(\{a\}, a)$&$(\{c\},c)$&$0$&-\\
$c$&$(\{a\}, a)$&$(\{b\},b)$&$1$&-\\
$d$&$(\{a\},a)$&$1$&-&-\\
\noalign{\smallskip}\hline
 \end{tabular}
 \quad
 \begin{tabular}{lll} 
 \hline\noalign{\smallskip}
Procedure&Outcome& Certificate\\
 \noalign{\smallskip}\hline\noalign{\smallskip} 
$\U$&$(1,0,1,1)$&$(3,3,3,2)$\\
\noalign{\smallskip}\hline\noalign{\smallskip} 
$\DU$& $(0,0,1,1)$& $(1,3,3,2)$\\
\noalign{\smallskip}\hline\noalign{\smallskip} 
$\RU$&$(1,1,1,1)$&$(3,1,1,2)$\\                &$(0,0,0,1)$&$(1,3,1,2)$\\
&$(1,1,1,1)$&$(2,1,3,2)$\\
&$(1,1,1,1)$&$(1,2,3,2)$\\
\noalign{\smallskip}\hline\noalign{\smallskip} 
$\DRU$ 
&$(0,0,0,1)$&$(1,3,1,2)$\\
&$(1,1,1,1)$&$(2,1,3,2)$\\
&$(1,1,1,1)$&$(1,2,3,2)$\\
\noalign{\smallskip}\hline\noalign{\smallskip} 
$\minsum$&$(0,0,0,0)$&$(1,3,1,1)$\\
\noalign{\smallskip}\hline
 \end{tabular}
 \caption{On the left, we show the profile $\Prof$ used in the proof of Proposition~\ref{prop:unravelldifferent}. On the right, the table shows the outcomes and certificates of unravelling profile $\Prof$ with the procedures $\unravel{\U}$, $\unravel{\DU}$, $\unravel{\RU}$, $\unravel{\DRU}$, and $\minsum$.}
\label{table:UnravellingdifferentOutcomes}
\end{table}

We now show that when we restrict the ballots to $\liqlang{1}{*}$ the outcome is the the same for all our unravelling procedures, except for $\minmax$.

\begin{proposition}\label{prop:liquiddiffer}
If $\Prof \in\liqlang{1}{*}$, the procedures $\minsum$ and $\textsc{Unravel}(\mathbf{\#})$ for $\mathbf{\#}\in \{\mathbf{U},\mathbf{DU},\mathbf{RU},\mathbf{DRU}\}$ give the same outcome $X$, but the certificate may differ. 
\end{proposition}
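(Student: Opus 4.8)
The plan is to analyse what structure a profile $\Prof \in \liqlang{1}{*}$ imposes: every agent has a ballot of the form $((\{b\}, id) > *)$ or simply $(x)$ for some $x \in \Dom$, i.e.\ at most one delegation followed by a direct backup vote, and the only backup allowed is abstention $*$ (except for agents who vote directly at the first level). So the delegation structure at the first preference level is a \emph{functional graph}: each delegating agent has exactly one outgoing edge. The key observation is that in such a graph, the weakly connected component of any agent is either a tree hanging off a direct voter, or a ``rho-shaped'' component consisting of a single cycle with trees feeding into it. In the first case, every agent in the component is assigned a vote using only first preferences. In the second case, exactly the agents on the cycle cannot have their first preference used; every tree-agent feeding into the cycle still uses its first preference.

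First I would make this precise: given $\Prof \in \liqlang{1}{*}$, define the partial outcome obtained by iterating first-preference delegations wherever possible (this is well-defined and computable, since any agent whose delegate has a determined vote inherits it). Call an agent \emph{stuck} if this process never assigns it a vote; I would show the stuck agents are exactly those lying on a first-preference delegation cycle, and that every such agent's backup vote $*$ must then be used, assigning them $*$. Then I would argue that once a cycle agent is assigned $*$ via its backup, this does not propagate any new information downstream (agents delegating \emph{to} a cycle agent are themselves either on the cycle or already resolved). Hence the outcome $X$ is uniquely determined: non-stuck agents get the value propagated from the direct voter at the root of their tree, and stuck agents get $*$. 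Crucially, this outcome $X$ does not depend on tie-breaking or on the order in which votes are resolved.

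Next I would check that each of the six procedures produces exactly this $X$. For $\minsum$: any consistent certificate must assign preference level $2$ (the backup) to at least the cycle agents, and assigning level $2$ to \emph{exactly} the cycle agents and level $1$ to everyone else yields a consistent certificate (process the trees toward roots, then break each cycle by using the backups); since each stuck agent contributes at least one extra unit to the rank and there is no way to do better, this is rank-minimal, and its outcome is $X$. Any other rank-minimal certificate must also use level $2$ on precisely a minimum feedback set — but in a functional graph the only way to make the first-preference subgraph acyclic by raising agents to level $2$ is to hit every cycle, and the cycles are vertex-disjoint, so the minimum is attained only by raising every cycle agent; thus all $\minsum$ certificates give outcome $X$. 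For the four greedy procedures $\unravel{\#}$: by Proposition~\ref{prop:unraveltemrination} they terminate, and at each step they only add direct or computable votes at the lowest feasible level; I would argue by induction on the order of resolution that every vote they add agrees with $X$ (direct voters get their value; anyone whose delegate is resolved inherits correctly; and when the algorithm is forced up to $\lev = 2$, the only agents with a level-$2$ entry available are the stuck ones, whose backup is $*$, matching $X$), and that the process does not halt until all of $X$ is filled in. The differences in $\mathbf{D}$ versus $\mathbf{R}$ versus basic affect only \emph{which} agent is picked and \emph{which preference level} ends up in the certificate (e.g.\ a tree agent might be resolved before or after a cycle is broken), not the final assignment — hence identical outcomes, possibly different certificates, exactly as claimed. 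Finally $\minmax$ is explicitly excluded: as Proposition~\ref{prop:unravelldifferent} already illustrates, $\minmax$ admits many tied certificates assigning high levels to non-stuck agents, producing outcomes $\neq X$.

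The main obstacle I anticipate is the $\minsum$ argument: one must carefully justify that a rank-minimal consistent certificate on a $\liqlang{1}{*}$ profile must use the backup on \emph{precisely} the set of cycle agents, which amounts to showing that the minimum feedback vertex set of a functional graph (restricted to the cycle structure) is forced and equals the union of the cycles. This is true because in a functional graph each weakly connected component contains at most one cycle and the cycles are pairwise vertex-disjoint, so removing/raising fewer than all cycle vertices leaves a cycle intact; I would spell this combinatorial fact out explicitly. A secondary subtlety is handling the interaction of the backup value $*$ with the necessary-winner computation for agents delegating into a cycle — but since $\Liqlang$ delegations are just the identity function, ``necessary winner'' is trivial (copy the delegate), so this reduces to simple propagation and should go through cleanly.
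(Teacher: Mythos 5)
Your overall strategy mirrors the paper's: analyse the first-preference delegation graph (a functional graph), observe that the forced backup $*$ makes the value of every agent caught behind a cycle equal to $*$ no matter how the cycle is broken, and then check each procedure separately. However, two of your supporting claims are false as stated, and one of them is the load-bearing step of your $\minsum$ argument. First, the stuck agents are \emph{not} ``exactly those lying on a first-preference delegation cycle'': an agent whose first-preference chain leads \emph{into} a cycle is also never resolved by pure first-preference propagation, and your parenthetical that anyone delegating to a cycle agent ``is either on the cycle or already resolved'' fails precisely for these tree agents. Breaking a cycle does propagate new information downstream: those tree agents then copy $*$ at level $1$ (under $\RU$, $\DRU$ and $\minsum$), whereas $\unravel{\U}$ and $\unravel{\DU}$ instead hand them their own level-$2$ backup $*$. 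The outcomes coincide only because the propagated value and the backup are both $*$ --- which is exactly the point of $\Liqlang_*$ and the reason the statement fails for $\liqlang{1}{}$. Your final description of $X$ is correct if ``stuck'' is read with your own iterative definition, but the characterisation you promise to prove is not, and your greedy induction needs the corrected version.

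Second, and more seriously, your $\minsum$ step rests on the claim that any consistent certificate must assign level $2$ to \emph{all} cycle agents, so that the rank-minimal certificate raises the whole cycle. This is false: once a single agent of a cycle uses its backup, every other agent on that cycle can be resolved at level $1$ by copying the $*$ around the now-broken cycle, and this is consistent. Hence the rank-minimal certificates assign level $2$ to exactly one agent per first-preference cycle and level $1$ to everyone else --- this is what the paper's proof states, and it is why $\Crest{\minsum}{\Prof}=\Crest{\unravel{\RU}}{\Prof}=\Crest{\unravel{\DRU}}{\Prof}$ on such profiles. Correspondingly, the minimum feedback vertex set of a disjoint union of cycles is one vertex per cycle, not their union, contrary to the combinatorial fact you propose to spell out. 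The conclusion that $\minsum$ outputs $X$ survives, because with backup $*$ a stuck agent's value is $*$ whether its certificate entry is $1$ or $2$, but your argument as written minimises over a wrongly described set of consistent certificates and must be repaired along these lines.
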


\begin{proof}

$\textsc{Unravel}(\mathbf{U})$ and $\textsc{Unravel}(\mathbf{DU})$ act in an identical manner for $\liqlang{1}{*}$ ballots. They first add all non-delegating agents' votes to $X$. Then, they iteratively unravel the first preference delegations of all agents who are not in a cycle. Once no more votes can be added from the first preference level, i.e., there are agents in a delegation cycle, these agents are assigned their second choice, i.e., the abstention $*$.



$\unravel{\RU}$ picks one agent at a time from the first preference level who either gives a direct vote, or their delegate has a vote in $X$. When no more agents are available at the first preference level, the remaining agents are in delegation cycles. Moving to the second preference level, one of these agents will be added with an abstention. Consequently, everyone caught in this delegation cycle will also receive abstentions from the agent who was picked at random. This is repeated until all cycles have been resolved and all agents have a vote in $X$. 

$\textsc{Unravel}(\mathbf{DRU})$ first adds one by one the direct votes of the agents who do not delegate. Then, it does the same for delegating agents whose delegate already has a vote in $X$. Once there are no more agents to add from their top preference, the procedure adds a single random agent with an abstention (from their second preference) and then it continues as for \textsc{Unravel($\mathbf{RU}$)}, until all delegation cycles are resolved.

 Finally, $\minsum$ returns all outcomes that minimise the total rank. Therefore, all agents will receive their first preference, except for a single agent from each delegation cycle, as in the previous unravellings. Observe that on any profile $\Prof$ in $\liqlang{1}{*}$, we have $\Crest{\minsum}{\Prof}=\Crest{\unravel{\RU}}{\Prof}=\Crest{\unravel{\DRU}}{\Prof}$.
\end{proof}

\begin{remark}
  All our six unravelling procedures will have the certificate $\cert=\{1\}^n$ on $\liqlang{1}{*}$ profiles with no delegation cycles. However, if there are cycles, $\minmax$ will return many outcomes---including the one whose certificate gives to all delegating agents their second preference $(*)$, regardless of if they are in a delegation cycle or not. Furthermore, Proposition~\ref{prop:liquiddiffer} does not hold for $\liqlang{1}{}$, where backup votes are not restricted to $*$, as the tie-breaking affects the outcome $X$.
\end{remark}

\begin{remark}\label{remark:breadth+depth}
  The breadth-first and depth-first rules by \citet{kotsialou2018breadth} differ from all six of our unravellings. 
  Consider $\N=\{a,b,c\}$, an issue with domain $\Dom=\{1,0,*\}$, and agents' ballots as follows: $B_a = ((\{b\}, b)> (\{c\}, c)> *)$, $B_b=(*)$, and $B_c=(1)$. Our six unravelling procedures would return the outcome $(*,*,1)$ with certificate $\cert=(1,1,1)$, whereas the breadth-first and depth-first procedures would return the outcome $(1,*,1)$ with certificate $\cert=(2,1,1)$. 
\end{remark}

Next, we show that all possible outcomes of $\unravel{\DRU}$ are also possible outcomes of $\unravel{\RU}$, as the set of certificates of the former is a subset of the set of certificates of the latter. 
\begin{proposition}\label{prop:DRUsubsetRU}
  $\Crest{\unravel{\DRU}}{\Prof}\subseteq \Crest{\unravel{\RU}}{\Prof}$ for valid smart profiles $\Prof$.
\end{proposition}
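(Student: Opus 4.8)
The plan is to show that any run of $\unravel{\DRU}$ can be simulated by a run of $\unravel{\RU}$, so that the certificate produced by the former is also producible by the latter. The key observation is that the two procedures differ only in \emph{which} agents are eligible to be selected at a given iteration of the inner {\bf while} loop: at preference level $\lev$, with current vectors $X$ and $Y$, procedure $\unravel{\RU}$ may pick any agent in the set $P_{\RU} = \{a \mid x_a = \Delta \text{ and } (B_a^\lev \in \Dom \text{ or } F_a^\lev(Y\rest_{S_a^\lev}) \in \Dom)\}$, whereas $\unravel{\DRU}$ picks from $P \subseteq P_{\RU}$ if $P \neq \emptyset$ (the direct voters at $\lev$), and otherwise from $P' \subseteq P_{\RU}$ (the computable voters at $\lev$). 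In both cases $P \cup P' = P_{\RU}$, so every agent that $\unravel{\DRU}$ could select at a given step is also an agent that $\unravel{\RU}$ could select at that same step with the same $X$, $Y$, and $\lev$.

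First I would set up an induction on the number of agents assigned a vote so far (equivalently, on the number of completed iterations of the outer {\bf while} loop). Fix a profile $\Prof$ and a certificate $\cert \in \Crest{\unravel{\DRU}}{\Prof}$, witnessed by some sequence of random choices in a run $\rho$ of $\unravel{\DRU}$. I would show that there is a run $\rho'$ of $\unravel{\RU}$ making the same sequence of agent-assignments in the same order, hence producing the same final $X$ and the same certificate $\cert$. The invariant maintained is: after $t$ assignments, the partial vector $X$ (and the auxiliary $Y$, which is just the snapshot of $X$ at the start of the current outer iteration) and the current level $\lev$ agree in both runs. For the inductive step, suppose $\unravel{\DRU}$ in $\rho$ performs its next assignment by selecting agent $b$. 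Then $b \in P \cup P' = P_{\RU}$ at the current $(X, Y, \lev)$, so in $\rho'$ the set $P_{\RU}$ is nonempty at this step and we instruct $\unravel{\RU}$ to select the same $b$; the value assigned to $x_b$ is $B_b^\lev$ if $b \in P$ (resp.\ $B_b^\lev \in \Dom$) and $F_b^\lev(Y\rest_{S_b^\lev})$ otherwise, which is exactly what line~\ref{lineRU:directaddY} or line~\ref{lineRU:compaddY} of \textsc{Update}($\mathbf{RU}$) assigns. One subtlety to handle: the level counter. In $\unravel{\RU}$, the inner loop increments $\lev$ precisely when $P_{\RU} = \emptyset$ at the current level; in $\unravel{\DRU}$, the inner loop increments $\lev$ precisely when $P = P' = \emptyset$, i.e.\ $P_{\RU} = \emptyset$. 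So the two procedures advance $\lev$ under exactly the same condition, and the ``reset $\lev := 1$'' on re-entering the outer loop is identical. Hence the invariant is preserved, and by Proposition~\ref{prop:unraveltemrination} both runs terminate, at which point $X$ and $\cert$ coincide. Therefore $\cert \in \Crest{\unravel{\RU}}{\Prof}$.

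The main obstacle, and the only place requiring care, is the bookkeeping around the two nested {\bf while} loops and the auxiliary vector $Y$: I must be sure that $Y$ is updated identically in both runs (it is reset to the current $X$ only at line~\ref{line:countery}, once per outer iteration) and that the computable-vote checks $F_a^\lev(Y\rest_{S_a^\lev}) \in \Dom$ therefore see the same argument in both runs. Since the simulation keeps $X$, $Y$, and $\lev$ synchronised at every step by construction, and since $P_{\RU}$ is determined by $(X, Y, \lev)$ alone, this goes through. I would also remark that the inclusion can be strict: $\unravel{\RU}$ may select a computable voter at level $\lev$ even when a direct voter is available at that level, a choice $\unravel{\DRU}$ forbids; Example~\ref{ex:unravellingprocedures} already exhibits a certificate ($\cert = (1,1,1,1,1,2)$, arising from selecting the computable vote of $f$) that $\unravel{\RU}$ produces but $\unravel{\DRU}$ cannot, so the containment in the statement cannot in general be replaced by equality.
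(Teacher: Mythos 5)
Your proof is correct and takes essentially the same approach as the paper's: the key observation in both is that at every iteration the selection set of $\unravel{\DRU}$ (namely $P$, or $P'$ when $P=\emptyset$) is contained in the selection set $P\cup P'$ of $\unravel{\RU}$, so any $\DRU$ run can be mimicked step by step by an $\RU$ run yielding the same certificate. Your version merely makes the simulation explicit (synchronising $X$, $Y$ and $\lev$, and noting both procedures advance $\lev$ under the same emptiness condition) and adds a correct remark that the inclusion can be strict, which the paper's terser proof leaves implicit.
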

\begin{proof}
  At any iteration of $\unravel{\RU}$, the random choice can either include direct voters or not. If there are direct voters, $\unravel{\DRU}$ will thus have a subset of the choices of $\unravel{\RU}$ (and hence potential outcomes). If there are no direct voters at an iteration, the potential outcomes from this step are the same for $\unravel{\RU}$ and $\unravel{\DRU}$. Thus, all certificates of $\unravel{\DRU}$ will also be certificates of $\unravel{\RU}$. 
\end{proof}

\subsection{Participation axioms }

In this subsection we study two properties of resolute unravelling procedures, focusing on a binary domain (with abstentions) $\Dom=\{0,1,*\}$. Both properties were proposed by \citet{kotsialou2018breadth} and they focus on a voter's incentive to participate in the election, either by voting directly or by delegating, in line with the classical participation axiom from social choice (see, e.g., \cite{Moulin1988}).

We assume that an agent $a$ expressing a direct vote for  $x \in \{0,1\}$ prefers $x$ over $1-x$ and over an abstention, denoted by $x >_a 1-x$ and $x >_a *$, respectively. Furthermore, we focus on resolute rules to directly compare the breadth-first and depth-first procedures to our own, as the participation axioms were originally  constructed to study these procedures.

First, we make a distinction between the unravelling procedures being \emph{resolute}, as our greedy procedures, or \emph{irresolute}, as our optimal procedures. By the former, a unique outcome is returned, and by the latter, possibly many tied outcomes are returned. Although throughout the paper we present \emph{all} outcomes of the greedy procedures with random voter selection $\RU$ and $\DRU$ (see, e.g., the outcomes displayed in Table~\ref{table:UnravellingdifferentOutcomes}), these procedures are resolute as defined in Algorithms~\ref{Alg:RU} and~\ref{Alg:DRU}.


\begin{definition}[Cast-Participation]\label{def:CastPart}
A resolute voting rule $r$ and a resolute unravelling procedure $\mathcal{U}$ satisfy \emph{cast-participation} if 
for all valid smart profiles $\Prof$ and agents $a \in \N$ such that $B_{a} \in \Dom\setminus\{*\}$ we have for all $B_{a}'\neq B_{a}$
  $$r(\mathcal{U}(\Prof)) \geq_a r(\mathcal{U}(\Prof_{-a}, B_{a}')) $$
  where $\Prof_{-a}$ is equal to $\Prof$ without $a$'s ballot. For randomised procedures we require the inequality to hold for any possible outcome of $\mathcal{U}$.
\end{definition}


Cast-participation implies that agents who vote directly have an incentive to do so, rather than to express any other ballot.  In order to prove if a pair of an unravelling procedure and an aggregation rule satisfies such a participation axiom, we need some further notation.
%
Let the set of voters \emph{influenced} by a voter $a$ in a profile $\Prof$ using a resolute deterministic unravelling procedure $\mathcal{U}$ be $I^\mathcal{U}(\Prof, a)=\{ b \mid a \in~S_{b}^k$ $\text{for } \mathcal{U}(\Prof)=X_\cert \text{ with } \cert \in \C{\Prof} \text{ and } \cert_b=k
\}$.
%
Further,  let $I^\mathcal{U}_*(\Prof, a) = I^\mathcal{U}(\Prof,a) \cup \{c  \mid  c \in I^\mathcal{U}(\Prof,b) \wedge b \in I^\mathcal{U}(\Prof,a)\} \cup \dots$ be the voters who are influenced by $a$ both \emph{directly} and \emph{indirectly}.  

Given our domain $\Dom=\{0,1,*\}$, we consider the following rules: the \emph{majority rule} (Maj) returns the alternative in the domain having more than $\nicefrac{n}{2}$ votes, and $*$ otherwise; the \emph{relative majority rule} (RMaj) returns the plurality outcome in $\Dom \setminus \{*\}$, and if there is a tie it returns $*$. 
A voting rule $r$ on the domain $\{0,1,*\}^n$ satisfies \emph{monotonicity} if for any profile $X$, if $r(X)=x$ with $x\in\{0,1\}$ then $r(X_{+x})=x$, where $X_{+x}$ is obtained from $X$ by having one voter switch from either an initial vote of $1-x$ to $x$ or $*$, or from an initial vote of $*$ to $x$. 
Observe that both Maj and RMaj satisfy monotonicity. Due to this definition we can now show the following:\footnote{Note that Definition \ref{def:CastPart} slightly differs from the one given in previous work \cite{colley2020smart}, and thus Theorem~\ref{thm:cast} does not hold for $\RU$ or $\DRU$: a counterexample can be constructed exploiting the fact that an agent may prefer the outcome of one random iteration of the procedure to another.
}

\begin{theorem}\label{thm:cast}
Any monotonic rule $r$ with \textsc{Unravel($\mathbf{\#}$)} for $\# \in \{\mathbf{U}$, $\mathbf{DU}\}$ satisfies cast-participation for $ \Liqlang_*$ with domain $\Dom = \{0,1,*\}$.
\end{theorem}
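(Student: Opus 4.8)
The plan is to reduce the theorem to a single structural statement about how the greedy unravelling reacts to a change in one agent's ballot, and then to close the argument using the monotonicity of $r$. Fix a valid $\Liqlang_*$ profile $\Prof$, an agent $a$ with $B_a = x\in\{0,1\}$, and an arbitrary ballot $B_a'\neq B_a$; write $\Prof' := (\Prof_{-a},B_a')$, let $\mathcal{U} := \unravel{\#}$ for the fixed $\#\in\{\mathbf{U},\mathbf{DU}\}$, and put $X := \mathcal{U}(\Prof)$ and $X' := \mathcal{U}(\Prof')$. Recalling that $a$ ranks $x$ strictly above both $*$ and $1-x$, and (as is standard for a committed voter) ranks $*$ above $1-x$, it is enough to establish the following structural claim and then feed it into monotonicity.

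\emph{Structural claim.} $X$ is obtained from $X'$ by flipping some subset of coordinates from $1-x$ or from $*$ to $x$; equivalently, for every agent $b$ we have $X_b = X'_b$, or else $X'_b\in\{1-x,*\}$ and $X_b = x$. The reason this should hold is the particularly simple form of $\Liqlang_*$ unravellings under $\mathbf{U}$ and $\mathbf{DU}$: since every delegation function is the identity, an agent who is not a pure direct voter simply \emph{copies} the vote of its delegate at the lowest preference level whose delegate is already determined, and an agent reverts to its $*$-backup only after all of its delegations have turned out to lie on delegation cycles. The runs on $\Prof$ and on $\Prof'$ use identical ballots for every agent other than $a$, and in $\Prof$ the agent $a$ is a direct voter for $x$, so $a$'s vote is present from the first iteration. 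This has only two effects: other agents may become determined at earlier iterations, hence via the same or a lower preference level; and $a$ is removed from every delegation cycle it used to belong to, while no new cycle can appear since $a$ no longer delegates. Hence the only \emph{new} value that the run on $\Prof$ can feed into an agent, compared with the run on $\Prof'$, is $a$'s vote $x$: whenever an agent's vote differs between $X'$ and $X$, it is because its (possibly shortened) delegation chain now reaches $a$ --- directly, or through a cycle that has been broken by $a$'s becoming a direct voter --- so that it inherits $x$. In particular no agent can newly acquire the value $1-x$, and no agent who votes $x$ in $X'$ can stop doing so. The reasoning for $\mathbf{DU}$ is the same, the only change being the internal order in which votes are committed within a preference level.

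\emph{From the structural claim to cast-participation.} By the claim, $X$ is reachable from $X'$ by a chain of elementary moves, each sending a single voter from $1-x$ or from $*$ to $x$. It therefore suffices to check that one such move $Y\mapsto Y'$ never harms $a$, i.e.\ $r(Y')\geq_a r(Y)$, and then compose over the chain. If $r(Y)=x$ then $r(Y')=x$, because the move is one of the admissible ``towards $x$'' moves in the definition of monotonicity, and $x\geq_a x$. If $r(Y)=*$ then $r(Y')\neq 1-x$: otherwise the reverse move $Y'\mapsto Y$, which sends a voter from $x$ to $*$ and is an admissible move towards $1-x$, would force $r(Y)=1-x$ by monotonicity, a contradiction; hence $r(Y')\in\{x,*\}$, and both $x\geq_a *$ and $*\geq_a *$. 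If $r(Y)=1-x$ there is nothing to prove, since $1-x$ is $a$'s least preferred alternative. Composing, $r(X)\geq_a r(X')$, which is the inequality required by Definition~\ref{def:CastPart}.

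\emph{The main obstacle.} The substance of the proof is the structural claim, and its delicacy comes entirely from the reset-to-level-$1$ behaviour of Algorithm~\ref{Alg:Gen}: making $a$ a direct voter both speeds up the propagation of votes and can break cycles passing through $a$, and I must rule out that this ever lets a $1-x$ value reach an agent that was previously stuck at $*$ or was already voting $x$. I plan to handle this by induction on the order in which agents become determined in the run on $\Prof$, carrying the invariant that every determined agent either receives the vote it gets in the run on $\Prof'$ or receives $x$, and showing that an agent which is ``unblocked'' in the run on $\Prof$ but not in the run on $\Prof'$ must sit on a delegation path ending at $a$, whose committed vote is $x$. Making the bookkeeping of ``determined by iteration $t$'' and ``used preference level'' precise and monotone across the two runs is where the care is needed; everything else is routine.
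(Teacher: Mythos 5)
Your proposal takes essentially the same route as the paper: your structural claim is precisely the paper's key step (stated there via the influence set $I^{\#}_{*}(\Prof,a)$ — every transitively influenced agent votes $x$ in the unravelling of $\Prof$, and no other agent's vote is affected by changing $a$'s ballot, so $X$ differs from $X'$ only by coordinates switched to $x$), and the conclusion is then drawn from monotonicity of $r$, which the paper does in a single contradiction step where you decompose the switch into single-voter moves. The additional care you devote to the reset-to-level-$1$ timing of Algorithm~\ref{Alg:Gen} and to the case $r(Y)=*$ only makes explicit what the paper's proof leaves implicit; it is the same argument.
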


\begin{proof}
Without loss of generality, assume that for agent $a\in\N$ we have $B_a = (1)$. To falsify cast-participation, we need to find a profile $\Prof$ with $r(\textsc{Unravel}(\mathbf{\#})(\Prof))=0$ or $*$, and a ballot $B'_a$ such that $r(\textsc{Unravel}(\mathbf{\#})(\Prof_{-a},B_a'))=1$, for $\# \in \{\mathbf{U}$, $\mathbf{DU}\}$.

First, observe that all voters $c\in I^\#_*(a,\Prof)$ vote for 1 in $\Prof$, since the language is restricted to single-agent delegations. 
Now, if $B_a'=0$ or $*$ (or they delegate to some agent who is assigned these votes), then by monotonicity the result of $\Prof'$ will keep being 0 or $*$. 
Moreover, all $c\not\in I^\#_*(a,\Prof)$ do not change their vote from $\Prof$ to $\Prof'$, no matter if $B_a'$ is a direct vote or a possibly ranked delegation. 
Therefore, the final votes of $\Prof'$ can be obtained from those of $\Prof$ by switching $1$s to $0$s or $*$s.
Thus, this contradicts the monotonicity assumption of rule $r$.
\end{proof} 

\begin{remark}
Theorem~\ref{thm:cast} does not hold for $\Blang$ ballots. Consider the counterexample with agents $\N=\{a, b, c\}$ having ballots $B_a= (1)$, $B_b=((\{a\},\neg a)>0)$ and $B_c=((\{a\},\neg a)>0)$. Each of our greedy unravellings would return the outcome $Maj(1,0,0)=0$. However, if $B_a'=0$ then $Maj(0,1,1)=1$. Thus, agent $a$ strictly prefers to submit a ballot that is not a direct vote for their preferred alternative. 
\end{remark}

We now focus on the incentive that a voter has to receive and accept delegations; namely, what has been introduced as the guru-participation property. 

\begin{definition}[Guru-participation]
A voting rule $r$ and a resolute unravelling procedure $\mathcal{U}$ satisfy \emph{guru-participation} if and only if for all profiles $\Prof$ and all agents $a \in\N$ such that $B_a=(x)$ with $x \in \Dom\setminus\{*\}$ we have that for any $b \in I^\#_*(\Prof, a)$
  $$r(\mathcal{U}(\Prof)) \geq _a  r(\mathcal{U}(\Prof_{-b},(*)))$$
where $\Prof_{-b}$ is $\Prof$ without $b$'s ballot. For randomised procedures we require the inequality to hold for any possible outcome of $\mathcal{U}$.
\end{definition}

All four greedy unravellings do not satisfy this property for the rule RMaj:

\begin{theorem}
$\text{\it RMaj}$ and $\textsc{Unravel}(\mathbf{\#})$ for $\mathbf{\#} \in \{\mathbf{U}$, $\mathbf{DU}$, $\mathbf{RU}$, $\mathbf{DRU}\}$ do not satisfy guru-participation for $\Liqlang_*$ with domain $\Dom = \{0,1,*\}$.
\end{theorem}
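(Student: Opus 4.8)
The plan is to exhibit a single profile of $\Liqlang_*$ ballots on which $\textit{RMaj}$ together with each of the four greedy procedures violates guru-participation. By Proposition~\ref{prop:liquiddiffer}, on $\liqlang{1}{*}$ profiles all four greedy procedures agree, so it is natural to first look for a counterexample among $\liqlang{1}{*}$ ballots; this way one construction handles all four cases simultaneously (and $\minsum$ too, although the statement only concerns the greedy procedures). We need an agent $a$ with direct ballot $B_a=(x)$, $x\in\{0,1\}$, and a voter $b$ that $a$ (directly or indirectly) influences, i.e.\ $b\in I^{\#}_*(\Prof,a)$, so that removing $b$'s ballot and replacing it with $(*)$ changes the $\textit{RMaj}$ outcome away from $a$'s preferred alternative.

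The key idea is that under $\textit{RMaj}$ an abstention is \emph{not} neutral: when many agents delegate (directly or transitively) to $a$, these agents all copy $a$'s vote, and $a$'s ``guru power'' is what keeps $x$ ahead in the plurality count. If we delete one of the agents $b$ that delegates to $a$, then $b$'s copied vote for $x$ disappears, and if the remaining direct voters for $1-x$ form a plurality — or even just tie the $x$-block — the outcome flips to $1-x$ or to $*$, either of which is strictly worse for $a$. Concretely I would take something like $\N=\{a,b,c\}$ with $B_a=(1)$, $B_b=((\{a\},a)>*)$, and $B_c=(0)$: here all greedy unravellings give the direct-vote profile $(1,1,0)$, so $\textit{RMaj}=1$; but $\Prof_{-b}$ with $b$ replaced by $(*)$ yields $(1,*,0)$, and now $\textit{RMaj}$ sees a tie between $0$ and $1$, returning $*$, which $a$ strictly disprefers to $1$. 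One must check that $b\in I^{\#}_*(\Prof,a)$ holds (it does, since $a\in S_b^{1}$ and $\cert_b=1$ in the unravelling of $\Prof$), and that the ballots are valid $\liqlang{1}{*}$ ballots (they are: single-agent delegations, no self-delegation, backup vote $*$).

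The steps, in order: (1) state the profile $\Prof$ and verify it is a valid $\Liqlang_*$ profile; (2) run each of $\textsc{Unravel}(\mathbf{U}),\textsc{Unravel}(\mathbf{DU}),\textsc{Unravel}(\mathbf{RU}),\textsc{Unravel}(\mathbf{DRU})$ on $\Prof$ and confirm they all output $(1,1,0)$ with certificate $(1,1,1)$, hence $\textit{RMaj}=1$; (3) verify $b\in I^{\#}_*(\Prof,a)$; (4) form $(\Prof_{-b},(*))$, run the four procedures again — they output $(1,*,0)$ — and compute $\textit{RMaj}(1,*,0)=*$; (5) conclude $r(\mathcal{U}(\Prof))=1>_a *=r(\mathcal{U}(\Prof_{-b},(*)))$, contradicting guru-participation. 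For the randomised procedures $\mathbf{RU}$ and $\mathbf{DRU}$, note that the outcome is in fact deterministic here (there are no cycles), so the ``for any possible outcome'' clause is trivially satisfied.

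I expect no real obstacle — the construction is small and the verification routine. The only subtlety is making sure the chosen profile genuinely lies in the restricted language $\Liqlang_*$ (so the final backup must be exactly $*$, not an arbitrary direct vote) and that the influenced-voter condition $b\in I^{\#}_*(\Prof,a)$ is met, since guru-participation only constrains the behaviour of gurus with respect to agents they actually influence; picking $b$'s delegation to point directly at $a$ makes this immediate. If one wanted the example to also witness failure for a larger electorate or to separate the procedures, one could pad with extra direct voters, but a three-agent instance suffices.
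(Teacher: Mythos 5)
There is a genuine gap: you have the guru-participation inequality backwards, and as a result your three-agent profile witnesses the axiom being \emph{satisfied}, not violated. The definition requires $r(\mathcal{U}(\Prof)) \geq_a r(\mathcal{U}(\Prof_{-b},(*)))$ for every influenced voter $b$; a counterexample must therefore exhibit a profile where the guru $a$ is \emph{strictly better off} when $b$'s ballot is replaced by an abstention, i.e.\ $r(\mathcal{U}(\Prof)) <_a r(\mathcal{U}(\Prof_{-b},(*)))$. In your instance ($B_a=(1)$, $B_b=((\{a\},id)>*)$, $B_c=(0)$) you compute $r(\mathcal{U}(\Prof))=1$ and $r(\mathcal{U}(\Prof_{-b},(*)))=*$, and since $1>_a *$ the required inequality holds; nothing is contradicted. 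Your guiding intuition (``$a$'s guru power keeps $x$ ahead, so deleting $b$ hurts $a$'') is precisely the situation the axiom permits; the axiom fails only when \emph{receiving} the delegation hurts the guru.

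A correct counterexample needs a mechanism by which $b$'s participation drags the outcome away from $a$'s preferred alternative, and this cannot happen in a cycle-free single-delegation instance where everyone influenced by $a$ simply copies $a$'s vote. The paper's construction uses a delegation cycle among $b,c,d$ (each with a second-choice delegation, two of them pointing to a $0$-voter $f$, and $b$'s second choice pointing to $a$): with $b$ present, the cycle is broken at the second preference level and $c,d$ end up voting $0$, producing a tie ($\text{RMaj}=*$) or even a $0$ outcome under the randomised procedures, whereas with $b$ replaced by $(*)$ the cycle members all abstain and $\text{RMaj}=1$. So $a$ strictly prefers the profile without $b$, violating guru-participation for all four update rules (checking every possible run of $\RU$ and $\DRU$). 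To repair your proof you would need to build a profile of this kind rather than the one proposed; note also that such a profile cannot lie in $\liqlang{1}{*}$, since the harmful effect relies on backup delegations being available before the final abstention.
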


\begin{proof}
Consider a smart profile $\Prof$, as shown on the left hand-side of Table~\ref{table:notguru}, 
and profile  $\Prof'=(\Prof_{-b}, (*))$ obtained from $\Prof$ by switching $b$'s vote to $B_b' = (*)$. The outcomes of the four procedures are shown in the right hand-side of Table~\ref{table:notguru}.
\begin{table}[h]
\centering
\resizebox{0.4\textwidth}{!}{
  \begin{tabular}{llll} 
 \hline\noalign{\smallskip}
&$B_x^1$&$B_x^2$&$B_x^3$\\
 \noalign{\smallskip}\hline\noalign{\smallskip} 
$a$&$1$&-&-\\
$b$&$(\{c\}, id)$&$(\{a\},id)$&$*$\\
$c$&$(\{d\}, id)$&$(\{f\},id)$&$*$\\
$d$&$(\{b\}, id)$&$(\{f\},id)$&$*$\\
$e$&$1$&-&-\\
$f$&$0$&-&-\\
\noalign{\smallskip}\hline
 \end{tabular}}
 \quad
 \resizebox{0.55\textwidth}{!}{
  \begin{tabular}{ccc}  
   \hline\noalign{\smallskip}
    $\#$& $\Prof$& $\Prof'$\\
   \noalign{\medskip}\hline\noalign{\medskip}
    $\mathbf{U}$/    &$X^1=(1,1,0,0,1,0)$  &$X^{2}=(1,*, *, *, 1, 0)$  \\
     $\mathbf{DU}$&&\\
      \noalign{\medskip}\hline\noalign{\medskip}
    $\mathbf{RU}$/  &$X^3={(1,1,1,1,1,0)}$  & $X^{2}=(1,*, *, *, 1, 0)$  \\
    $\mathbf{DRU}$  &$X^4=(1,0,0,0,1,0)$  &                         \\
    &$X^5=(1,0,0,0,1,0)$  &                         \\ 
   \noalign{\medskip}\hline
  \end{tabular}}
  \caption{A profile $\Prof$ (on the left) and the outcomes of $\unravel{\U},\unravel{\DU},\unravel{\RU}$ and $\unravel{\DRU}$ on the profiles $\Prof$ and $\Prof'$  (on the right), where $\Prof'=(\Prof_{-b}, (*))$ is obtained from $\Prof$ by switching $b$'s vote to $B_b' = (*)$. }
  \label{table:notguru}
\end{table}

%
By applying $\unravel{\mathbf{U}}$ and $\unravel{\mathbf{DU}}$, agent $a$ prefers the outcome of $\Prof'$ to that of $\Prof$, since $\text{RMaj}(X^1)= *$ and $\text{RMaj}(X^2)= 1$. For $\unravel{\mathbf{RU}}$ and $\unravel{\mathbf{DRU}}$, the outcome on $\Prof'$ is $\text{RMaj}(X^2)=1$. However, the outcome on $\Prof$ can be either $\text{RMaj}(X^4)=\text{RMaj}(X^5)=0$ or $\text{RMaj}(X^3)=1$. Hence, when the random choice of $\RU$ or $\DRU$ leads to $X^4$ or $X^5$, agent $a$ strictly prefers the outcome $\text{RMaj}(X^2)$ to the outcome $\text{RMaj}(X^4)$ and $\text{RMaj}(X^5)$. Therefore, the inequality does not hold for any  outcome of the randomised procedures. 
\end{proof}


\subsection{Pareto dominance and optimality}\label{subsec:pareto}
We now focus on comparing the  outcomes of our unravelling procedures in terms of \emph{Pareto dominance} and \emph{Pareto optimality}. We show that none of our procedures always Pareto dominates another. However, we prove that all outcomes of $\minsum$ are Pareto optimal with respect to all outcomes with consistent certificates.

A certificate $\cert$ \emph{Pareto dominates} another certificate $\cert'$ if for every $i\in \N$, we have that $\cert_i\leq \cert'_i$, and there exists a $j\in\N$ such that $\cert_j < \cert'_j$. We say that the unravelling procedure $\mathcal{U}$ Pareto dominates another unravelling procedure $\mathcal{U}'$ when for any valid profile $\Prof$, all (possible) certificates $\cert$ obtained from  $\mathcal{U}(\Prof)$ Pareto dominate all the (possible) certificates $\cert$ obtained from  $\mathcal{U}'(\Prof)$. Note that the possibility of multiple certificates arises not only for irresolute procedures but also from different executions of the random procedures $\unravel{\RU}$ and $\unravel{\DRU}$.

\begin{example}\label{ex:Paretodom}
    Consider the example given in Table~\ref{table:UnravellingdifferentOutcomes}. The certificate of the outcome of $\unravel{\U}$ is $\cert^\U=(3,3,3,2)$, and that of the outcome of $\unravel{\DU}$ is $\cert^\DU=(1,3,3,2)$. Thus,  $\unravel{\DU}$ Pareto dominates $\unravel{\U}$, since $\cert^\DU$ Pareto dominates $\cert^\U$, given that each entry of $\cert^\DU$ is less than or equal to the corresponding entry in $\cert^\U$. 
Moreover, since there is an outcome of $\unravel{\RU}$ with certificate $\cert^\RU=(3,1,1,2)$, neither $\cert^\DU$ Pareto dominates $\cert^\RU$ (as $\cert^\DU_a< \cert^\RU_a$ for the first agent $a$) nor vice-versa (as $\cert_b^\DU>  \cert_b^\RU$ for the second agent $b$).
\end{example}


When comparing our greedy procedures, one might think that $\unravel{\DRU}$ should always be chosen, given that it has both direct vote priority and random voter selection. However, the following example shows a profile where $\unravel{\DU}$, $\unravel{\RU}$ and $\unravel{\DRU}$ do not Pareto dominate $\unravel{\U}$, and thus, they do not Pareto dominate $\unravel{\U}$ in general.

\begin{example}\label{ex:UnotParetodombygreedy}
    Take agents $\N=\{a,b,c,d,e,f\}$, whose ballots are shown in Table~\ref{table:exampleUnotPD}.
On this profile, $\unravel{\U}$ gives the outcome $X_\cert=(1,1,1,1,1,0)$, where $\cert=(1,1,1,1,2,2)$ and  $\unravel{\DU}$ gives $X_{\cert'}=(0,0,0,1,0,0)$, with certificate $\cert'=(3,3,3,1,1,2)$. Thus, $\cert$ does not Pareto dominate $\cert'$ as agent $e$ has that $\cert_{e} < \cert'_{e}$. It is also not the case that $\cert'$ Pareto dominates $\cert$ as for some agents, i.e., agent $a$, we have that $\cert_{a}>  \cert'_{a}$. Thus $\unravel{\DU}$ does not Pareto dominate $\unravel{\U}$ or vice-versa.

Furthermore, a possible outcome of $\unravel{\RU}$  is $X_{\cert'' }=(0,0,0,1,0,0)$ where $\cert'' =(3,1,1,1,1,2)$---the random choices picking first $f$ and then $a$. Again, $\cert''$ does not Pareto dominate $\cert$, as $\cert_{a}>  \cert''_{a}$. Therefore, $\unravel{\RU}$ does not Pareto dominate $\unravel{\U}$, and as $X_{\cert'' }$ is also an outcome of $\unravel{\DRU}$, $\unravel{\DRU}$ does not Pareto dominate $\unravel{\U}$ as well. 
\end{example}

    \begin{table}[h]
\centering
 \begin{tabular}{llll} 
 \hline\noalign{\smallskip}
&$B_x^1$&$B_x^2$&$B_x^3$\\
 \noalign{\smallskip}\hline\noalign{\smallskip} 
$a$&$(\{b,e\}, b\vee e)$&$(\{c,e\},c\vee e)$&$0$\\
$b$ & $(\{c,e\}, c\vee e)$&$(\{a,e\},a\vee e)$&$0$\\
$c$&$(\{a,e\},a\vee e)$&$(\{b,e\},b\vee e)$&$0$\\
$d$&$1$&-&-\\
$e$ &$(\{f\},f)$&$(\{d\},d)$&$0$\\
$f$&$(\{e\},e)$&$0$&-\\
\noalign{\smallskip}\hline
\end{tabular}
  \caption{A profile $\Prof$ showing that $\unravel{\U}$ is not dominated in general by $\unravel{\DU}$, $\unravel{\RU}$ or $\unravel{\DRU}$.}
  \label{table:exampleUnotPD}
\end{table}

\begin{proposition}\label{prop:greedyparetodom}
None of the four greedy unravelling procedures $\unravel{\#}$ for $\#\in\{\U,\DU,\RU,\DRU\}$ Pareto dominates another greedy procedure. 
\end{proposition}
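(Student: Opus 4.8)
The statement ``$\mathcal{U}$ Pareto dominates $\mathcal{U}'$'' is universally quantified over profiles \emph{and} over all certificates each procedure may output: it asserts that for every valid profile $\Prof$, every certificate produced by $\mathcal{U}$ on $\Prof$ Pareto dominates every certificate produced by $\mathcal{U}'$ on $\Prof$. Hence, to refute it for a given ordered pair, it is enough to exhibit one valid profile $\Prof$, one certificate of $\mathcal{U}$ on $\Prof$, and one certificate of $\mathcal{U}'$ on $\Prof$ such that the former does \emph{not} Pareto dominate the latter (Pareto-incomparable certificates, or coinciding ones, already suffice). There are twelve ordered pairs of distinct procedures among $\{\unravel{\U},\unravel{\DU},\unravel{\RU},\unravel{\DRU}\}$, and the plan is to cover all of them, mostly by reusing profiles already built in this section.

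First I would dispose of the two pairs relating $\unravel{\RU}$ and $\unravel{\DRU}$ abstractly. By Proposition~\ref{prop:DRUsubsetRU} we have $\Crest{\unravel{\DRU}}{\Prof}\subseteq\Crest{\unravel{\RU}}{\Prof}$, and by Proposition~\ref{prop:unraveltemrination} both procedures terminate on any valid profile, so $\Crest{\unravel{\DRU}}{\Prof}$ is non-empty; pick any $\cert$ in it. Then $\cert$ is simultaneously a certificate of $\unravel{\RU}$ and of $\unravel{\DRU}$ on $\Prof$. Since Pareto dominance between certificates requires a strict inequality in at least one coordinate, no certificate Pareto dominates itself; thus $\cert$ (as an $\unravel{\RU}$-certificate) fails to Pareto dominate $\cert$ (as an $\unravel{\DRU}$-certificate), and symmetrically. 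This settles both ``$\unravel{\RU}$ does not Pareto dominate $\unravel{\DRU}$'' and its converse.

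For the remaining ten pairs I would reuse two concrete profiles. The profile of Table~\ref{table:exampleUnotPD} (Example~\ref{ex:UnotParetodombygreedy}) handles all six ordered pairs between $\unravel{\U}$ and each of $\unravel{\DU}$, $\unravel{\RU}$, $\unravel{\DRU}$: there $\unravel{\U}$ has the unique certificate $(1,1,1,1,2,2)$, $\unravel{\DU}$ the unique certificate $(3,3,3,1,1,2)$, and both $\unravel{\RU}$ and $\unravel{\DRU}$ can output $(3,1,1,1,1,2)$; each of these is Pareto-incomparable with $(1,1,1,1,2,2)$ (they disagree in opposite directions at agents $a$ and $e$), which immediately refutes domination in both directions. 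The profile of Table~\ref{table:UnravellingdifferentOutcomes} (Proposition~\ref{prop:unravelldifferent}, Example~\ref{ex:Paretodom}) then handles the four ordered pairs between $\unravel{\DU}$ and $\unravel{\RU}$, $\unravel{\DRU}$: $\unravel{\DU}$ has certificate $(1,3,3,2)$, while $\unravel{\RU}$ admits $(3,1,1,2)$ and $\unravel{\DRU}$ admits $(2,1,3,2)$, and each is Pareto-incomparable with $(1,3,3,2)$ (disagreeing at agent $a$ versus agent $b$). Having covered all twelve ordered pairs, the proposition follows.

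I do not anticipate a real obstacle: the argument is entirely a matter of combining Propositions~\ref{prop:DRUsubsetRU} and~\ref{prop:unraveltemrination} with two already-verified examples. The only care required is bookkeeping the quantifiers correctly -- on each profile the failure of domination must be witnessed by choosing the offending certificates on the appropriate sides (a \emph{single} pair of incomparable certificates suffices, regardless of whether the procedures involved are resolute or, as with $\unravel{\RU}$ and $\unravel{\DRU}$, may output several certificates across executions) -- and checking that the cited certificates are genuinely reachable by the respective algorithms on those profiles, which was done in Examples~\ref{ex:Paretodom} and~\ref{ex:UnotParetodombygreedy}.
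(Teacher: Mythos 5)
Your proposal is correct and follows essentially the same route as the paper: it settles the $\unravel{\RU}$/$\unravel{\DRU}$ pair via Proposition~\ref{prop:DRUsubsetRU} and refutes the remaining pairs with the concrete certificates already verified for the profiles in Table~\ref{table:exampleUnotPD} and Table~\ref{table:UnravellingdifferentOutcomes}. The only (harmless) deviation is bookkeeping: you use the Pareto-incomparability in Example~\ref{ex:UnotParetodombygreedy} to kill both directions involving $\unravel{\U}$, whereas the paper handles the direction ``$\unravel{\U}$ does not dominate the others'' via Table~\ref{table:UnravellingdifferentOutcomes}; both are valid under the paper's definition of procedure-level Pareto dominance.
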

\begin{proof}
Example~\ref{ex:UnotParetodombygreedy} shows that $\unravel{\U}$ is not Pareto dominated by $\unravel{\#}$ for $\#\in\{\DU,\RU,\DRU\}$. Then,  in Table~\ref{table:UnravellingdifferentOutcomes} the outcome of $\unravel{\U}$ is Pareto dominated by the outcomes of $\unravel{\#}$ for $\#\in\{\DU,\RU,\DRU\}$ and therefore, $\unravel{\U}$ does not Pareto dominate the other greedy procedures. 

From Table~\ref{table:UnravellingdifferentOutcomes}, we can also conclude that $\unravel{\DRU}$ does not Pareto dominate $\unravel{\DU}$, and vice-versa. The outcome of $\unravel{\DRU}$ with certificate $\cert=(2,1,3,2)$ does not Pareto dominate the outcome of $\unravel{\DU}$, having certificate $\cert'=(1,3,3,2)$, as $\cert_a>\cert_a'$. For the other direction, as $\cert'_b>\cert_b$ $\unravel{\DU}$ does not always Pareto dominate $\unravel{\DRU}$. Since the outcome with $\cert=(2,1,3,2)$ is also possible for $\unravel{\RU}$, $\unravel{\RU}$ is not guaranteed to Pareto dominate $\unravel{\DU}$ and vice-versa. 

Finally, $\unravel{\DRU}$ and $\unravel{\RU}$ do not Pareto dominate one another as the certificates of the former are a subset of the latter (Proposition~\ref{prop:DRUsubsetRU}).
\end{proof}

For irresolute procedures, by checking whether for any profile an unravelling procedure always has an outcome whose certificate Pareto dominates the certificates of all the outcomes of another procedure, we find the following negative results:
\begin{itemize}
    \item 
    The certificates $\cert=(4, 1, \dots , 1)$ of $\minsum$ and  $\cert'=(1,2,\dots, 2)$ of $\minmax$ from Example~\ref{ex:MinMaxbenefit} shot that neither $\cert$ Pareto dominates $\cert'$ nor vice-versa. Therefore, neither $\minmax$ nor $\minsum$ dominates the other.
    
    \item From Table~\ref{table:UnravellingdifferentOutcomes}, we see that $\unravel{\U}$ does not Pareto dominate $\minsum$ or $\minmax$ in general. From Example~\ref{ex:MinMaxbenefit} we see that $\minsum$ does not Pareto dominate $\unravel{\U}$ in general. 

\end{itemize}

Finally, we introduce the notion of \emph{Pareto optimality}, which defines all those consistent certificates that are not Pareto dominated by any other consistent certificate.


\begin{definition}\label{def:paretoopt}
A certificate $\cert$ for $\Prof$ is \emph{Pareto optimal} for the consistent certificates $\mathcal{C}(\Prof)$ if there exists no $\cert'\in\mathcal{C}(\Prof)$ with $\cert'\neq \cert$, such that $\cert'$ Pareto dominates $\cert$. 
\end{definition}
The following proposition corresponds to the well-known fact that maximising the average of a vector leads to a Pareto optimal vector, but not vice-versa.


\begin{proposition}\label{prop:minsumparetoopt}
The certificate $\cert$ for any outcome $X_\cert \in \minsum(\Prof)$ is Pareto optimal for $\mathcal{C}(\Prof)$, for any profile $\Prof$. 
\end{proposition}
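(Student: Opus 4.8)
The plan is to prove the contrapositive via a direct argument: suppose $\cert$ is a consistent certificate achieving the minimum $\rank$ over $\C{\Prof}$, and suppose for contradiction that some $\cert' \in \C{\Prof}$ with $\cert' \neq \cert$ Pareto dominates $\cert$. First I would unpack the definition of Pareto dominance: $\cert'_i \leq \cert_i$ for every $i \in \N$, and there is at least one $j \in \N$ with $\cert'_j < \cert_j$. Summing these inequalities over all agents gives $\rank(\cert') = \sum_{i\in\N}\cert'_i < \sum_{i\in\N}\cert_i = \rank(\cert)$, where the strict inequality follows because the inequality is strict in the $j$-th coordinate and weak everywhere else.

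This immediately contradicts the assumption that $\cert$ minimises $\rank$ over $\C{\Prof}$, because $\cert'$ is itself a consistent certificate (i.e.\ $\cert' \in \C{\Prof}$) with strictly smaller rank. Hence no such $\cert'$ exists, and $\cert$ is Pareto optimal for $\C{\Prof}$. Since every outcome $X_\cert \in \minsum(\Prof)$ is, by Definition~\ref{def:opt}, associated with a certificate $\cert \in \argmin_{\cert \in \C{\Prof}} \rank(\cert)$, this establishes the claim for all such outcomes and all profiles $\Prof$.

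The argument is essentially immediate once the definitions are laid side by side, so there is no real obstacle here; the only thing to be slightly careful about is that Pareto dominance as defined requires both coordinatewise $\leq$ and at least one strict $<$, which is exactly what is needed to conclude a strict inequality between the two sums (a weak-only domination would not suffice, but it also would not qualify as Pareto domination under Definition~\ref{def:paretoopt}). I would also remark, matching the sentence preceding the proposition, that the converse fails: a Pareto optimal certificate need not minimise $\rank$, as witnessed for instance by the $\minmax$ certificates in Example~\ref{ex:MinMaxbenefit}, which are Pareto optimal (an easy check) but have strictly larger rank than the $\minsum$ certificate on that profile.
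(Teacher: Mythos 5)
Your proof is correct and follows essentially the same argument as the paper: assume a consistent certificate Pareto dominates the $\minsum$ certificate, sum the coordinatewise inequalities to get a strictly smaller rank, and contradict minimality. The additional remark about the failure of the converse matches the paper's own comment after the proposition.
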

\begin{proof}
Take an arbitrary valid smart profile $\Prof$, and arbitrary $X_\cert\in\minsum(\Prof)$. For the sake of a contradiction, assume that $\cert$ is not Pareto optimal for $\mathcal{C}(\Prof)$. Hence, there exists a $\cert'\in \mathcal{C}(\Prof)\setminus \{\cert\}$ such that $\cert'$ Pareto dominates $\cert$. Therefore, for all $i\in \N$, we get $\cert'_i\leq \cert_i$. We thus have that $\sum\limits_{i\in\N}\cert'_i\leq \sum\limits_{i\in\N} \cert_i$. 
Furthermore, since $\cert\neq \cert'$ and $\cert'$ Pareto dominates $\cert$, there exists an agent $j \in \N $ such that $\cert_j'< \cert_j$, and thus $\sum\limits_{i\in\N}\cert'_i< \sum\limits_{i\in\N} \cert_i$. Since $\sum\limits_{i\in\N} \cert_i$ is not minimal, we have $X_\cert\notin\minsum(\Prof)$, and thus we have reached a contradiction. 
\end{proof}
Note that the opposite direction does not hold: in Example~\ref{ex:MinMaxbenefit}, the $\minsum$ procedure does not return the Pareto optimal certificate $\cert=(1,2,\dots,2)$. Moreover, Proposition~\ref{prop:minsumparetoopt}  does not hold for the other unravelling procedures, as there exist outcomes of each of them whose certificates are Pareto dominated by some other consistent certificate, as seen in previous examples. 

\subsection{Discussion on unravelling procedures}

In this paper we have provided six unravelling procedures and we have given results that should guide a user of this model as to which procedure to choose. Here we provide a summary and a discussion of these results. 

The main distinction between the optimal and greedy procedures is that finding an outcome with a greedy procedure is a tractable problem, whereas even checking if an outcome of an optimal procedure exists under a given bound on the optimised score is an $\textsc{NP}$-complete problem for the general language $\Blang$ (where the delegations are contingent formulas expressed in complete DNF). Although we acknowledge that the improvements in performance of \textsc{Sat}-solvers make the intractability of the problems $\minmaxbound$ and $\minsumbound$ less concerning, the associated search problem of computing the outcomes of the unravelling remains in principle even harder. Hence, the greedy procedures are desirable when tractability is key.

Proposition~\ref{prop:minsumparetoopt} shows that the certificates of the outcomes of $\minsum$ are Pareto optimal and thus are never dominated by outcomes found by a consistent certificate. In contrast, $\minmax$ cannot make this guarantee. Although $\minmax$ may return outcomes that are not Pareto optimal, it provides more egalitarian outcomes.
In Example~\ref{ex:MinMaxbenefit}, the outcome with the lowest $\rank$ relies on the fourth preference of agent~$a$ being chosen: while it is still a trusted delegate, the agent may be less confident in them than in the three previous delegations. 

Furthermore, as $\minsum$ and $\minmax$ are irresolute, they would have to be paired with a tie-breaking mechanism to select a single outcome from the possibly many that they produce. In contrast, the greedy procedures are not only, in general, quicker than the optimal procedures, but they are also resolute.

With profiles of $\liqlang{1}{*}$ ballots, $\minsum$  and the greedy procedures return the same outcome vector, so they can be used interchangeably. For profiles of $\Liqlang$ ballots, there should be a preference for $\minsum$ or $\minmax$, since an outcome can be found in polynomial time (Theorems~\ref{thm:optcomplex} and~\ref{thm:minmaxliquid}). The choice between these two procedures should then be determined by whether the situation would benefit more from Pareto optimality or egalitarian properties. However, these procedures do rely on tie-breaking, which could bring up issues of fairness in the certificates. 

As the participation axioms do not differentiate the greedy procedures, the properties that they are defined on (i.e., direct vote priority and random voter selection) are the clearest way to compare them. Random voter selection should be used when a lottery is acceptable and it should be avoided when it would be unfair to give a worse preference level to just some agents.
Direct vote priority should be used when a direct vote from an agent is preferred to a delegation, perhaps in situations that could benefit from a level of expertise on the issue, and to ensure shorter delegation chains.

 Given the above discussion, one may think that $\unravel{\DRU}$ gives the best outcomes overall. However, we have proved that no greedy procedure is guaranteed to Pareto dominate another (Proposition~\ref{prop:greedyparetodom}). Thus, the notion of Pareto dominance does not distinguish between greedy procedures.

In summary, greedy procedures should be preferred when finding outcomes needs to be tractable, except in the special case of $\Liqlang$ ballots for which this problem is polynomial for all proposed rules. The $\minsum$ procedure should be used when outcomes need to be Pareto optimal and $\minmax$ should be used when an egalitarian approach is required. When using the greedy procedures, the choice between them should be determined by whether the situation asks for either of the random voter selection and direct vote priority properties.

\section{Conclusion}\label{sec:conc}

We proposed a model of multi-agent ranked delegations in voting, which generalises liquid democracy in two aspects. The first is that delegations are more expressive, as they can involve many agents instead of a single one, who in turn determine their vote. We introduced a general language named $\Blang$, in which delegations are expressed as contingent propositional formulas in complete DNF. We emphasise that although agents may not want to use the full expressivity of the language, many natural delegations types are captured by it: for example, both liquid democracy delegations and delegations using threshold rules can be expressed in $\Blang$. 
Our second generalisation is the possibility of ranked delegations: as transitive delegations can lead to cycles among the agents' most preferred delegates, the linear order of trusted delegations given by the agents can be used to break these cycles. 

Our main contribution is the definition and study of six unravelling procedures (two optimal and four greedy ones), which take a profile of smart ballots and return a standard voting profile. We show that all of the procedures can give different certificates and outcomes (Proposition~\ref{prop:unravelldifferent}), and that they differ from the breadth-first and depth-first procedures of \citet{kotsialou2018breadth} (Remark~\ref{remark:breadth+depth}).
Moreover, we show that all of the procedures, except $\minmax$, coincide on $\liqlang{1}{}$ ballots, i.e., liquid democracy ballots with a single delegation per agent. The certificates of the outcomes of $\minsum$ are Pareto optimal with respect to the outcomes found from any consistent certificate (Proposition~\ref{prop:minsumparetoopt}), while greedy procedures do not Pareto dominate one another (Proposition~\ref{prop:greedyparetodom}).
Our main results are that deciding if there exists an outcome of the unravellings $\minsum$ and $\minmax$ bounded by some constant 
are \textsc{NP}-complete problems (Theorems~\ref{thm:minsumNPcomp} and~\ref{thm:minmaxNP}) 
over the general language $\Blang$, but they become tractable
when ballots are restricted to $\Liqlang$, the language of ranked liquid democracy (Theorems~\ref{thm:optcomplex} and~\ref{thm:minmaxliquid}). 
Finally, we prove that our four greedy unravelling procedures always terminate (Proposition~\ref{prop:unraveltemrination}) and do so in a polynomial number of time steps for general $\Blang$ ballots (Proposition~\ref{prop:polynomialterminationunravel}).

\paragraph{Future work.} This paper provides a first analysis of six unravelling procedures in terms of their computational (and some axiomatic) properties. 
A game-theoretic analysis of our procedures is yet to be undertaken, by focusing, e.g., on various notions of manipulative actions. 
Moreover, we have shown that finding a solution for our optimal unravellings is a tractable problem when ballots are restricted to ranked single agent delegations. Other tractable cases could be found, e.g., by restricting delegation functions or by limiting the number of delegations---leading to a study of the parameterised complexity of optimal unravellings. 
Finally, we focused only on independent issues: extending our model to account for the agents' rationality with respect to interconnected issues (in line with the work of \citet{christoff2017binary} and, to a lesser extent, \citet{BrillTalmonIJCAI2018}) would be a natural avenue of future research.

\section*{Acknowledgments}
The authors acknowledge the support of the ANR JCJC
project SCONE (ANR 18-CE23-0009-01).

%


%
%


%
%

\printbibliography

\end{document}